\newcommand{\alglinelabel}{%
  \addtocounter{ALC@line}{-1}
  \refstepcounter{ALC@line}
  \label
}
\definecolor{yxc}{RGB}{255,0,0}
\definecolor{yjc}{RGB}{125,0,0}
\definecolor{ytw}{RGB}{255,0,127}
\definecolor{gen}{RGB}{0,0,200}
\newcommand{\blue}[1]{\textcolor{blue}{#1}}
\newcommand{\byes}{{\blue{{\bf YES}}}}
\newcommand{\bno}{{\blue{{\bf NONE}}}}
\newcommand{\neutralize}[1]{\expandafter\let\csname c@#1\endcsname\count@}
\theoremstyle{plain}
\newtheorem{theorem}{Theorem}[section]
\newtheorem{prop}[theorem]{Proposition}
\newtheorem{lem}[theorem]{Lemma}
\newtheorem{coro}[theorem]{Corollary}
\newtheorem{defi}[theorem]{Definition}
\newtheorem{ass}[theorem]{Assumption}
\newenvironment{manualass}[1]{%
  \manualassinner
}{\endmanualassinner}
\theoremstyle{remark}
\title{Escaping Saddle Points in Heterogeneous Federated Learning \\ via Distributed SGD with Communication Compression }
\author{Sijin Chen\thanks{Princeton University; email: \texttt{chensj@princeton.edu}.} \\
Princeton
\and 
Zhize Li\thanks{Carnegie Mellon University; emails: \texttt{\{zhizeli, yuejiechi\}@cmu.edu}.} \\
CMU 
\and 
Yuejie Chi\footnotemark[2] \\
CMU 
}
\date{\today}
\begin{document}

\newcommand{\RR}{\mathbb{R}}
\newcommand{\CC}{\mathbb{C}}
\newcommand{\TT}{\mathbb{T}}
\newcommand{\eRR}{\overline{\mathbb{R}}}
\newcommand{\HH}{\mathbb{H}}
\newcommand{\EE}{\mathbb{E}}
\newcommand{\dist}{\mbox{dist}}
\newcommand{\nm}[1]{\left\|{#1}\right\|}
\newcommand{\seq}[1]{\{{#1}^k\}_{k\ge 0}}
\newcommand{\sseq}[1]{\{{#1}^{k_j}\}_{j\ge 0}}
\newcommand{\seqq}[1]{\{{#1}_k\}_{k\ge 0}}
\newcommand{\sseqq}[1]{\{{#1}_{k_j}\}_{j\ge 0}}
\newcommand{\ra}{\rightarrow}
\newcommand{\tg}{\tilde{g}}
\newcommand{\tbx}{\tilde{\bm{x}}}

\newcommand{\cf}{{\it cf.}}
\newcommand{\eg}{{\it e.g.}}
\newcommand{\ie}{{\it i.e.}}
\newcommand{\etc}{{\it etc.}}
\newcommand{\res}{{\it res.}}
\newcommand{\ea}{{\it et al.\ }}
\newcommand{\viz}{{\it viz.}}

\newcommand{\BlkDiag}{\mbox{BlkDiag}}

\newcommand*{\vertbar}{\rule[-1ex]{0.5pt}{2.5ex}}
\newcommand*{\horzbar}{\,\rule[.5ex]{2.5ex}{0.5pt}\,}



\newcommand{\MATLAB}{\textsc{Matlab}\xspace}
\newcommand{\sync}{\textsf{sync}\xspace}
\newcommand{\R}{\mathbb{R}}
\newcommand{\F}{\mathcal{F}}
\newcommand{\SO}{\mathcal{SO}}
\newcommand{\OO}{\mathcal{O}}
\newcommand{\tr}{\mathrm{tr}}
\newcommand{\ip}[1]{\left<{#1}\right>}
\newcommand{\sumjii}{\sum_{j\in\mathcal{I}^e_i\cap \mathcal{I}_i}}
\newcommand{\hU}{\hat{\bm{U}}}
\newcommand{\RK}{\left(\mathbf{1}_{1\times K}\otimes\bm R\right)}
\newcommand{\setbar}{\;|\;}
\newcommand{\lcm}{\mathrm{lcm}}
\newcommand{\hbx}{\hat{\bm x}}
\newcommand{\hby}{\hat{\bm y}}
\newcommand{\eqbr}{\phantom{{}=1}}
\newcommand{\avg}[1]{\frac{1}{#1}\sum_{i=1}^{#1}}

\newcommand{\ALG}{Power-EF}

\maketitle

\begin{abstract}
We consider the problem of finding second-order stationary points of heterogeneous federated learning (FL). Previous works in FL mostly focus on first-order convergence guarantees, which do not rule out the scenario of unstable saddle points. Meanwhile, it is a key bottleneck of FL to achieve communication efficiency without compensating the learning accuracy, especially when local data are highly heterogeneous across different clients. Given this, we propose a novel algorithm \ALG{} that only communicates compressed information via a novel error-feedback scheme. To our knowledge, \ALG{} is the first distributed and compressed SGD algorithm that provably escapes saddle points in heterogeneous FL without any data homogeneity assumptions. In particular, \ALG{} improves to second-order stationary points after visiting first-order (possibly saddle) points, using additional gradient queries and communication rounds only of almost the same order required by first-order convergence, and the convergence rate exhibits a linear speedup in terms of the number of workers. Our theory improves/recovers previous results, while extending to much more tolerant settings on the local data. Numerical experiments are provided to complement the theory.
\end{abstract}

\medskip

\noindent\textbf{Keywords:} distributed SGD, heterogeneous federated learning, communication compression, second-order convergence 


\section{Introduction}
\label{sec:intro}
 
The prevalence of large-scale data and enormous model size in modern machine learning problems give rise to an increasing interest in distributed machine learning, where a number of clients cooperate to handle the extremely heavy computation in the learning task without the need to move data around. 

We consider a distributed server-client setting. Suppose that each client $i\in[n]$ has access to a local dataset $\mathcal W^{(i)}$ distributed over an unknown space $\Omega$, and a central server maintains a model parameterized by $\bm x\in\R^d$. Given a cost function $F:\R^d\times\Omega\to\R$ that evaluates the performance of a model $\bm x$ on an input data sample $\omega\in\Omega$, the $i$-th local objective function $f_i$ is defined by  
$f_i(\bm x):=\EE_{\omega^{(i)}\sim\mathcal W^{(i)}}[F(\bm x,\omega^{(i)})].$
We would like to find a model parameter $\bm x$ that minimizes the local objectives in an averaged manner, which leads to a finite-sum minimization problem:
\begin{align}\label{eq:dist_ob}
    \min_{\bm x\in\R^d} f(\bm x) :=\frac{1}{n}\sum_{i=1}^nf_i(\bm x),
\end{align}
where the local objective functions $\{f_i\}_{i=1}^n$ and the global objective function $f=\avg{n}f_i$ are in general nonconvex, especially in machine learning applications.

\paragraph{Heterogeneous federated learning.} Assumptions on data homogeneity across the clients can be deployed to underplay this problem to a certain extent, since intuitively, there are less disagreements across the local objectives to reconcile. For example, each local dataset $\mathcal W^{(i)}$ may take similar distributions, or may be uploaded to a data center that maintains global knowledge \citep{konevcny2016federatedlearning}. However, in many real applications such as Internet of Things (IoT) \citep{savazzi2020federated,nguyen2021federated}, smart healthcare \citep{xu2021federated}, and networked model devices \citep{kang2020reliable}, such assumptions become impractical in that local datasets display a strongly heterogeneous pattern, while they should not be exchanged or exposed to a third party due to privacy sensitivity or communication infeasibility \citep{konevcny2016federatedlearning}. These thorny scenarios of data heterogeneity correspond to a framework for distributed learning, namely federated learning (FL) \citep{kairouz2019advances}, which is now accumulating special attention from both academia and industry. The heterogeneous data constitute a major challenge in the distributed optimization problem under federated settings, which we refer to as heterogeneous FL.

\paragraph{Distributed SGD with communication compression.} A prevalent approach to solve \eqref{eq:dist_ob} is  by distributed stochastic gradient descent (SGD) \citep{koloskova2020unified}, a family of algorithms following the essential idea that each client computes its local stochastic gradient and then sends the gradient (or a carefully designed surrogate for the gradient) to the central server for parameter update. Distributed SGD has to take good care of communication efficiency: due to the large client number $n$ \citep{savazzi2020federated} and model scale $d$ \citep{brown2020language} in modern machine learning tasks, the communication cost from the clients to the server becomes the main bottleneck of optimization. Moreover, many resource constraints in real communication systems, such as limited bandwidth and stringent delay requirements, also highlight the importance of establishing efficient communication for the distributed training procedure. 

A natural method to attain communication efficiency is (lossy) compression: one can deploy a \textit{compressor} $\mathcal C:\R^d\to\R^d$ in distributed SGD, which compresses any message $\bm x\in\R^d$ the client would like to send to the server, so that the traffic $\mathcal C(\bm x)$ takes up a smaller bandwidth. In literature, a randomized operator $\mathcal C:\R^d\to\R^d$ is said to be a $\mu$-compressor if the (expected) relative distortion of the compressed output is bounded by $\mu$ \citep{stich2018sparsified,richtarik2021ef21,fatkhullin2021ef21,Huang2022lower}, which helps quantify the information loss due to compression.
 
\paragraph{Motivation.} It has been a recent interest to establish convergence results for distributed SGD with communication compression. Many among these works \citep{stich2018sparsified,koloskova2019decentralizeddeep,xie2020cser,avdiukhin2021escaping,Huang2022lower} assume bounded local gradients $\nm{\nabla f_i(\bm x)}^2\leq G^2$, or bounded dissimilarity of local gradients $\frac{1}{n}\nm{\sum_{i=1}^n\nabla f_i(\bm x)-\nabla f(\bm x)}^2\leq G^2$, reflecting a reliance on data homogeneity that fails to hold in heterogeneous FL. Another body of the works \citep{richtarik2021ef21,fatkhullin2021ef21,richtarik20223pc,zhao2022beer}, although allowing heterogeneous data, only ensures first-order optimality, i.e. convergence to an  $\epsilon$-optimal first-order stationary point $\bm x$ with $\nm{\nabla f(\bm x)}\leq\epsilon$, which does not suffice to justify the goodness of the solution in the nonconvex setting where saddle points are abundant and do not necessarily lead to generalizable performance \citep{dauphin2014identifying}. It is then important to obtain second-order convergence guarantees that ensure the algorithm escapes the saddle points and converges to an $\epsilon$-optimal second-order stationary point, with an additional control on the Hessian positive-definiteness that says $-\lambda_{\min}(\nabla^2f(\bm x))\leq O\left(\sqrt{\epsilon}\right)$. Despite the growing literature of saddle-point escaping algorithms in the centralized setting \citep{jin2019nonconvex,ge2015escaping,li2019ssrgd,daneshmand2018escaping}, to the best of our knowledge, no existing distributed SGD algorithms succeed with second-order guarantees in the presence of both communication compression and data heterogeneity. In summary, the current research sparked a natural question as the primary concern of this paper: 
\begin{center}
 \textit{On heterogeneous data, is there a distributed SGD algorithm with communication compression that attains second-order convergence guarantees for nonconvex problems?}
\end{center}

\subsection{Our contribution}

To the best of our knowledge, this work is the first to answer the above question affirmatively. Our specific contributions are as follows.
\begin{itemize}
    \item \textbf{A novel error-feedback mechanism:} we propose \ALG{}, a new distributed SGD algorithm that contains a novel error-feedback mechanism for communication compression.
    \item \textbf{First-order convergence:} we prove that, with high probability, \ALG{} converges to $\epsilon$-optimal first-order stationary points within $\tilde{O}\left(\frac{1}{n\epsilon^4}+\frac{1}{\mu^{1.5}\epsilon^3}\right)$ stochastic gradient queries and communication rounds. The algorithm shows a linear speedup pattern in that the convergence rate benefits with the increase of the number of workers $n$.
    \item \textbf{Second-order convergence:} we prove that, with high probability, \ALG{} escapes the saddle points and converges to $\epsilon$-optimal second-order stationary points within $\tilde{O}\left(\frac{1}{n\epsilon^4}+\frac{1}{\mu^{1.5}\epsilon^3}+\frac{\mu n+1}{\mu^3\epsilon^{2.5}}\right)$ stochastic gradient queries and communication rounds. This suggests that \ALG{} finds second-order stationary points with almost the same order of gradient and communication complexities as it takes to for first-order convergence.
    \item \textbf{Convergence under arbitrary data heterogeneity:} importantly, the theory of \ALG{} does not require assumptions on data similarity between different clients, thus allowing arbitrary heterogeneity in federated learning tasks.
\end{itemize}
See also Table \ref{tab:results-gradient-complexity} and \ref{tab:results-communication-rounds} for a detailed comparison between our proposed method and existing algorithms.

\begin{table*}[!ht]
    \caption{Comparison of algorithms using \emph{stochastic gradients} for nonconvex problems. {Stochastic gradient complexity} refers to the number of stochastic gradient queries required to converge to $\epsilon$-optimal first-order or $\epsilon$-optimal second-order stationary points, and $\mu$ refers to the parameter of the compressor.}
    \label{tab:results-gradient-complexity}
    \renewcommand{\arraystretch}{1.5}
    \centering
    \resizebox{\textwidth}{!}{
        \begin{tabular}{|c|c|c|c|c|c|c|}
        \hline
        \bf Algorithm & \makecell{\bf Stochastic gradient\\\bf complexity} & \makecell{\bf Result \\ \bf guarantee} & \makecell{\bf Data homogeneity\\\bf assumption} & \bf Distributed? & \bf Compression? \\
        \hline


        \makecell{SGD \\ \citep{ghadimi2016mini}} & ${O}\left(\frac{1}{\epsilon^4}\right)$  & 1st-order & not applicable & NO & NO\\ \hline

        \makecell{Compressed SGD \\ \citep{avdiukhin2021escaping}} & ${O}\left(\frac{1}{\epsilon^4} + \frac{1}{\mu\epsilon^3}\right)$  & 1st-order & not applicable & NO & \byes \\ \hline
        
        \makecell{CHOCO-SGD \\ \citep{koloskova2019decentralizeddeep}} & ${O}\left(\frac{1}{n\epsilon^4} + \frac{1}{\mu\epsilon^3}\right)$  & 1st-order & bounded gradient & \byes & \byes \\ \hline

        \makecell{CSER \\ \citep{xie2020cser}} & ${O}\left(\frac{1}{n\epsilon^4} + \frac{1}{\mu\epsilon^3}\right)$  & 1st-order & bounded gradient & \byes & \byes \\ \hline




        \makecell{NEOLITHIC \\ \citep{Huang2022lower}} & $\tilde{O}\left(\frac{1}{n\epsilon^4}+\frac{1}{\mu\epsilon^2}\right)$  & 1st-order & gradient similarity & \byes & \byes \\ \hline

        \makecell{EF21-SGD \\ \citep{fatkhullin2021ef21}} & ${O}\left(\frac{1}{\mu^3\epsilon^4} + \frac{1}{\mu\epsilon^2}\right)$  & 1st-order & \bno & \byes & \byes \\ \hline

        \makecell{\bf \ALG{}\\(Algorithm \ref{alg:ALG})} & $\tilde{O}\left(\frac{1}{n\epsilon^4} + \frac{1}{\mu^{1.5}\epsilon^3}\right)$ & 1st-order & \bno & \byes & \byes \\  \hhline{======}


        \makecell{Noisy SGD \\ \citep{ge2015escaping}} & $ \mathrm{poly}(\frac{1}{\epsilon})$  & 2nd-order & not applicable & NO & NO \\ \hline
        
        \makecell{CNC-SGD \\ \citep{daneshmand2018escaping}} & $\tilde{O}\left(\frac{1}{\epsilon^5}\right)$  & 2nd-order & not applicable & NO & NO \\ \hline

        \makecell{Perturbed SGD \\ \citep{jin2019nonconvex}} & $\tilde{O}\left(\frac{1}{\epsilon^4}\right)$  & 2nd-order & not applicable & NO & NO \\ \hline

        \makecell{Compressed SGD \\ \citep{avdiukhin2021escaping}} & $\tilde{O}\left(\frac{1}{\epsilon^4}+\frac{1}{\mu\epsilon^3}+\frac{1}{\mu^2\epsilon^{2.5}}\right)$  & 2nd-order & not applicable & NO & \byes \\ \hline

        \makecell{\bf \ALG{}\\(Algorithm \ref{alg:ALG})} & $\tilde{O}\left(\frac{1}{n\epsilon^4}+\frac{1}{\mu^{1.5}\epsilon^3}+\frac{\mu n+1}{\mu^3\epsilon^{2.5}}\right)$ & 2nd-order & \bno & \byes & \byes \\  \hline
        \end{tabular}
    }
    \vspace{3mm}
\end{table*}

\begin{table*}[!h]
    \caption{Comparison of \emph{distributed and compressed} algorithms using stochastic gradients for nonconvex problems. {Communication rounds} refers to the number of compressed messages transmitted between clients and the server.}
    \label{tab:results-communication-rounds}
    \renewcommand{\arraystretch}{1.5}
    \centering
    \resizebox{\textwidth}{!}{
    \begin{tabular}{|c|c|c|c|c|}
    \hline
    \bf Algorithm & \makecell{\bf Communication rounds} & \makecell{\bf Result guarantee} & \makecell{\bf Data homogeneity assumption} \\
    \hline


    \makecell{CHOCO-SGD \\ \citep{koloskova2019decentralizeddeep}} & ${O}\left(\frac{1}{n\epsilon^4}+\frac{1}{\mu\epsilon^3}\right)$  & 1st-order & bounded gradient \\ \hline

    \makecell{CSER \\ \citep{xie2020cser}} & ${O}\left(\frac{1}{n\epsilon^4}+\frac{1}{\mu\epsilon^3}\right)$  & 1st-order & bounded gradient \\ \hline

    \makecell{NEOLITHIC \\ \citep{Huang2022lower}} & $\tilde{O}\left(\frac{1}{n\epsilon^4}+\frac{1}{\mu\epsilon^2}\right)$  & 1st-order & gradient similarity \\ \hline

    \makecell{EF21-SGD \\ \citep{fatkhullin2021ef21}} & $O\left(\frac{1}{\mu^3\epsilon^4}+\frac{1}{\mu\epsilon^2}\right)$ & 1st-order & \bno \\ \hline


    \makecell{\bf \ALG{}\\(Algorithm \ref{alg:ALG})} & $\tilde{O}\left(\frac{1}{n\epsilon^4}+\frac{1}{\mu^{1.5}\epsilon^3}\right)$ & 1st-order & \bno \\  \hhline{====}


    \makecell{\bf \ALG{}\\(Algorithm \ref{alg:ALG})} & $\tilde{O}\left(\frac{1}{n\epsilon^4}+\frac{1}{\mu^{1.5}\epsilon^3}+\frac{\mu n+1}{\mu^3\epsilon^{2.5}}\right)$ & 2nd-order & \bno \\  \hline
    \end{tabular}
    }
    
\end{table*}

\subsection{Related works}

\paragraph{Communication compression.}
A communication operator, or a compressor, is deployed to reduce the communication cost in distributed SGD. Various instances of compressors include Quantized SGD \citep{alistarh2017qsgd} that rounds real-valued gradient vectors to discrete buckets, Sign SGD \citep{bernstein2018signsgd} that represents the gradient with the sign of each coordinate, Top-$k$ \citep{stich2018sparsified} that selects $k$ coordinates out of the total dimension $d$ with the largest magnitudes, and Random-$k$ \citep{stich2018sparsified} that performs the above selection uniformly at random, among others. Regardless of the specific design, a general biased compressor is characterized by a parameter $\mu\in(0,1]$ that controls the aforementioned distortion of the operator.

With a compressor at hand, one also needs a mechanism that specifies what message should be compressed and transmitted between clients. A naive, prototypical mechanism is to directly replace the gradient with its compressed version and then conduct the update step as what is done in regular SGD, for example $\bm x_{t+1}=\bm x_t-\eta\cdot\mathcal C(\tilde{\nabla}f(\bm x_t))$ or its momentum variants. This mechanism underpins \citet{alistarh2017qsgd,bernstein2018signsgd}, among others. However, error may accumulate in this simple replacement due to the lossy compression and menace its convergence. Various works propose new mechanisms to properly handle the error to boost the convergence performance, including Error-Feedback \citep{seide20141,stich2018sparsified,karimireddy2019error,avdiukhin2021escaping,li2022soteriafl} 
and its variants \citep{richtarik2021ef21,fatkhullin2021ef21,Huang2022lower}, with adaptations to decentralized optimization \citep{koloskova2019decentralizeddeep,koloskova2019decentralized,zhao2022beer}.   Most of the works guarantee first-order convergence subject to different levels of assumptions on data homogeneity, cf. Tables \ref{tab:results-gradient-complexity} and \ref{tab:results-communication-rounds}.

\paragraph{Second-order convergence of gradient methods.}
It is well-known that gradient methods converge to first-order stationary points \citep{nesterov2014introductory}. In non-convex problems, however, first-order convergence can be easily attacked by saddle points that may trap the GD trajectory. It is therefore important to investigate whether the algorithm is capable of escaping saddle points and converging to second-order stationary points.
Asymptotically, \citet{lee2016gradient} proved that GD with random initialization converges to a local minimum almost surely. However, the algorithm may still have to take an exponential time to escape the saddle points \citep{du2017gradient}.

As to the polynomial-time guarantees, it is known that perturbing the gradient with isotropic noise helps GD converge to local minimizers \citep{ge2015escaping,jin2017escape}. The perturbation technique gives rise to similar guarantees for other gradient methods, from SGD \citep{jin2019nonconvex} to SVRG \citep{ge2019stable} and stochastic recursive gradient descent \citep{li2019ssrgd}. On the other hand, instead of gradient perturbation, \citet{daneshmand2018escaping} establishes the saddle-escaping property of SGD under an additional Correlated Negative Curvature (CNC) assumption regarding the statistical property of the stochastic gradient oracle. 

Recently, \citet{avdiukhin2021escaping} leverages the perturbation technique to analyze the second-order stationarity of SGD with communication compression. The derivation is based on {\em single-node} implementation, which does not directly extend to the distributed settings. Further, it requires a conditional reset procedure in each iteration to achieve second-order convergence, at the expense of high communication cost as the server has to collect and maintain the local error terms using an \textit{uncompressed} channel. Therefore, it remains obscure if the results therein still apply to the distributed setting with communication efficiency demands. 

\subsection{Notation}

Throughout, we use lowercase boldface letters to denote vectors, and uppercase boldface letters to denote matrices. Let $\bm I$ be the identity matrix. Let $\ip{\bm u,\bm v}:=\bm u^\top\bm v$ denote the standard Euclidean inner product of two vectors $\bm u$ and $\bm v$. The operator $\nm{\cdot}$ denotes the Euclidean norm when exerted on a vector, i.e. $\nm{\bm x}:=\sqrt{\ip{\bm x,\bm x}}=\sqrt{\bm x^\top\bm x}$, and denotes the spectral (operator) norm when exerted on a matrix, i.e. $\nm{\bm A}:=\sup_{\bm x}\nm{\bm A\bm x}/\nm{\bm x}$. In addition, we use the standard order notation $O(\cdot)$ to hide absolute constants, and $\tilde{O}(\cdot)$ to hide polylog factors.

\section{Problem Formulation}

This paper is concerned with solving the nonconvex finite-sum minimization problem in a federated setting, while each client should only query a local stochastic gradient oracle, and communicate their information with the server in an efficient manner using compression. We detail this formulation in the following.

\label{sec:formulation}
\subsection{Nonconvex finite-sum minimization}
Recall that we consider a federated optimization problem of finding an optimal parameter $\bm x$ to minimize the local objectives $\{f_i\}_{i=1}^n$ in an averaged manner, which is stated as an unconstrained finite-sum minimization problem:
\begin{align*}
    \min_{\bm x\in\R^d} f(\bm x):= \frac{1}{n}\sum_{i=1}^nf_i(\bm x),
\end{align*}
where $\{f_i\}_{i=1}^n$'s are the local objective functions, and $n$ is the number of clients.

We focus on the case where the objective functions are nonconvex, subject to the following assumptions.
\begin{ass}\label{ass:f-min}
There exists some $f_{\min}>-\infty$ such that $f(\bm x)\geq f_{\min}$ for all $\bm x\in\R^d$.
\end{ass}

We will leverage the boundedness in Assumption \ref{ass:f-min} to establish first-order convergence results. For second-order results, similar to \citet{avdiukhin2021escaping}, the following alternative is required.

\begin{manualass}{\ref*{ass:f-min}*}\label{ass:f-max}
There exists some $f_{\max}<\infty$ such that $|f(\bm x_1)-f(\bm x_2)|\leq f_{\max}$ for all $\bm x_1,\bm x_2\in\R^d$.
\end{manualass}


Besides boundedness, we also assume the smoothness of $f$.

\begin{ass}\label{ass:f-L}
$f$ is differentiable and $L$-smooth, i.e. $$\nm{\nabla f(\bm x_1)-\nabla f(\bm x_2)}\leq L\nm{\bm x_1-\bm x_2}, \quad\forall\bm x_1,\bm x_2\in\R^d.$$
\end{ass}

In the same spirit as what we do for the boundedness assumption, we need to further assume a Lipschitz property of the Hessian to prove second-order results.

\begin{ass}\label{ass:f-rho}
$f$ is twice differentiable and $\rho$-Hessian Lipschitz, i.e.,
$$\nm{\nabla^2f(\bm x_1)-\nabla^2f(\bm x_2)}\leq\rho\nm{\bm x_1-\bm x_2}, \quad\forall\bm x_1,\bm x_2\in\R^d.
$$
\end{ass}
We emphasize that no assumption is made on the boundedness of, or similarity between, the local gradients. 

\subsection{Local stochastic gradient oracle}
Each client $i$ is allowed to query a local stochastic gradient oracle $\tilde{\nabla}f_i$.

\begin{ass}\label{ass:stoch-L}
    Each $\tilde{\nabla} f_i$ is $\tilde{L}_i$-Lipschitz, i.e.
    \begin{align*}
        \nm{\tilde{\nabla}f_i(\bm x_1)-\tilde{\nabla}f_i(\bm x_2)}\leq\tilde{L}_i\nm{\bm x_1-\bm x_2},\quad\forall\bm x_1,\bm x_2\in\R^d.
    \end{align*}
\end{ass}
Based on Assumption \ref{ass:stoch-L}, it is straightforward to verify that the global stochastic gradient $\tilde{\nabla}f$ is $\tilde L$-smooth with $\tilde L:=\sqrt{\avg{n}\tilde L_i^2}$.

Besides smoothness, the stochastic gradients should also approximate the true gradients.
\begin{ass}\label{ass:nSG}
For any $\bm x\in\R^d$, the mutually independent stochastic gradient oracles $\tilde{\nabla} f_i$ satisfy
\begin{align*}
\EE\left[\tilde{\nabla} f_i(\bm x)\right] &=\nabla f_i(\bm x),\\
\Pr\left(\nm{\tilde{\nabla} f_i(\bm x)-\nabla f_i(\bm x)}\geq t\right) & \leq2\exp\left(-\frac{t^2}{2\sigma^2}\right)
\end{align*}
for all $ t\geq0$ and some $\sigma>0$. 
\end{ass}
Assumption \ref{ass:nSG} is a high-probability variant of the commonly-used bounded variance assumption, stated in expectation. Switching to such a high-probability variant is again necessary for second-order analysis \citep{jin2019nonconvex,li2019ssrgd} because we aim at a convergence guarantee with probability bounds. 

Additionally, we introduce the mini-batch version of the stochastic gradient. For integer $k$, let $\tilde{\nabla}f_i(\bm x)^{(1)}$, $\ldots$, $\tilde{\nabla}f_i(\bm x)^{(k)}$ be the $k$ independent queries to the stochastic oracle at $\bm x$. The mini-batch gradient is defined as their average, i.e. $\tilde{\nabla}_k f_i(\bm x)=\frac{1}{k}\sum_{j=1}^k\tilde{\nabla}f_i(\bm x)^{(j)}$. 

\subsection{Communication compression}
To enable efficient communication over bandwidth-limited scenarios, our setting requests that the communication between the clients and the server should be compressed according to a possibly randomized scheme $\mathcal C$. Specifically, for any input $\bm x\in\R^d$, the scheme outputs a surrogate $\mathcal C(\bm x)\in\R^d$ so that the transmission of $\mathcal C(\bm x)$ between machines would take up a smaller bandwidth than the direct transmission of $\bm x$.
\begin{defi}\label{defi:C}
A possibly random mapping $\mathcal C:\R^d\to\R^d$ is said to be a $\mu$-compressor for some $\mu\in(0,1]$ if
$$\nm{\bm x-\mathcal C(\bm x)}^2\leq(1-\mu)\nm{\bm x}^2, \quad\forall\bm x\in\R^d.
$$
\end{defi}
Definition \ref{defi:C} slightly deviates from the conventional definition that controls the expected distortion, i.e., $\EE[\nm{\bm x-\mathcal C(\bm x)}^2]\leq(1-\mu)\nm{\bm x}^2$, to facilitate the derivation of high-probability results. Examples of compressors that satisfy Definition \ref{defi:C} include Top-$k$ \citep{stich2018sparsified} and a family of compressors named general biased rounding \citep{beznosikov2020biased}.

\section{Proposed Algorithm}
\label{sec:algorithm}
This section introduces our proposed algorithm \ALG{} that is suitable to heterogeneous FL with communication compression.

\subsection{Fast Compressed Communication}
We first introduce the Fast Compressed Communication (FCC) module proposed by \citet{Huang2022lower}, which is deployed at each client in their compressed SGD algorithm NEOLITHIC. For input $\bm x\in\R^d$, the FCC module with parameter $p\in\mathbb Z_+$ recursively computes the residual $\{\bm v_i\}_{i=1}^p$ for $p$ rounds, where 
$$\bm v_1=\bm x;\quad\bm v_{i}=\bm x-\sum_{j=1}^{i-1}\mathcal C(\bm v_j),\quad i=2,...,p.$$
It then outputs $\mathrm{FCC}_p(\bm x)=\sum_{i=1}^p\mathcal C(\bm v_i)$. To transmit the output to the server efficiently, the client transmits the set of compressed vectors $\{\mathcal C(\bm v_i)\}_{i=1}^p$ through the channel, and the exact output is assembled by summation on the server side.

Defining $\mathcal D:\bm x\mapsto\bm x-\mathcal C(\bm x)$, one can observe that $\mathrm{FCC}_p(\bm x)=\bm x-\mathcal D^p(\bm x)$. In fact, the FCC module is able to refine the compression loss by harnessing the contraction property of $\mathcal D$. Specifically, $\mathcal D$ is a contraction because $\nm{\mathcal D(\bm x)}^2\leq(1-\mu)\nm{\bm x}^2$ due to Definition \ref{defi:C}. Hence, the error of the FCC module $\nm{\bm x-\mathrm{FCC}_p(\bm x)}^2\leq(1-\mu)^p\nm{\bm x}^2$ enjoys a geometric decay with $p$. 

\subsection{\ALG{}}

\begin{algorithm*}[t]
\caption{\ALG{}}\label{alg:ALG}
\begin{algorithmic}[1]
\STATE \textbf{Input}: $\bm x_0$, step size $\eta$, contraction exponent $p$, perturbation radius $r$, number of iterations $T$

\STATE{\textbf{Initialization}: $\bm e_0^{(i)}\leftarrow\bm 0$, $\bm e_{-1}^{(i)}\leftarrow\bm0$, $\bm g_{-1}^{(i)}\leftarrow\bm0$, $\bm g_{-1}\leftarrow\bm0$}

\FOR {$t=0,1,2,...,T-1$}
    \FOR{parameter server}
    \STATE{sample $\bm\xi_t\sim\mathcal N(\bm 0,\frac{r^2}{npd}\bm I)$} 
    \STATE{broadcast $\bm\xi_t$ to every client}
    \ENDFOR

    \FOR{client $i=1,2,...,n$ in parallel}

        \STATE{$\bm w_{t}^{(i)}\leftarrow\mathrm{FCC}_p(\bm e_t^{(i)}-\bm e_{t-1}^{(i)})=\sum_{\ell=1}^p\mathcal C(\bm v_\ell^{(i)})$}\alglinelabel{line:w-update}

        \STATE{$\bm c_t^{(i)}\leftarrow\mathcal{C}(\bm e_t^{(i)}+\tilde{\nabla}_p f_i(\bm x_{t})+\bm\xi_t-\bm g_{t-1}^{(i)}-\bm w_{t}^{(i)})$}\alglinelabel{line:c-update}

        \STATE{$\bm g_t^{(i)}\leftarrow\bm g_{t-1}^{(i)}+\bm w_{t}^{(i)}+\bm c_t^{(i)}$} \hfill \COMMENT{Feedback the local gradient estimate}\alglinelabel{line:g-update}

        \STATE{$\bm e_{t+1}^{(i)}\leftarrow\bm e_{t}^{(i)}+\tilde{\nabla}_p f_i(\bm x_{t})+\bm\xi_{t}-\bm g_{t}^{(i)}$} \hfill \COMMENT{Update the error}\alglinelabel{line:e-update}

        \STATE{upload $\bm c_t^{(i)}$ and $\{\mathcal C(\bm v_\ell^{(i)})\}_{\ell=1}^p$ to server}
    \ENDFOR
    \FOR{parameter server}

    \STATE{$\bm g_{t}\leftarrow\bm g_{t-1}+\avg{n}\sum_{\ell=1}^p\mathcal C(\bm v_{\ell}^{(i)})+\avg{n}\bm c_t^{(i)}$}\alglinelabel{line:global-g-update} \hfill \COMMENT{Prepare the global gradient}

    \STATE{$\bm x_{t+1}\leftarrow\bm x_{t}-\eta\bm g_{t}$} \alglinelabel{line:x-update}  \hfill \COMMENT{Update the model}

    \STATE{broadcast $\bm x_t$ to every client}
    \ENDFOR

\ENDFOR
\end{algorithmic}
\end{algorithm*}

We integrate the FCC module into our algorithm \ALG{}, as summarized in Algorithm \ref{alg:ALG}. The algorithm takes as input an initial model $\bm x_0$, step size $\eta$, FCC parameter $p$, perturbation radius $r$, and the number of iterations $T$. After a simple initialization procedure, \ALG{} iteratively produces a sequence $\{\bm x_t\}_{t=0}^T$ to gradually update the initial model by SGD-type descent. In each iteration, we use the accumulated gradient $\tilde{\nabla}_pf_i$ to balance the number of communication rounds and stochastic gradient complexity. Each iteration of \ALG{} contains four conceptual stages interpreted as follows.
\begin{itemize}
\item {\bf Feedback the local gradient estimate.}
We intend to use $\bm e_t^{(i)}$, the error up to the \textit{last} iteration, to feedback our estimate of the local gradient $\bm g_t^{(i)}$ for the \textit{current} round. Firstly, based on the error, the client invokes FCC module to compute the feedback term $\bm w_t^{(i)}+\bm c_t^{(i)}$ (Line \ref{line:w-update}--\ref{line:c-update}). Then each client $i$ gets its current gradient estimate $\bm g_t^{(i)}$ by complementing the existing estimate $\bm g_{t-1}^{(i)}$ with the feedback term (Line \ref{line:g-update}).

\item {\bf Update the error.}
Upon completion of the feedback, we increase the error term by the discrepancy between the real stochastic gradient (after artificial perturbation) and our local estimate $\bm g_t^{(i)}$ (Line \ref{line:e-update}). In this way, the error term essentially stores the \textit{cumulative} estimation discrepancy of $\bm g_t^{(i)}$, which is ready for feedback again on the next run.

\item {\bf Prepare the global gradient estimate.}
The update of global gradient estimate is conducted on a par with the local update method in an averaged manner (Line \ref{line:global-g-update}), so that we always have $\bm g_t=\avg{n}\bm g_t^{(i)}$.

\item {\bf Update the model.}
Finally, the server updates the current model $\bm x_t$ by a descending step along our global gradient estimate $\bm g_{t}$ (Line~\ref{line:x-update}).

\end{itemize}

\subsection{Discussion}\label{subsec:discussion}

At its core, \ALG{} benefits from the power contraction underlying the FCC module to upgrade the classical error-feedback mechanism \citep{avdiukhin2021escaping,stich2018sparsified}, hence the name. Specifically, our algorithm inherits the classical design of error term to track the cumulative discrepancy of gradient estimation (Line \ref{line:e-update}), but refines the way errors are used to feedback the current gradient estimation by the FCC module. Moreover, while still guaranteeing second-order results, \ALG{} manages to remove from the prior work \citep{avdiukhin2021escaping} an expensive procedure of conditinal reset that inevitably occupies the uncompressed bandwidth.

\paragraph{Data heterogeneity.} Mathematically, our mechanism is able to induce an error term recurrence irrelavent to local gradients, thus circumventing from data similarity assumptions. This favorable property originates from our design of \ALG{}, which is nontrivially different from the existing NEOLITHIC \citep{Huang2022lower} algorithm where FCC module also plays a part. For example, NEOLITHIC inputs the gradient estimate to FCC while we input the estimation discrepancy, and error terms are also computed distinctly. As a notable result, contrary to our algorithm, the theory of NEOLITHIC still has to assume local gradient similarity.

\paragraph{Gradient perturbation.} We add an isotropic Gaussian noise to each stochastic gradient to help the model escape from saddle points. Intuitively, around saddle points, the isotropic perturbation ensures that the SGD trajectory can traverse a sufficient distance along the descending direction, i.e. the eigenvector of Hessian $\nabla^2f(\bm x_t)$ with a negative eigenvalue, thus escaping the saddle region and gaining an objective decrease. The perturbation is not required for first-order convergence, in which case one can safely set $r=0$.

\section{Performance Guarantees}
\label{sec:theory}
In this section, we state the theoretical guarantees for \ALG{}, where the proofs are deferred to the appendix. To begin, we first define the first-order and second-order approximate stationarity conditions.
\begin{defi}\label{defi:FOSP}
$\bm x\in\R^d$ is said to be an $\epsilon$-optimal first-order stationary point ($\epsilon$-FOSP) if $\nm{\nabla f(\bm x)}\leq\epsilon$.
\end{defi}
\begin{defi}\label{defi:SOSP}
Suppose that $\bm x\in\R^d$ is an $\epsilon$-FOSP. Then, $\bm x$ is said to be an $\epsilon$-optimal second-order stationary point ($\epsilon$-SOSP) if
$$\nabla^2 f(\bm x) \succeq -\sqrt{\rho\epsilon}\cdot\bm I.$$
Otherwise, $\bm x$ is said to be an $\epsilon$-strict saddle point.  
\end{defi}
Moreover, we denote $\chi^2:=\sigma^2\log d+r^2$ the effective variance of stochastic gradient and perturbation, and $\Phi$ the initialization quality where
    $$\Phi=\avg{n}\nm{\tilde{\nabla}_p f_i(\bm x_0)+\bm\xi_0}^2+\tilde L\left[f(\bm x_0)-f_{\min} \right].$$

We are now ready to state the main theorems. 

\paragraph{First-order convergence guarantee.} Theorem \ref{thm:FOSP} establishes that \ALG{} converges with high probability to $\epsilon$-FOSP.
\begin{theorem}[Convergence to $\epsilon$-FOSP]\label{thm:FOSP}
Suppose that Assumptions \ref{ass:f-min}, \ref{ass:f-L}, \ref{ass:nSG} hold, and the parameters $T,\eta,p,r$ satisfy
\begin{align*}
    T&=\kappa_T\cdot\max\left\{\frac{f(\bm x_0)-f_{\min}}{\eta\epsilon^2},\frac{\chi^2\iota}{np\epsilon^2}\right\}, \\
    \eta&=\kappa_\eta\cdot\min\left\{\frac{\mu\epsilon}{L\sqrt{\mu\Phi+\frac{\chi^2\iota}{np}}},\frac{np\epsilon^2}{\chi^2L}\right\},\\
    p&=\kappa_p\cdot\frac{1}{\mu}\log\left(\frac{1}{\mu}\right)
\end{align*}
for some constants $\kappa_T,\kappa_\eta,\kappa_p>0$, and the parameter $\iota$ controlling the tightness of the probability bound. Then, with probability at least $1-7e^{-\iota}$, at least 3/4 of the iterates $\{\bm x_t\}_{t=0}^T$ generated by Algorithm \ref{alg:ALG} are $\epsilon$-FOSPs.
\end{theorem}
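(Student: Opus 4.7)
The plan is to combine a standard virtual-iterate analysis with a contraction bound for the aggregate error term that is specific to Power-EF. I first introduce the auxiliary sequence $\tilde{\bm x}_t := \bm x_t - \eta\bar{\bm e}_t$ with $\bar{\bm e}_t := \frac{1}{n}\sum_{i=1}^n \bm e_t^{(i)}$, and verify from Lines~\ref{line:e-update}--\ref{line:x-update} that it evolves by the clean perturbed-SGD recursion
\begin{equation*}
\tilde{\bm x}_{t+1} \;=\; \tilde{\bm x}_t \,-\, \eta\Bigl(\tfrac{1}{n}\textstyle\sum_{i=1}^n \tilde{\nabla}_p f_i(\bm x_t) + \bm\xi_t\Bigr).
\end{equation*}
Applying the $L$-smoothness descent lemma to $f$ along $\tilde{\bm x}_t$ and bridging $\bm x_t$ and $\tilde{\bm x}_t$ through $\|\bm x_t-\tilde{\bm x}_t\|=\eta\|\bar{\bm e}_t\|$ (via Assumption~\ref{ass:f-L}) converts the one-step bound into a descent inequality schematically of the form
\begin{equation*}
f(\bm x_{t+1}) - f(\bm x_t) \;\le\; -\tfrac{\eta}{2}\|\nabla f(\bm x_t)\|^2 + O(\eta^3 L^2)\|\bar{\bm e}_t\|^2 + \text{(mean-zero noise)} + \text{(variance term)},
\end{equation*}
after one application of Young's inequality.

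The crux is then a Lyapunov-style contraction recurrence for the average error $\frac{1}{n}\sum_i\|\bm e_t^{(i)}\|^2$. Unrolling Lines~\ref{line:w-update}--\ref{line:e-update} gives the identity
\begin{equation*}
\bm e_{t+1}^{(i)} = \bm u_t^{(i)} - \mathcal C(\bm u_t^{(i)}), \qquad \bm u_t^{(i)} := \bm e_t^{(i)} + \tilde{\nabla}_p f_i(\bm x_t) + \bm\xi_t - \bm g_{t-1}^{(i)} - \mathrm{FCC}_p(\bm e_t^{(i)} - \bm e_{t-1}^{(i)}),
\end{equation*}
so Definition~\ref{defi:C} immediately yields $\|\bm e_{t+1}^{(i)}\|^2 \le (1-\mu)\|\bm u_t^{(i)}\|^2$. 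Substituting the previous-iteration error identity $\bm g_{t-1}^{(i)} = \bm e_{t-1}^{(i)}+\tilde{\nabla}_p f_i(\bm x_{t-1})+\bm\xi_{t-1}-\bm e_t^{(i)}$ lets me rewrite $\bm u_t^{(i)}$ as an FCC-residual of $(\bm e_t^{(i)}-\bm e_{t-1}^{(i)})$ plus a \emph{consecutive difference} of gradients and noises. Combining the two contractions---the outer $\mathcal C$ and the inner $(1-\mu)^p$ from FCC---with Young's inequality and the choice $p=\Theta(\mu^{-1}\log(1/\mu))$ should produce a recurrence of the form
\begin{equation*}
\tfrac{1}{n}\textstyle\sum_{i}\|\bm e_{t+1}^{(i)}\|^2 \le (1-c\mu)\,\tfrac{1}{n}\textstyle\sum_{i}\|\bm e_t^{(i)}\|^2 + \tfrac{C}{\mu}\,\tfrac{1}{n}\textstyle\sum_{i}\|\tilde{\nabla}_p f_i(\bm x_t)-\tilde{\nabla}_p f_i(\bm x_{t-1})\|^2 + \text{(noise)}.
\end{equation*}
Only gradient \emph{differences} appear on the RHS, so Assumption~\ref{ass:stoch-L} alone (with no data-similarity assumption) upgrades them to $\tilde L^2\eta^2\|\bm g_{t-1}\|^2$ and couples the two recurrences. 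Achieving this isolation, which relies on Power-EF feeding $\bm e_t^{(i)}-\bm e_{t-1}^{(i)}$ (rather than $\bm e_t^{(i)}$) into FCC in Line~\ref{line:w-update}, is in my view the main technical obstacle.

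To close, I would (i) upgrade the in-expectation bounds to high probability: Assumption~\ref{ass:nSG} and a union bound over $T$ iterations give sub-Gaussian mini-batch concentration with effective variance $\chi^2/(np)$, while Hanson--Wright / vector-Bernstein concentration controls the martingale inner products in the descent inequality, with $\iota$ absorbing the $\log(dT)$ overhead; and (ii) form the potential $\Psi_t := f(\bm x_t)-f_{\min}+\alpha\eta^2\cdot\tfrac{1}{n}\sum_i\|\bm e_t^{(i)}\|^2$ by adding a suitable multiple of the error recurrence to the descent inequality, arranging that $\Psi_{t+1}-\Psi_t \le -\Omega(\eta)\|\nabla f(\bm x_t)\|^2 + \text{(controlled noise)}$. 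Telescoping over $t=0,\dots,T-1$, using Assumption~\ref{ass:f-min} to bound $\Psi_0$, and applying Markov's inequality to the count of "bad" iterates then yields that at least $3/4$ of $\{\bm x_t\}$ satisfy $\|\nabla f(\bm x_t)\|\le\epsilon$. The stated parameter choices emerge from balancing the noise-limited regime (contributing $\tfrac{1}{n\epsilon^4}$) against the compression-limited regime (contributing $\tfrac{1}{\mu^{1.5}\epsilon^3}$) in the resulting gradient-query complexity.
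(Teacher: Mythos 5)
Your proposal is correct in its essentials and follows the same skeleton as the paper's proof: your $\tilde{\bm x}_t=\bm x_t-\eta\bar{\bm e}_t$ is exactly the corrected iterate $\bm y_t$ of \eqref{prop:y}, and your rewriting of $\bm u_t^{(i)}$ via $\bm g_{t-1}^{(i)}=\bm e_{t-1}^{(i)}+\tilde{\nabla}_pf_i(\bm x_{t-1})+\bm\xi_{t-1}-\bm e_t^{(i)}$ into an FCC-residual of $\bm e_t^{(i)}-\bm e_{t-1}^{(i)}$ plus consecutive differences is precisely the identity \eqref{eq:err-bound-id-1}--\eqref{eq:err-bound-id-2} that makes the error recurrence heterogeneity-free; this is indeed the crux, and you have it. Where you diverge is in how the two recurrences are closed: you bound the gradient-difference term by $\tilde L^2\eta^2\nm{\bm g_{t-1}}^2$ and absorb it through a Lyapunov potential $\Psi_t=f(\cdot)+\alpha\eta^2\avg{n}\nm{\bm e_t^{(i)}}^2$, whereas the paper instead converts $\nm{\bm x_\tau-\bm x_{\tau-1}}^2$ into a function decrease via Lemma \ref{lem:desc}, telescopes inside the sum-of-errors bound (Lemma \ref{lem:new-err-bound}), and only then plugs that sum into the descent lemma (Lemma \ref{lem:comp-desc-lem}). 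Your coupling should close for the first-order result under $\eta\lesssim\mu/\tilde L$ (the induced $\nm{\nabla f}^2$ coefficient is $O(\eta^5L^2\tilde L^2/\mu^2)\ll\eta$), but note what the paper's route buys: an error bound with no gradient-norm terms, which it explicitly exploits later to avoid the large/small-gradient case split in the second-order analysis. Two smaller points to tighten: the potential should be built on $f(\bm y_t)$ rather than $f(\bm x_t)$ (the clean recursion descends in $\bm y_t$; the discrepancy only vanishes after telescoping since $\bm e_0=\bm 0$ and $f\geq f_{\min}$), and a per-iteration union bound over $T$ yields probability $1-O(T)e^{-\iota}$ rather than the stated $1-7e^{-\iota}$; the paper avoids the $T$ factor by applying the whole-horizon norm-subGaussian martingale bounds (Propositions \ref{prop:nSG-norm} and \ref{prop:nSG-product}) to the summed noise and cross terms, and your argument should do the same rather than concentrating iteration by iteration.
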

In words, first-order convergence is guaranteed with high probability (controlled by $\iota$), under an appropriate choice of the algorithm parameters. Note that the theorem does not specify a choice for the perturbation radius $r$, resonating with Section \ref{subsec:discussion} in that perturbation is not required for first-order convergence. Based on Theorem \ref{thm:FOSP}, it is now immediate to compute the gradient complexity and communication rounds of \ALG{} to attain first-order optimality, given by the corollary below.

\begin{coro}[$\epsilon$-FOSP complexity]\label{coro:FOSP}
Under the same setting of Theorem \ref{thm:FOSP}, Algorithm \ref{alg:ALG} requires $\tilde{O}\Big(\frac{1}{n\epsilon^4}+\frac{1}{\mu^{1.5}\epsilon^3}\Big)$ queries to the stochastic gradient oracle and communication rounds.
\end{coro}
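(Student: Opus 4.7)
The plan is a direct book-keeping exercise that converts the parameter settings of Theorem \ref{thm:FOSP} into a total cost. First I would observe that in each of the $T$ iterations, every client draws $p$ independent samples (to form the mini-batch $\tilde{\nabla}_p f_i$) and transmits the $p$ FCC vectors $\{\mathcal{C}(\bm v_\ell^{(i)})\}_{\ell=1}^p$ together with $\bm c_t^{(i)}$. Hence both the per-client stochastic gradient complexity and the number of communication rounds are of order $T \cdot p$, up to a constant. It therefore suffices to establish $Tp = \tilde{O}\bigl(\tfrac{1}{n\epsilon^4}+\tfrac{1}{\mu^{1.5}\epsilon^3}\bigr)$.

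Next, I would expand $T$ using the prescribed step size. From Theorem \ref{thm:FOSP},
\begin{equation*}
T \;\lesssim\; \frac{f(\bm x_0)-f_{\min}}{\eta\epsilon^2} \;+\; \frac{\chi^2\iota}{np\epsilon^2},
\end{equation*}
and $1/\eta \lesssim \tfrac{L\sqrt{\mu\Phi+\chi^2\iota/(np)}}{\mu\epsilon} + \tfrac{\chi^2 L}{np\epsilon^2}$. Applying the subadditivity $\sqrt{a+b}\leq\sqrt{a}+\sqrt{b}$ to the radical splits the dominant piece of $1/\eta$ into two summands, one of order $L\sqrt{\Phi}/(\sqrt{\mu}\epsilon)$ and one of order $L\chi\sqrt{\iota/(np)}/(\mu\epsilon)$. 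Substituting and multiplying through by $\epsilon^{-2}(f(\bm x_0)-f_{\min})$ yields three contributions to $T$: a $1/\epsilon^3$ term weighted by $1/\sqrt{\mu}$, a $1/\epsilon^3$ term weighted by $1/(\sqrt{np}\,\mu)$, and a $1/\epsilon^4$ term weighted by $1/(np)$; the residual $\chi^2\iota/(np\epsilon^2)$ is of strictly lower order in $1/\epsilon$.

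Then I would multiply by $p = \tilde O(1/\mu)$ termwise. The $1/(np)\cdot 1/\epsilon^4$ term cancels the $p$ to produce $\tilde O\bigl(1/(n\epsilon^4)\bigr)$. The first $1/\epsilon^3$ term becomes $\tilde O\bigl(1/(\mu^{1.5}\epsilon^3)\bigr)$. The second $1/\epsilon^3$ term becomes of order $1/(\sqrt{np}\mu^2\epsilon^3)$; substituting $p = \tilde O(1/\mu)$ gives $\sqrt{p} \asymp 1/\sqrt{\mu}$, so this term also collapses to $\tilde O\bigl(1/(\sqrt{n}\,\mu^{1.5}\epsilon^3)\bigr)$, which is no worse than the claimed bound. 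Collecting everything and absorbing all constants and $\text{polylog}$ factors (including $\iota$, $\log(1/\mu)$, and problem-dependent constants such as $L$, $\Phi$, $\chi$, $f(\bm x_0)-f_{\min}$) into the $\tilde O$ notation yields the advertised rate $\tilde O\bigl(\tfrac{1}{n\epsilon^4}+\tfrac{1}{\mu^{1.5}\epsilon^3}\bigr)$.

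The proof presents no real obstacle beyond algebraic hygiene. The only place one needs to be slightly careful is the $\max\{\cdot,\cdot\}$ inside $\eta^{-1}$ and the $\max$ inside $T$: rather than splitting into cases it is cleanest to upper bound each maximum by the sum of its arguments so that the two regimes (``variance-limited'' and ``descent-limited'') are handled in a single pass and the resulting three summands can be compared against the two terms in the target bound.
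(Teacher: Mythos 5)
Your computation is correct and is exactly the bookkeeping the paper intends: the paper offers no separate proof of this corollary, treating it as immediate from multiplying the per-iteration cost (both gradient queries and compressed messages scale as $p$ per client per iteration) by the $T$, $\eta$, $p$ choices of Theorem \ref{thm:FOSP}, which is what you carry out, and your term-by-term bounds (including the $\tilde O\bigl(1/(\sqrt{n}\,\mu^{1.5}\epsilon^3)\bigr)$ cross term being dominated) check out.
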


\paragraph{Second-order convergence guarantee.} Moving onto the second-order convergence, we have the following theorem.

\begin{theorem}[Convergence to $\epsilon$-SOSP]\label{thm:SOSP}
Suppose that Assumptions \ref{ass:f-max}, \ref{ass:f-L}, \ref{ass:f-rho}, \ref{ass:nSG} hold, and the parameters $T,\eta,p,r$ satisfy
\begin{align}
    T&=\kappa_T\cdot\max\left\{\frac{\iota^5f_{\max}}{\eta\epsilon^2},\frac{\chi^2\iota}{np\epsilon^2}\right\}, \nonumber\\
    \eta&=\kappa_\eta\cdot\min\left\{\frac{\mu\epsilon}{\iota^5L\sqrt{\mu\Phi+\frac{\chi^2\iota}{np}}},\frac{\iota\sigma^2\sqrt{\rho\epsilon}\log d}{L^2\left(np\Phi+\frac{\chi^2\iota}{\mu^2}\right)},\frac{np\epsilon^2}{\iota^5L\chi^2}\right\}, \nonumber\\
    p&=\kappa_p\cdot\frac{1}{\mu}\log\left(\frac{1}{\mu}\right),\nonumber\\
    r&=\kappa_r\cdot\sigma\sqrt{\iota d\log d}\nonumber
\end{align}
for some constants $\kappa_T,\kappa_\eta,\kappa_p,\kappa_r>0$, and the parameter $\iota$ controlling the tightness of the probability bound. Set $\mathcal{I}=\frac{\iota}{\eta\sqrt{\rho\epsilon}}$. Then, with probability at least $1-8T^2(\mathcal I^2+d\mathcal I+\mathcal I+T)e^{-\iota}$, at least half of the iterates $\{\bm x_t\}_{t=0}^T$ generated by Algorithm \ref{alg:ALG} are $\epsilon$-SOSPs.
\end{theorem}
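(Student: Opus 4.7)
The plan is to combine Theorem 4.1 with a coupling-based saddle-escape argument, suitably adapted to the compressed error-feedback dynamics. First, I would re-invoke Theorem 4.1 under the (smaller) step size prescribed in Theorem 4.3. Since $T$ and $\eta$ continue to satisfy the inequalities of Theorem 4.1 (up to the harmless extra $\iota^5$ factor), at most $T/4$ iterates can fail to be $\epsilon$-FOSPs, with probability at least $1-7e^{-\iota}$. It therefore suffices to prove that among the remaining $\ge 3T/4$ iterates at most $T/4$ can be $\epsilon$-strict saddles, so that at least $T/2$ are $\epsilon$-SOSPs.

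To bound the number of $\epsilon$-strict saddles encountered, I would partition the $T$ iterations into consecutive windows of length $\mathcal{I}=\iota/(\eta\sqrt{\rho\epsilon})$ and show that whenever a window begins at a strict-saddle $\bm x_{t_0}$, the function value must drop by a fixed amount $\mathcal{F}=\tilde{\Omega}(\sqrt{\epsilon^3/\rho})$ over the window, with high probability. The escape proof follows the coupling technique of Jin et al.\ and Avdiukhin et al.: consider two virtual runs $\{\bm x_t\}$ and $\{\bm x_t'\}$ of Algorithm~\ref{alg:ALG} that share all randomness except for a spike in the initial perturbation $\bm\xi_{t_0}$ oriented along the minimum-eigenvalue direction $\bm v_1$ of $\nabla^2 f(\bm x_{t_0})$. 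Using Assumption~\ref{ass:f-rho} to linearize $\nabla f$ around $\bm x_{t_0}$, the difference $\bm\delta_t=\bm x_t-\bm x_t'$ obeys the recursion $\bm\delta_{t+1}=(\bm I-\eta\nabla^2 f(\bm x_{t_0}))\bm\delta_t+\eta\bm r_t$, so its projection onto $\bm v_1$ is amplified by the factor $(1+\eta\sqrt{\rho\epsilon})$ each step. After $\mathcal I$ iterations the amplification exceeds $e^{\iota}$, and at least one of the two trajectories must travel far from $\bm x_{t_0}$; the standard improve-or-localize lemma then converts this displacement into the desired objective decrease $\mathcal F$.

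The main technical obstacle is controlling the residual $\bm r_t$ in the recursion above, because in our setting the gradient actually used to update $\bm x_t$ is not $\tilde\nabla f(\bm x_t)+\bm\xi_t$ but rather $\bm g_t$, an FCC-compressed, error-feedback-corrected estimate whose internal states $\bm e_t^{(i)}, \bm g_{t-1}^{(i)}, \bm w_t^{(i)}$ all depend on the entire past randomness and therefore differ nontrivially between the two coupled runs. The key ingredients would be: (i) the contraction $\|\bm x-\mathrm{FCC}_p(\bm x)\|\le(1-\mu)^{p/2}\|\bm x\|$, which, together with $p=\kappa_p\mu^{-1}\log(1/\mu)$, drives the per-step compression error down to $O(\mu)$ times the signal; (ii) a recursive bound on $\avg{n}\|\bm e_t^{(i)}\|^2$ similar to the one used to prove Theorem~\ref{thm:FOSP}, transplanted to the \emph{difference} of the two virtual runs; (iii) martingale concentration to control the stochastic-gradient discrepancy $\tilde\nabla_p f_i(\bm x_t)-\tilde\nabla_p f_i(\bm x_t')$ in sub-Gaussian form using Assumption~\ref{ass:nSG}. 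Together these should yield $\eta\|\bm r_t\|\ll \|\bm\delta_t\|$ throughout the window, provided $\eta$ is capped by the second branch $\iota\sigma^2\sqrt{\rho\epsilon}\log d/(L^2(np\Phi+\chi^2\iota/\mu^2))$ appearing in the theorem.

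Finally, I would assemble the pieces by a counting and union-bound argument. The total function-value budget is $f_{\max}$; hence the number of windows that begin at a strict saddle is at most $f_{\max}/\mathcal F$, which, after plugging in the prescribed $T$, is strictly smaller than $T/(4\mathcal I)$; multiplying by $\mathcal I$ bounds the number of strict-saddle iterates by $T/4$. Combined with the $\epsilon$-FOSP count from step one, at least $T/2$ iterates are $\epsilon$-SOSPs. The failure probability is obtained by a union bound over the $T/\mathcal I$ windows, the $O(d\mathcal I)$ directions and times inside each escape event, the $O(\mathcal I^2)$ martingale-concentration events inside each window, and the $O(T)$ first-order-convergence event of Theorem~\ref{thm:FOSP}, producing the stated $1-8T^2(\mathcal I^2+d\mathcal I+\mathcal I+T)e^{-\iota}$.
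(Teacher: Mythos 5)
Your overall blueprint (re-use the $\epsilon$-FOSP count, prove improve-or-localize, couple two runs of the algorithm to show escape from strict saddles, then count windows against the budget $f_{\max}$) matches the paper, but two steps as you describe them would not go through. First, your coupling differs from the paper's in a way that matters quantitatively: you perturb only the \emph{initial} noise $\bm\xi_{t_0}$ along $\bm v$, whereas the paper reflects \emph{every} subsequent perturbation, $\bm\xi'_{t_0+t}=(\bm I-2\bm v\bm v^\top)\bm\xi_{t_0+t}$, and then lower-bounds the accumulated term $\bm\Xi_t=\eta\sum_\tau(\bm I-\eta\bm H)^{t-1-\tau}\hat{\bm\xi}_{t_0+\tau}$ by $\frac{\sqrt{\eta}\,r}{3\sqrt{6np\gamma d}}(1+\eta\gamma)^t$. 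A single reflected spike only contributes $\approx\eta\,\frac{r}{\sqrt{npd}}(1+\eta\gamma)^t$ along $\bm v$, which is smaller by a factor $\sqrt{\eta\gamma}\ll1$. Since the competing terms in the decomposition of $\hat{\bm y}_{t_0+t}$ --- the accumulated stochastic-gradient discrepancy $\bm Z_t$ and the compression-error term $\bm E_t+\eta\hat{\bm e}_{t_0+t}$ --- are both of order $\sqrt{\eta/\gamma}\cdot\sigma\sqrt{\iota\log d/(np)}\,(1+\eta\gamma)^t$ (matching the paper's $\bm\Xi_t$ only up to constants, with $r=\kappa_r\sigma\sqrt{\iota d\log d}$), your single-spike signal cannot dominate them; the domination argument fails unless you either inflate $r$ by $1/\sqrt{\eta\gamma}$ (which damages $\chi^2$ and the first-order rate) or switch to the reflected-sequence coupling. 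Relatedly, you will also need a \emph{uniform} per-iterate bound on $\|\bm e_t\|$ (not just the sum from the first-order proof) to control $\eta\hat{\bm e}_{t_0+t}$ inside the escape window; this is exactly where Assumption \ref{ass:f-max} and the paper's recurrence lemma enter, and your sketch does not secure it.

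Second, your counting step assumes that every window starting at a strict saddle yields a descent of $\mathcal F$ ``with high probability,'' and then divides $f_{\max}$ by $\mathcal F$. The coupling argument does not deliver high probability per window: because one only shows that one of the two identically distributed runs moves by $\mathcal R$, the actual run escapes with probability roughly $1/3$ (this is exactly what Corollary \ref{coro:escape-saddle} and Lemma \ref{lem:saddle-desc} state). To convert constant-probability per-window descent into an aggregate bound you need the paper's second ingredient in Lemma \ref{lem:saddle-desc} (the objective cannot \emph{increase} by more than $\mathcal F/4$ per window w.h.p.) together with a stopping-time decomposition and a supermartingale concentration inequality over the $M$ saddle windows, which produces $T_1\leq-(\tfrac34 M-c\sqrt{M\iota})\mathcal F$ and then the contradiction with $f_{\max}$. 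As written, your deterministic budget argument skips this and is not valid. Fixing these two points would bring your proof in line with the paper's.
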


Unlike Theorem~\ref{thm:FOSP}, perturbing the local stochastic gradient with an appropriate radius plays a vital part in the second-order guarantee by assisting the iterates to escape the saddle points. Again, we can
compute the gradient complexity and communication rounds of \ALG{} to attain second-order optimality, given as follows.

\begin{coro}[$\epsilon$-SOSP complexity]\label{coro:SOSP}
Under the same setting of Theorem \ref{thm:SOSP}, Algorithm \ref{alg:ALG} requires $\tilde{O}\Big(\frac{1}{n\epsilon^4}+\frac{1}{\mu^{1.5}\epsilon^3}+\frac{\mu n+1}{\mu^3\epsilon^{2.5}}\Big)$ queries to the stochastic gradient oracle and communication rounds.
\end{coro}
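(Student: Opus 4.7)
The plan is to read off the per-iteration resource usage of Algorithm~\ref{alg:ALG}, multiply by the iteration count $T$ prescribed in Theorem~\ref{thm:SOSP}, and then simplify using the closed-form choices of $\eta$ and $p$. In each iteration, every client performs $p$ calls to the stochastic gradient oracle (to form $\tilde\nabla_p f_i(\bm x_t)$) and uploads $p+1$ compressed messages, namely the $p$ blocks $\{\mathcal C(\bm v_\ell^{(i)})\}_{\ell=1}^p$ from the FCC module and the extra $\bm c_t^{(i)}$ from Line~\ref{line:c-update}. Thus both the total number of stochastic gradient queries and the total number of communication rounds are $O(Tp)$ up to absolute constants, and the task is reduced to bounding $Tp$.

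Next I would plug in the parameter choices. Since $p=\tilde O(1/\mu)$, and
\[
T = \tilde O\!\left(\max\!\left\{\frac{1}{\eta\epsilon^2},\,\frac{\chi^2}{np\epsilon^2}\right\}\right),
\]
we get
\[
Tp = \tilde O\!\left(\max\!\left\{\frac{p}{\eta\epsilon^2},\,\frac{\chi^2}{n\epsilon^2}\right\}\right).
\]
The second term contributes only $\tilde O(1/(n\epsilon^2))$, dominated by what the first term gives in the relevant regime, so the main job is to analyze $p/(\eta\epsilon^2)$ under the three-way minimum defining $\eta$. I would split into the three cases corresponding to each argument of the $\min$ being the binding one, treat $\Phi$, $\chi^2$, $L$, $\rho$, $\sigma$, $\log d$, $\iota$ as polylogarithmic/constant factors absorbed by $\tilde O(\cdot)$, and compute:
\begin{itemize}
\item If $\eta\propto \mu\epsilon/(L\sqrt{\mu\Phi+\chi^2\iota/(np)})$ binds, then $p/(\eta\epsilon^2)=\tilde O\bigl(1/(\mu^{1.5}\epsilon^{3})\bigr)$ (after bounding $\sqrt{\mu\Phi+\chi^2\iota/(np)}$ by $O(1)+O(1/\sqrt{np})$, the second subterm being absorbed by the $1/(n\epsilon^4)$ piece).
\item If $\eta\propto \iota\sigma^2\sqrt{\rho\epsilon}\log d / (L^2(np\Phi+\chi^2\iota/\mu^2))$ binds, then $p/(\eta\epsilon^2)=\tilde O\bigl((np/\mu+1/\mu^3)/\epsilon^{2.5}\bigr)=\tilde O\bigl((\mu n+1)/(\mu^3\epsilon^{2.5})\bigr)$.
\item If $\eta\propto np\epsilon^2/(L\chi^2)$ binds, then $p/(\eta\epsilon^2)=\tilde O\bigl(1/(n\epsilon^4)\bigr)$.
\end{itemize}
Taking the maximum over the three cases yields $\tilde O\!\left(\frac{1}{n\epsilon^4}+\frac{1}{\mu^{1.5}\epsilon^3}+\frac{\mu n+1}{\mu^3\epsilon^{2.5}}\right)$, which is exactly the claim.

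The calculation is essentially bookkeeping, so there is no genuine analytical obstacle; the main thing to be careful about is the algebra in the second case, where one must recognize that $np\cdot p=np/\mu$ and $p/\mu^2=1/\mu^3$ combine (via the identity $np/\mu+1/\mu^3=(\mu n+1)/\mu^3$) into the stated bound, and that the $\sqrt{\rho\epsilon}$ factor in $\eta$ is what produces the $\epsilon^{2.5}$ rather than $\epsilon^{3}$. A secondary bookkeeping point is to verify that the polylog-in-$1/\epsilon$ factors coming from $\iota$, $\log d$, and the $\log(1/\mu)$ inside $p$ are indeed absorbable into $\tilde O(\cdot)$, so that the stated three-term bound covers both gradient queries and communication rounds simultaneously.
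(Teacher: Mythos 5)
Your accounting follows the same route the paper intends (the paper treats this corollary as an immediate computation from Theorem \ref{thm:SOSP}: each iteration costs $O(p)$ stochastic gradient queries and $O(p)$ compressed messages per client, so everything reduces to bounding $Tp$), and your three-case analysis of the $\min$ defining $\eta$ produces exactly the three terms in the claim. One bookkeeping slip in case 1, however: bounding $\sqrt{\mu\Phi+\chi^2\iota/(np)}$ by $O(1)+O(1/\sqrt{np})$ throws away the $\sqrt{\mu}$ factor, and taken literally this gives $p/(\eta\epsilon^2)=\tilde O\bigl(1/(\mu^{2}\epsilon^{3})\bigr)$, which is larger than the claimed $\tilde O\bigl(1/(\mu^{1.5}\epsilon^{3})\bigr)$ and is not covered by the stated bound when $\mu\gtrsim\sqrt{\epsilon}$. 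The correct bookkeeping keeps the factor: since $\Phi=\tilde O(1)$ and $p=\tilde O(1/\mu)$, one has $\sqrt{\mu\Phi}=\tilde O(\sqrt{\mu})$ and $\sqrt{\chi^2\iota/(np)}=\tilde O(\sqrt{\mu/n})$, so $p/(\eta\epsilon^2)=\tilde O\bigl(\sqrt{\mu}/(\mu^{2}\epsilon^{3})\bigr)=\tilde O\bigl(1/(\mu^{1.5}\epsilon^{3})\bigr)$; also note the $1/\sqrt{np}$ subterm equals $\tilde O\bigl(1/(\mu^{1.5}\sqrt{n}\,\epsilon^{3})\bigr)$ and is absorbed by the $1/(\mu^{1.5}\epsilon^{3})$ term, not necessarily by the $1/(n\epsilon^{4})$ term as you state. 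Cases 2 and 3, the identity $np/\mu+1/\mu^{3}=(\mu n+1)/\mu^{3}$, and the $\chi^{2}\iota/(np\epsilon^{2})$ branch of $T$ are handled correctly, with $\chi^{2}$ (including the dimension factor coming from $r^{2}$) absorbed into $\tilde O(\cdot)$ exactly as the paper implicitly does.
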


According to the corollaries, \ALG{} improves to second-order stationary points after visiting first-order (possibly saddle) points, using additional gradient queries and communication rounds only of almost the same order required by first-order convergence when $\epsilon$ is typically small to be the dominant parameter. Contrary to another work allowing heterogeneous data \citep{fatkhullin2021ef21}, our convergence rate exhibits a linear speedup in terms of $n$, implying that our algorithm significantly benefits from the distributed framework.

\section{Numerical Experiments}

In this section, we present the performance of \ALG{} in distributed learning experiments to validate its efficiency empirically. We train a ResNet18 model on CIFAR10 dataset \citep{krizhevsky2009learning} using 4 clients and 1 server and compare the performance of various distributed optimization algorithms in the training task, including standard distributed SGD, SGD with naive compression, standard EF, and \ALG{}. All the training procedures take 100 epochs with a step size of $10^{-2}$ and weight decay of $10^{-4}$. For communication compression, we use Top-$k$ compressor that keeps top 1\% coordinates of the largest magnitudes, and \ALG{} is tested with exponent $p=1,4,8$ respectively. The algorithms are implemented on PyTorch \citep{paszke2019pytorch} 2.0.0 and the experiments are conducted on NVIDIA Tesla P100 GPU. 

We summarize the results in Figure \ref{fig:1}. According to Figure \ref{fig:1}(a) and \ref{fig:1}(b), without the feedback mechanism, the lossy compression significantly hinders the the convergence speed and prediction accuracy. On the other hand, EF and \ALG{} have a comparable performance in boosting the training procedure and improving the accuracy. According to Figure \ref{fig:1}(a), increasing the FCC parameter $p$, the convergence speed is almost not affected while an improved test loss is obtained in the final stage. A comparison between the communication efficiency of different algorithms is drawn in Figure \ref{fig:1}(c). The compressor remarkably scales down the communication cost of the training procedure, from nearly $10^4$ GB to no more than $10^2$ GB for 100 epochs.

\begin{figure*}
    \begin{minipage}[b]{0.33\linewidth}
        \centering
        \centerline{\includegraphics[width=1.01\linewidth]{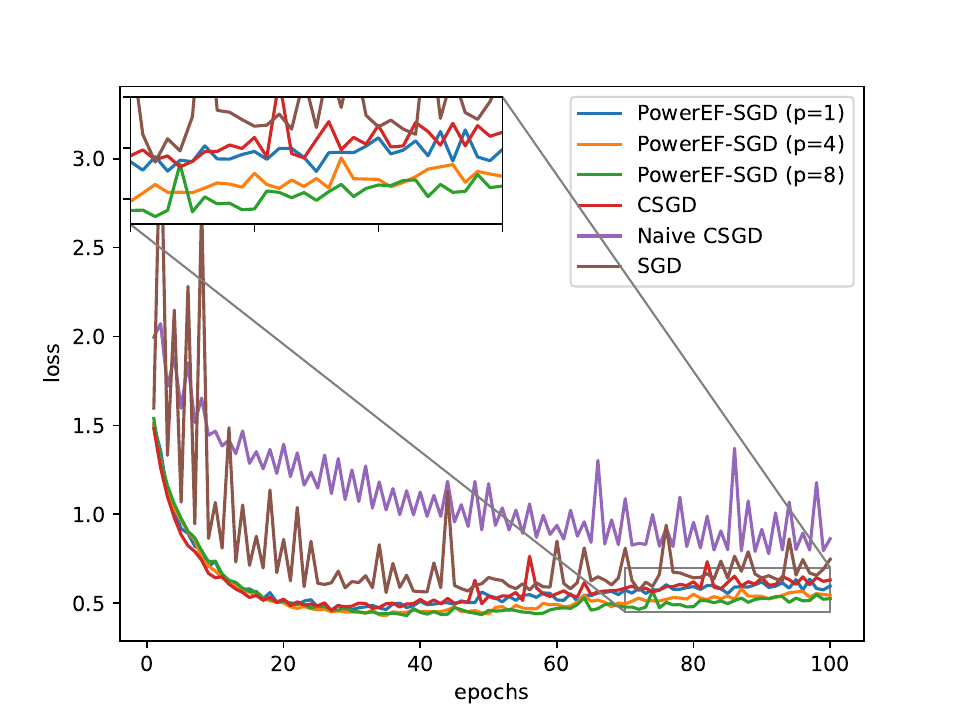}}
        \centerline{(a) test loss}\medskip
    \end{minipage}
    \begin{minipage}[b]{0.33\linewidth}
        \centering
        \centerline{\includegraphics[width=\linewidth]{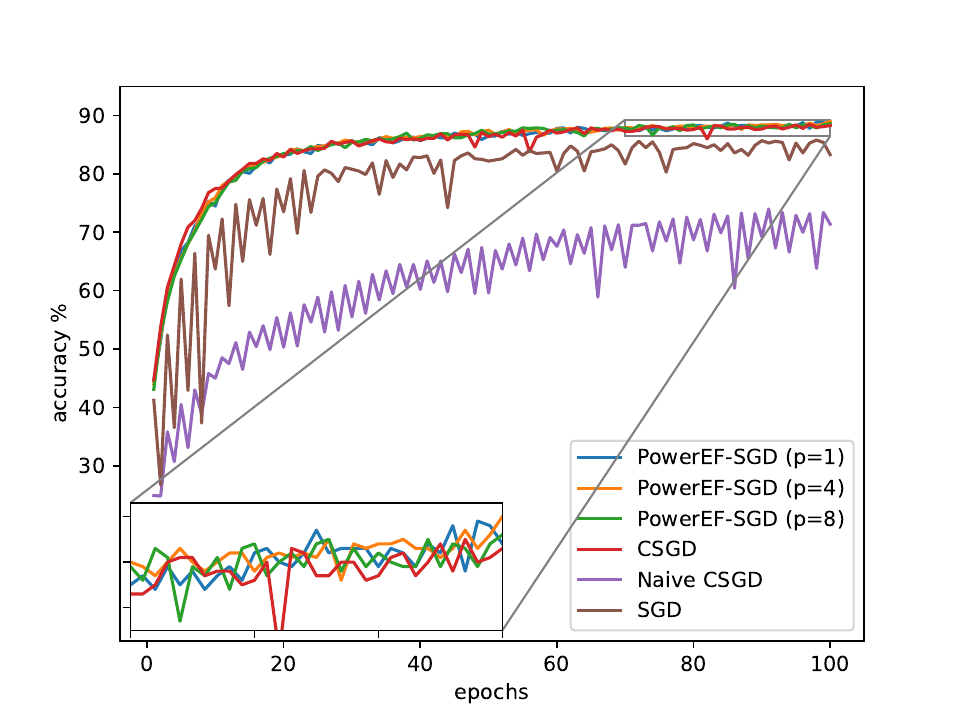}}
        \centerline{(b) test accuracy}\medskip
    \end{minipage}
    \begin{minipage}[b]{0.33\linewidth}
        \centering
        \centerline{\includegraphics[width=\linewidth]{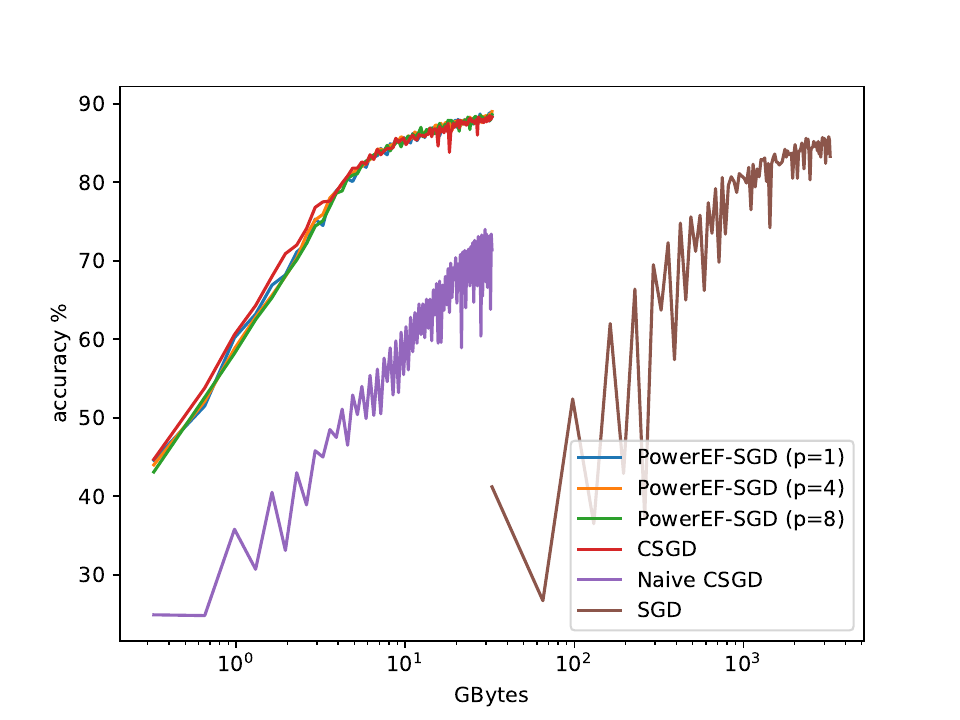}}
        \centerline{(c) test accuracy}\medskip
    \end{minipage}
    \vspace{.01in}
    \caption{Comparison of different algorithms on CIFAR-10 training task. Naive CSGD refers to the naive compression scheme for SGD without any feedback mechanism, and CSGD refers to \citet{avdiukhin2021escaping} that adopts the standard error-feedback. (a) Test loss with respect to training epochs. (b) Test accuracy with respect to training epochs. (c) Test accuracy with respect to size of communication in GBytes.}
    \label{fig:1}
\end{figure*}

\section{Conclusion}
\label{sec:conclusions}
 
In this paper, we propose and analyze \ALG{}, which is the first distributed SGD algorithm with communication compression that provably attains second-order optimality under heterogeneous data, to the best of our knowledge. Specifically, subject to mild and standard assumptions, we show that \ALG{} converges to $\epsilon$-SOSPs with high probability, which is almost on par with the gradient and communication complexity it takes to find $\epsilon$-FOSPs, and the convergence rate shows a linear speedup with respect to $n$. Our theory are complemented by the performance of \ALG{} in the distributed learning experiments. For future work, it will be of great interest to develop privacy-preserving distributed SGD algorithms that can escape saddle points with communication compression.


%

\section*{Acknowledgement}
This work is supported in part by the grants ONR N00014-19-1-2404, NSF CIF-2007911, ECCS-2318441, and AFRL FA8750-20-2-0504. 

\bibliography{FL}
\bibliographystyle{apalike}

\appendix

\section{Technical Preparation}
\label{subsec:nSG}

Throughout, we adopt notations similar to \citet{avdiukhin2021escaping} to define several important quantities that bring convenience to our theoretical analysis. We define
\begin{enumerate}
\item local stochastic gradient noise $\bm\zeta_t^{(i)}:=\tilde{\nabla}_p f_i(\bm x_t)-\nabla f_i(\bm x_t)$,
\item local aggregate noise $\bm\psi_t^{(i)}:=\bm\zeta_t^{(i)}+\bm\xi_t$,
\item local compression error $\bm e_t^{(i)}$ as in Line \ref{line:e-update}, Algorithm \ref{alg:ALG}.
\end{enumerate}
Their global versions are defined by averaging all the nodes as
$$\bm\zeta_t:=\avg{n}\bm\zeta_t^{(i)},\quad\bm\psi_t:=\avg{n}\bm\psi_t^{(i)},\quad\bm e_t:=\avg{n}\bm e_t^{(i)}=\bm e_{t-1}+\nabla f(\bm x_{t-1})+\bm\psi_{t-1}-\bm g_{t-1}.$$

We define the sequence of corrected iterates $\{\bm y_t\}$ as $\bm y_t:=\bm x_t-\eta\bm e_t$. It is easy to verify the sequence $\{\bm y_t\}$ is updated by 
\begin{equation} \label{prop:y}
\bm y_{t+1}=\bm y_{t}-\eta(\nabla f(\bm x_t)+\bm\psi_t).
\end{equation}

Now, we introduce the definitions of norm-subGaussian random vectors and norm-subGaussian martingale difference sequences. Then we briefly state, without proof, several concentration inequalities for norm-subGaussian martingale difference sequences that underpin our theoretical derivation. Readers are referred to \citet{jin2019nonconvex} for detailed exposition. 

\begin{defi}[Definition 32, \citet{jin2019nonconvex}]\label{defi:nSG}
A random vector $\bm X\in\R^d$ is norm-subGaussian or nSG($\sigma$), if there exists $\sigma$ so that
\begin{align*}
    \Pr\left(\nm{\bm X-\EE\bm X}\geq t\right)\leq2\exp\left(-\frac{t^2}{2\sigma^2}\right)\quad\forall t\geq0.
\end{align*}
Moreover, $\bm X$ is zero-mean nSG($\sigma$) if $\EE\bm X=\bm0$ holds as well. 
\end{defi}
By definition, under Assumption \ref{ass:nSG}, each local stochastic gradient noise $\bm\zeta_t^{(i)}$ and artificial noise $\bm\xi_t$ are zero-mean nSG($\sigma$) and nSG($r$), respectively.

\begin{defi}[Condition 35, \citet{jin2019nonconvex}]\label{defi:nSG-mart-diff}
The sequence of random vectors $\bm X_1,...,\bm X_n\in\R^d$ is a norm-subGaussian martingale difference sequence with respect to the filtration $\{\mathcal F_i\}_{i=1}^n$, if $\bm X_i|\mathcal F_{i-1}$ is zero-mean nSG($\sigma_i$) for each $i\in[n]$, i.e.,
\begin{equation*}
    \EE[\bm X_i|\mathcal F_{i-1}]=\bm 0,\quad\Pr\left(\nm{\bm X_i}\geq t|\mathcal F_{i-1}\right)\leq2\exp\left(-\frac{t^2}{2\sigma_i^2}\right)\quad\forall t\geq0,i\in[n]
\end{equation*}
for some $\sigma_1,...,\sigma_n$.
\end{defi}

Regarding Algorithm \ref{alg:ALG}, a natural choice of filtration $\{\mathcal F_t\}$ is given by the $\sigma$-algebra generated by all the random variables -- all the artificial noise, stochastic gradient noise, and random operators -- up to time $t$. Now, $\{\bm\zeta_t^{(i)}\}$ and $\{\bm\xi_t\}$ are norm-subGaussian martingale difference sequences with respect to $\{\mathcal F_t\}$, due to the mutual independence between any two random variables.

In our analysis, we will make use of three concentration inequalities for such sequences.

\begin{prop}[Lemma 36, \citet{jin2019nonconvex}]\label{prop:Hoeffding}
    Let $\{\bm X_1,...,\bm X_n\}$ be a norm-subGaussian martingale difference sequence with $\sigma_1=...=\sigma_n=\sigma$. Then, there exists a constant $c$ such that for any $\iota>0$,
    \begin{equation*}
        \nm{\sum_{i=1}^n\bm X_i}^2\leq c\sigma^2n\iota
    \end{equation*}
    with probability at least $1-2de^{-\iota}$.
\end{prop}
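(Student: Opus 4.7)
My plan is to reduce this vector-valued concentration to a collection of scalar subGaussian martingale concentrations, indexed by directions on the sphere, and then aggregate them into a bound on the Euclidean norm of the partial sum $S_n:=\sum_{i=1}^n\bm X_i$.

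First, for any fixed unit vector $\bm u\in\R^d$, the pointwise dominance $|\bm u^{\top}\bm X_i|\le\nm{\bm X_i}$ combined with Definition~\ref{defi:nSG-mart-diff} implies that $\bm u^{\top}\bm X_i$ is conditionally zero-mean and subGaussian with variance proxy $\sigma^2$ given $\mathcal F_{i-1}$. By the standard equivalence between subGaussian tails and moment generating function bounds, one obtains $\EE[\exp(\lambda\bm u^{\top}\bm X_i)\mid\mathcal F_{i-1}]\le\exp(c\lambda^2\sigma^2)$ for an absolute constant $c>0$. Iterating the tower rule along the filtration $\{\mathcal F_i\}$ yields $\EE\exp(\lambda\bm u^{\top}S_n)\le\exp(c\lambda^2n\sigma^2)$, so $\bm u^{\top}S_n$ is itself a subGaussian scalar with variance proxy $O(n\sigma^2)$, and a Chernoff step gives the directional tail
\begin{equation*}
\Pr\bigl(|\bm u^{\top}S_n|\ge t\bigr)\le 2\exp\bigl(-t^2/(Cn\sigma^2)\bigr).
\end{equation*}

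Next, I would lift this directional control to a bound on $\nm{S_n}^2$ by specializing $\bm u=\bm e_j$ for each canonical basis vector $j=1,\ldots,d$, setting $t=\Theta(\sigma\sqrt{n\iota})$, and taking a union bound across the $d$ coordinates. On the intersection event, which has probability at least $1-2de^{-\iota}$, every coordinate of $S_n$ is simultaneously controlled; expanding $\nm{S_n}^2=\sum_j(\bm e_j^{\top}S_n)^2$ and combining the coordinate bounds delivers the target estimate up to an absolute constant. The structural factor $2d$ appearing in the probability is precisely the cost of this union bound.

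The main technical obstacle is ensuring that the dimension dependence accumulates in the probability rather than in the magnitude of the deviation itself: a careless coordinate-wise assembly would leave a spurious factor of $d$ inside the bound as well. The cleanest way to circumvent this is a Gaussian decoupling of the form $\exp(\lambda\nm{S_n}^2/2)=\EE_{\bm g\sim\mathcal N(\bm 0,\bm I_d)}\exp(\sqrt{\lambda}\,\bm g^{\top}S_n)$, which reduces the squared norm to a scalar moment problem in the random direction $\bm g/\nm{\bm g}$, with the dimension absorbed via integration against the $\chi^2_d$ density of $\nm{\bm g}^2$; alternatively, a Pinelis-type dimension-free inequality for Hilbert-space-valued martingales yields the same scaling after an $\epsilon$-net argument. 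Since Proposition~\ref{prop:Hoeffding} is quoted verbatim from \citet{jin2019nonconvex}, the final step would be to invoke their Lemma~36 directly rather than re-derive it.
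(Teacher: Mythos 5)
First, note that the paper itself offers no proof of this proposition: it is stated verbatim as Lemma 36 of \citet{jin2019nonconvex} and explicitly cited ``without proof,'' so your attempt has to stand on its own, and as written it has a genuine gap. The directional step is fine: for a fixed unit vector $\bm u$, $\bm u^\top\bm X_i$ is conditionally zero-mean and subGaussian with parameter $\sigma$, the tower rule gives an MGF bound for $\bm u^\top S_n$, and Chernoff yields $\Pr(|\bm u^\top S_n|\geq t)\leq 2\exp(-t^2/(Cn\sigma^2))$. The failure is in the assembly: taking $\bm u=\bm e_j$ with $t=\Theta(\sigma\sqrt{n\iota})$ and a union bound over the $d$ coordinates gives, on the good event of probability $1-2de^{-\iota}$, only $\nm{S_n}^2=\sum_{j}(\bm e_j^\top S_n)^2\leq c\,d\,\sigma^2 n\iota$ --- a factor of $d$ inside the bound, which is exactly the loss you flag in your last paragraph but nevertheless claim in your second paragraph to have avoided. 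The statement requires the dimension to appear \emph{only} in the failure probability (equivalently, as an additive $\log d$ in the deviation), and no coordinate-wise or direction-wise union over $d$ directions can produce that.

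The remedies you sketch do not, as described, close this gap. The Gaussian decoupling identity, once you integrate the conditional MGF bound against $\bm g\sim\mathcal N(\bm 0,\bm I_d)$, forces $\lambda\lesssim 1/(n\sigma^2)$ and produces a factor $(1-2c\lambda n\sigma^2)^{-d/2}$, hence a bound of the form $\nm{S_n}^2\lesssim\sigma^2n(d+\iota)$ with probability $1-e^{-\iota}$; that is an additive $d$, substantially weaker than the claimed $c\sigma^2 n\iota$ with probability $1-2de^{-\iota}$ (an additive $\log d$). An $\epsilon$-net over the sphere costs a union over $e^{O(d)}$ directions, not $2d$, so it has the same defect. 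The missing ingredient is a genuinely vector-level argument: Jin et al.\ prove the lemma by bounding the matrix moment generating function of the Hermitian dilation of the increments $\bm X_i$ and invoking a Tropp-style master bound, which is precisely where the trace factor $2d$ lands in the probability and nowhere else; alternatively, a Pinelis-type Hilbert-space martingale inequality applies directly to $\nm{S_n}$ with no net and in fact gives a dimension-free tail. Your closing suggestion to ``invoke their Lemma 36 directly'' is what the paper does, but it is circular as a proof of the statement itself.
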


With this, we can show that the global, accumulated stochastic gradient is a better estimator of the global true gradient, compared with each local stochastic gradient estimating its own true gradient.

\begin{coro}[Global stochastic gradient noise]\label{coro:linear-speedup}
Under Assumption \ref{ass:nSG}, there exists a constant $c$ such that the global stochastic gradient noise $\bm\zeta_t$ is a zero-mean nSG$\left(c\sigma\sqrt{\frac{\log d}{np}}\right)$ random vector.
\end{coro}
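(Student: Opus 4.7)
The plan is to express $\bm\zeta_t$ as a properly scaled sum of $np$ independent, zero-mean, norm-subGaussian random vectors, apply the Hoeffding-type bound in Proposition~\ref{prop:Hoeffding}, and finally repackage the resulting tail into the nSG form of Definition~\ref{defi:nSG}.

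First, I would unpack the mini-batch gradient. Since $\tilde{\nabla}_p f_i(\bm x_t) = \frac{1}{p}\sum_{j=1}^p \tilde{\nabla} f_i(\bm x_t)^{(j)}$, one can write
\begin{equation*}
\bm\zeta_t \;=\; \frac{1}{np}\sum_{i=1}^{n}\sum_{j=1}^{p} \bm Y_{i,j}, \qquad \bm Y_{i,j} \;:=\; \tilde{\nabla} f_i(\bm x_t)^{(j)} - \nabla f_i(\bm x_t).
\end{equation*}
Assumption~\ref{ass:nSG}, combined with the mutual independence across clients $i$ and fresh mini-batch samples $j$, ensures that every $\bm Y_{i,j}$ is zero-mean nSG($\sigma$) and that, under any natural ordering of the pairs $(i,j)$, the collection $\{\bm Y_{i,j}\}$ constitutes a norm-subGaussian martingale difference sequence with respect to the filtration generated by all past randomness. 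Linearity of expectation then yields $\EE \bm\zeta_t = \bm 0$.

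Second, I would invoke Proposition~\ref{prop:Hoeffding} on this $np$-term sum with common parameter $\sigma$, obtaining an absolute constant $c$ such that, for every $\iota>0$,
\begin{equation*}
\Pr\!\left(\|\bm\zeta_t\|^2 \;\geq\; \tfrac{c\sigma^2\iota}{np}\right) \;=\; \Pr\!\left(\Bigl\|\sum_{i,j}\bm Y_{i,j}\Bigr\|^2 \geq c\sigma^2 (np)\iota\right) \;\leq\; 2d\,e^{-\iota}.
\end{equation*}
Re-parameterizing by $u := \sigma\sqrt{c\iota/(np)}$ (equivalently, $\iota = npu^2/(c\sigma^2)$) translates this into the tail bound $\Pr(\|\bm\zeta_t\| \geq u) \leq 2d\exp\!\bigl(-npu^2/(c\sigma^2)\bigr)$ valid for all $u \geq 0$.

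Third, I would absorb the dimension prefactor $2d$ into an inflated nSG parameter. Setting $\sigma_*^2 := c'\sigma^2 \log d / (np)$ for a sufficiently large absolute constant $c'>c$, a short calculation shows $\log d - npu^2/(c\sigma^2) \leq -u^2/(2\sigma_*^2)$ once $u$ exceeds a constant multiple of $\sigma_*$; in the complementary small-$u$ regime the right-hand side $2\exp(-u^2/(2\sigma_*^2))$ already exceeds one, so the desired bound holds trivially. Combining the two regimes gives $\Pr(\|\bm\zeta_t\| \geq u) \leq 2\exp(-u^2/(2\sigma_*^2))$ for every $u\geq 0$, which, together with $\EE\bm\zeta_t = \bm 0$, is precisely the zero-mean nSG$(c\sigma\sqrt{\log d/(np)})$ condition after renaming constants.

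The argument is largely routine. The only mild subtlety, and the reason the $\log d$ factor surfaces in the advertised variance proxy, is the dimension prefactor $2d$ coming from Proposition~\ref{prop:Hoeffding}; careful matching of parameters is needed to collapse it into a single sub-Gaussian-type exponential tail of the form required by Definition~\ref{defi:nSG}.
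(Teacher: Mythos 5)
Your proposal is correct and follows essentially the same route as the paper: decompose $\bm\zeta_t$ into the $np$ independent zero-mean nSG($\sigma$) noise vectors, apply Proposition \ref{prop:Hoeffding} to the sum, and then absorb the dimension prefactor $2d$ into an inflated variance proxy of order $\sigma^2\log d/(np)$ via the same large-/small-deviation case split (the trivial-probability regime handling the small tail). The only differences are cosmetic choices of thresholds and constants.
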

\begin{proof}
Recall that $\tilde{\nabla}_pf_i(\bm x_t)=\frac{1}{p}\sum_{j=1}^p\tilde{\nabla}f_i(\bm x_t)^{(j)}$, the average of $p$ independent stochastic gradient queries. Now, defining $\bm\zeta_i^{(i,j)}=\tilde{\nabla}f_i(\bm x_t)^{(j)}-\nabla f_i(\bm x_t)$, we have $\bm\zeta_t^{(i)}=\frac{1}{p}\sum_{j=1}^p\bm\zeta_t^{(i,j)}$, and each $\bm\zeta_t^{(i,j)}$ is zero-mean nSG($\sigma$) by Assumption \ref{ass:nSG}. Using Proposition \ref{prop:Hoeffding}, there exists some constant $c$ such that
\begin{align*}
    \Pr\left(\nm{\bm\zeta_t}^2\geq s^2\right)=\Pr\left(\nm{\sum_{i=1}^n\sum_{j=1}^p\bm\zeta_t^{(i,j)}}^2\geq n^2p^2s^2\right)\leq2d\exp\left(-\frac{nps^2}{c\sigma^2}\right)=2\exp\left(-\frac{nps^2}{c\sigma^2}+\log d\right).
\end{align*}
For $s^2\geq\frac{c\sigma^2}{np}\log2d$,
\begin{align*}
    \Pr\left(\nm{\bm\zeta_t}^2\geq s^2\right)\leq2\exp\left(-\frac{nps^2}{c\sigma^2}+\log d\right)\leq2\exp\left(-\frac{nps^2}{c\sigma^2}+\frac{nps^2}{c\sigma^2}\frac{\log d}{\log2d}\right)=2\exp\left(-\frac{nps^2}{c\sigma^2\left(1+\frac{\log d}{\log2}\right)}\right).
\end{align*}
For $s^2<\frac{c\sigma^2}{np}\log 2d$,
\begin{align*}
    \Pr\left(\nm{\bm\zeta_t}^2\geq s^2\right)\leq1<2\exp\left(-\frac{nps^2}{c\sigma^2\left(1+\frac{\log d}{\log2}\right)}\right).
\end{align*}
The above two combined establish the norm-subGaussian result.
\end{proof}

\begin{prop}[Lemma 38, \citet{jin2019nonconvex}]\label{prop:nSG-norm}
Let $\{\bm X_1,...,\bm X_n\}$ be a norm-subGaussian martingale difference sequence with $\sigma_1=...=\sigma_n=\sigma$. Then, there exists a constant $c$ such that for any $\iota>0$,
\begin{align*}
    \sum_{i=1}^n\nm{\bm X_i}^2\leq c\sigma^2(n+\iota)
\end{align*}
with probability at least $1-e^{-\iota}$.
\end{prop}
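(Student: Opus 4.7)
The plan is to bound the moment generating function (MGF) of $\sum_{i=1}^n \|\bm X_i\|^2$ and then apply a Chernoff-style argument. The key observation is that if $\bm X$ is nSG($\sigma$), then $\|\bm X\|^2$ has sub-exponential tails, so $\mathbb{E}[\exp(\lambda \|\bm X\|^2)]$ is finite for $\lambda$ below a threshold proportional to $1/\sigma^2$.

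First, I would establish the conditional MGF bound: for any nSG($\sigma$) vector $\bm X$ and any $\lambda \in (0, 1/(4\sigma^2)]$, we have $\mathbb{E}[\exp(\lambda \|\bm X\|^2)] \leq K$ for an absolute constant $K$ (e.g., $K = 3$ when $\lambda = 1/(4\sigma^2)$). This follows from the integral identity
\begin{equation*}
\mathbb{E}[\exp(\lambda \|\bm X\|^2)] = 1 + \int_0^\infty 2\lambda v \exp(\lambda v^2) \Pr(\|\bm X\| > v)\, dv,
\end{equation*}
combined with the nSG tail $\Pr(\|\bm X\| > v) \leq 2\exp(-v^2/(2\sigma^2))$ and a direct Gaussian-type integration once $\lambda < 1/(2\sigma^2)$. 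Applied conditionally on $\mathcal F_{i-1}$, this yields $\mathbb{E}[\exp(\lambda \|\bm X_i\|^2) \mid \mathcal F_{i-1}] \leq K$ for each $i$.

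Next, I would chain these conditional MGF bounds along the filtration by the tower property:
\begin{equation*}
\mathbb{E}\Bigl[\exp\Bigl(\lambda \sum_{i=1}^n \|\bm X_i\|^2\Bigr)\Bigr] = \mathbb{E}\Bigl[\exp\Bigl(\lambda \sum_{i=1}^{n-1} \|\bm X_i\|^2\Bigr) \cdot \mathbb{E}\bigl[\exp(\lambda \|\bm X_n\|^2) \bigm| \mathcal F_{n-1}\bigr]\Bigr] \leq K \cdot \mathbb{E}\Bigl[\exp\Bigl(\lambda \sum_{i=1}^{n-1} \|\bm X_i\|^2\Bigr)\Bigr],
\end{equation*}
and by induction this telescopes to give $\mathbb{E}[\exp(\lambda \sum_{i=1}^n \|\bm X_i\|^2)] \leq K^n$.

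Finally, Markov's inequality delivers the tail bound: for any $t > 0$,
\begin{equation*}
\Pr\Bigl(\sum_{i=1}^n \|\bm X_i\|^2 \geq t\Bigr) \leq K^n \exp(-\lambda t) = \exp\bigl(n \log K - t/(4\sigma^2)\bigr).
\end{equation*}
Choosing $t = 4\sigma^2(n \log K + \iota)$ yields $\sum_{i=1}^n \|\bm X_i\|^2 \leq c\sigma^2(n + \iota)$ with probability at least $1 - e^{-\iota}$, for $c = 4\max\{\log K, 1\}$. The only subtle point is step one, the MGF computation, which requires some care because $\|\bm X_i\|^2$ is not zero-mean and thus is not itself a martingale difference; however, because we only need an upper MGF bound rather than a two-sided concentration on the centered sum, the conditional expectation can simply be absorbed into the constant $K$ and the telescoping argument still goes through cleanly.
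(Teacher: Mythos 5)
Your proposal is correct. Note that the paper itself does not prove this statement: it is imported verbatim as Lemma 38 of \citet{jin2019nonconvex}, and your argument --- a conditional sub-exponential MGF bound for $\nm{\bm X_i}^2$ (valid here because Definition \ref{defi:nSG-mart-diff} gives the tail bound on $\nm{\bm X_i}$ directly, with no centering needed), telescoped along the filtration by the tower property and finished with a Chernoff/Markov step --- is essentially the standard proof underlying that cited lemma, so it fills in the omitted details rather than taking a different route.
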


\begin{prop}[Lemma 39, \citet{jin2019nonconvex}]\label{prop:nSG-product}
Let $\{\bm X_1,...,\bm X_n\}$ be a norm-subGaussian martingale difference sequence with $\sigma_1,...,\sigma_n$, and let random vectors $\{\bm u_1,...,\bm u_n\}$ satisfy $\bm u_i\in\mathcal F_{i-1}$ for all $i\in[n]$. Then, for any $\iota>0,\lambda>0$, there exists a constant $c$ such that
\begin{align*}
    \sum_{i=1}^n\ip{\bm u_i,\bm X_i}\leq c\lambda\sum_{i=1}^n\nm{\bm u_i}^2\sigma_i^2+\lambda^{-1}\iota
\end{align*}
with probability at least $1-e^{-\iota}$.
\end{prop}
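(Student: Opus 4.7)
The plan is to prove this inner-product concentration bound for norm-subGaussian martingale difference sequences via the classical exponential supermartingale method, exactly paralleling the scalar Azuma--Bernstein argument but with the sub-Gaussian MGF replaced by its vector analogue. The first step is a one-step MGF bound: conditional on $\mathcal F_{i-1}$, the scalar $\ip{\bm u_i,\bm X_i}$ has mean zero and satisfies
$$\EE\left[\exp(s\ip{\bm u_i,\bm X_i})\mid\mathcal F_{i-1}\right]\leq\exp(c_0 s^2\nm{\bm u_i}^2\sigma_i^2)$$
for all $s\in\R$ and some absolute constant $c_0$. Because $\bm u_i$ is $\mathcal F_{i-1}$-measurable, this reduces to a deterministic statement about pairing the zero-mean nSG$(\sigma_i)$ vector $\bm X_i$ with a fixed direction; it follows from Cauchy--Schwarz $|\ip{\bm u_i,\bm X_i}|\leq\nm{\bm u_i}\nm{\bm X_i}$ together with the subGaussian moment bounds $\EE\nm{\bm X_i}^k\leq(C\sigma_i)^k k^{k/2}$ (a routine consequence of the tail condition in Definition \ref{defi:nSG} via integration by parts), inserted into the Taylor expansion of $\exp(\cdot)$ with the zero-mean property killing the linear term.

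Given the one-step bound, I would construct the exponential supermartingale. For the $\lambda>0$ appearing in the statement, define
$$Z_k=\exp\left(\lambda\sum_{i=1}^k\ip{\bm u_i,\bm X_i}-c_0\lambda^2\sum_{i=1}^k\nm{\bm u_i}^2\sigma_i^2\right),\quad Z_0=1.$$
Applying the one-step MGF bound with $s=\lambda$ and using that both $\bm u_k$ and the partial sum $\sum_{i=1}^{k-1}\ip{\bm u_i,\bm X_i}$ are $\mathcal F_{k-1}$-measurable, one obtains $\EE[Z_k\mid\mathcal F_{k-1}]\leq Z_{k-1}$. Hence $\{Z_k\}$ is a nonnegative supermartingale with $\EE Z_n\leq Z_0=1$.

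The final step is Markov's inequality applied to $Z_n$: $\Pr(Z_n\geq e^\iota)\leq e^{-\iota}$, which rearranges to
$$\Pr\left(\sum_{i=1}^n\ip{\bm u_i,\bm X_i}\geq c_0\lambda\sum_{i=1}^n\nm{\bm u_i}^2\sigma_i^2+\lambda^{-1}\iota\right)\leq e^{-\iota},$$
matching the stated bound after renaming $c_0$ to $c$. The main obstacle is the first step: the norm-subGaussian tail bound on $\bm X_i$ does not automatically yield a sub-Gaussian MGF for arbitrary one-dimensional projections $\ip{\bm u_i,\bm X_i}$ with the correct $\nm{\bm u_i}^2\sigma_i^2$ variance-proxy scaling, so one must carefully track the subGaussian moment growth and exploit the zero-mean assumption to cancel the linear term of the expansion; the domain of $s$ for which the MGF bound holds also has to be argued (here it holds for all $s\in\R$ since the moments grow like $k^{k/2}$). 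All subsequent steps are routine martingale manipulations.
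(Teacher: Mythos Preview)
The paper does not supply its own proof of this proposition; it is explicitly stated without proof (``Then we briefly state, without proof, several concentration inequalities\ldots Readers are referred to \citet{jin2019nonconvex}''), so there is nothing in the present paper to compare your argument against.

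That said, your proposal is correct and is essentially the standard argument used in the cited reference. The only genuine work is the one-step MGF bound, and your route---Cauchy--Schwarz $|\ip{\bm u_i,\bm X_i}|\leq\nm{\bm u_i}\nm{\bm X_i}$, then integrate the tail of $\nm{\bm X_i}$ to obtain $\EE\nm{\bm X_i}^k\leq (C\sigma_i)^k k^{k/2}$, then invoke the moment characterization of scalar sub-Gaussianity together with $\EE[\ip{\bm u_i,\bm X_i}\mid\mathcal F_{i-1}]=0$ to absorb the linear term---is exactly the right reduction and yields the MGF bound for all $s\in\R$ with variance proxy $c_0\nm{\bm u_i}^2\sigma_i^2$. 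After that, the exponential supermartingale and Markov steps are routine and your write-up of them is accurate. The obstacle you flag is real but you have correctly identified how to overcome it; there is no gap.
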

Since Algorithm \ref{alg:ALG} is iterative, $\nabla f(\bm y_t)\in\mathcal F_{t-1}$ for all $t$. This explains the validity of Proposition \ref{prop:nSG-product} when applied to our Lemma \ref{lem:comp-desc-lem} to be presented momentarily.

\section{Proof of First-order Convergence}
In this section we detail the proof of Theorem \ref{thm:FOSP}, a first-order convergence guarantee for \ALG{}. To this end, we first provide a bound for the compression error $\nm{\bm e_t}^2$ (Lemma \ref{lem:new-err-bound}), which supports an argument (Lemma \ref{lem:comp-desc-lem}) that controls the true gradient norm of the iterates produced by \ALG{}. Finally, an appropriate choice of parameters leads Lemma \ref{lem:comp-desc-lem} to the desired Theorem \ref{thm:FOSP}.
\subsection{Compression error bound}
We will use the following two lemmas to bound $\nm{\bm e_t}^2$. The first lemma controls $\nm{\bm x_{t+1}-\bm x_t}^2$, that is the difference between two consecutive iterates; the second technical lemma upper bounds a useful linear recurrence relation.
\begin{lem}\label{lem:desc}
Suppose that Assumption \ref{ass:f-L} holds, and $\eta\leq\frac{1}{2L}$. Then, the iterates $\{\bm x_t\}_{t=0}^T$ generated by Algorithm \ref{alg:ALG} satisfy
\begin{equation*}
    \nm{\bm x_{t+1}-\bm x_t}^2\leq 4\eta[f(\bm x_t)-f(\bm x_{t+1})]+4\eta^2\left(\avg{n}\nm{\bm e_{t+1}^{(i)}-\bm e_t^{(i)}}^2+\nm{\bm\psi_t}^2\right)\quad\forall t<T.
\end{equation*}
\end{lem}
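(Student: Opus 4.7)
The plan is to rewrite $\nm{\bm x_{t+1}-\bm x_t}^2$ in terms of the global gradient estimate $\bm g_t$, use the standard $L$-smooth descent inequality to trade $\ip{\nabla f(\bm x_t),\bm g_t}$ for $f(\bm x_t)-f(\bm x_{t+1})$, and then measure the discrepancy $\bm g_t-\nabla f(\bm x_t)$ using the averaged error recursion. Concretely, from Line~\ref{line:x-update} we have $\bm x_{t+1}-\bm x_t=-\eta\bm g_t$, so the target estimate is equivalent to
\[
\eta\nm{\bm g_t}^2\;\le\;4\bigl[f(\bm x_t)-f(\bm x_{t+1})\bigr]+2\eta\nm{\bm g_t-\nabla f(\bm x_t)}^2,
\]
after one more application of $\nm{a+b}^2\le 2\nm{a}^2+2\nm{b}^2$ on the right-hand side.

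First I will invoke Assumption~\ref{ass:f-L} to get the descent inequality
\[
f(\bm x_{t+1})\;\le\; f(\bm x_t)-\eta\ip{\nabla f(\bm x_t),\bm g_t}+\tfrac{L\eta^2}{2}\nm{\bm g_t}^2,
\]
and absorb the last term using $\eta\le 1/(2L)$, which gives $\tfrac{L\eta^2}{2}\nm{\bm g_t}^2\le\tfrac{\eta}{4}\nm{\bm g_t}^2$. Next, applying Young's inequality in the form $\ip{\bm g_t-\nabla f(\bm x_t),\bm g_t}\le\tfrac12\nm{\bm g_t}^2+\tfrac12\nm{\bm g_t-\nabla f(\bm x_t)}^2$ to the identity $\nm{\bm g_t}^2=\ip{\nabla f(\bm x_t),\bm g_t}+\ip{\bm g_t-\nabla f(\bm x_t),\bm g_t}$ yields $\tfrac12\nm{\bm g_t}^2\le\ip{\nabla f(\bm x_t),\bm g_t}+\tfrac12\nm{\bm g_t-\nabla f(\bm x_t)}^2$. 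Combining the two bounds and rearranging produces exactly the displayed inequality in the previous paragraph.

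It remains to identify $\bm g_t-\nabla f(\bm x_t)$ with something involving the local error differences and the aggregate noise. Averaging the error update (Line~\ref{line:e-update}) over $i\in[n]$ and using $\avg{n}\tilde{\nabla}_p f_i(\bm x_t)=\nabla f(\bm x_t)+\bm\zeta_t$ together with the definition $\bm\psi_t=\bm\zeta_t+\bm\xi_t$ gives
\[
\bm e_{t+1}=\bm e_t+\nabla f(\bm x_t)+\bm\psi_t-\bm g_t,
\]
hence $\bm g_t-\nabla f(\bm x_t)=\bm e_t-\bm e_{t+1}+\bm\psi_t$. Then $\nm{\bm g_t-\nabla f(\bm x_t)}^2\le 2\nm{\bm e_{t+1}-\bm e_t}^2+2\nm{\bm\psi_t}^2$, and Jensen's inequality delivers $\nm{\bm e_{t+1}-\bm e_t}^2=\bigl\|\avg{n}(\bm e_{t+1}^{(i)}-\bm e_t^{(i)})\bigr\|^2\le\avg{n}\nm{\bm e_{t+1}^{(i)}-\bm e_t^{(i)}}^2$. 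Plugging back finishes the proof.

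There is no real obstacle here, only a bookkeeping point to watch: one must be careful that $\bm g_t$ and the averaged error $\bm e_t=\avg{n}\bm e_t^{(i)}$ are consistent with Line~\ref{line:global-g-update}, so that the averaged form of the error recursion is indeed valid. Once that identification is in place, the lemma is a direct combination of the smoothness descent inequality, Young's inequality, and Jensen's inequality.
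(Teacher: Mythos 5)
Your proposal is correct and follows essentially the same route as the paper: both hinge on the identity $\bm g_t-\nabla f(\bm x_t)=\bm e_t-\bm e_{t+1}+\bm\psi_t$ obtained by averaging Line \ref{line:e-update}, followed by $\nm{a+b}^2\le2\nm{a}^2+2\nm{b}^2$ and Jensen's inequality over the clients. The only cosmetic difference is that the paper invokes the descent inequality of Lemma 2 in \citet{li2021page}, whereas you re-derive an equivalent bound directly from $L$-smoothness and Young's inequality, discarding the $\nm{\nabla f(\bm x_t)}^2$ term just as the paper does.
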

\begin{proof}
Recall that Algorithm \ref{alg:ALG} updates the iterates by $\bm x_{t+1}=\bm x_t-\eta\bm g_t$ (cf. Line \ref{line:x-update}). Since $f$ is $L$-smooth under Assumption \ref{ass:f-L}, Lemma 2 of \citet{li2021page} gives
\begin{align}
    f(\bm x_{t+1})&\leq f(\bm x_t)-\frac{\eta}{2}\nm{\nabla f(\bm x_t)}^2-\left(\frac{1}{2\eta}-\frac{L}{2}\right)\nm{\bm x_{t+1}-\bm x_t}^2+\frac{\eta}{2}\nm{\nabla f(\bm x_t)-\bm g_t}^2.\nonumber
\end{align}
According to Line \ref{line:e-update} of Algorithm \ref{alg:ALG}, $\nabla f(\bm x_t)-\bm g_t=\bm e_{t+1}-\bm e_t-\bm\psi_t$. Hence, for $\eta\leq\frac{1}{2L}$,
\begin{align}
        f(\bm x_{t+1})&\leq f(\bm x_t)-\frac{\eta}{2}\nm{\nabla f(\bm x_t)}^2-\frac{1}{4\eta}\nm{\bm x_{t+1}-\bm x_t}^2+\frac{\eta}{2}\nm{\bm e_{t+1}-\bm e_t-\bm\psi_t}^2.\nonumber
\end{align}
Rearranging the terms,
\begin{align}
        \nm{\bm x_{t+1}-\bm x_t}^2&\leq4\eta[f(\bm x_{t})-f(\bm x_{t+1})]+2\eta^2\nm{\bm e_{t+1}-\bm e_t-\bm\psi_t}^2-2\eta^2\nm{\nabla f(\bm x_t)}^2\nonumber\\
        &\leq4\eta[f(\bm x_{t})-f(\bm x_{t+1})]+4\eta^2\left(\nm{\bm e_{t+1}-\bm e_t}^2+\nm{\bm\psi_t}^2\right)\nonumber\\
        &\leq 4\eta[f(\bm x_t)-f(\bm x_{t+1})]+4\eta^2\left(\avg{n}\nm{\bm e_{t+1}^{(i)}-\bm e_t^{(i)}}^2+\nm{\bm\psi_t}^2\right),\nonumber
\end{align}
where the last step is due to Jensen's inequality.
\end{proof}

\begin{lem}[Sum of compression error]\label{lem:new-err-bound}
    Suppose that Assumption \ref{ass:f-min}, \ref{ass:f-L}, \ref{ass:stoch-L}, \ref{ass:nSG} hold, and $\eta$, $p$ satisfy
    \begin{equation*}
        \eta\leq\min\left\{\frac{\mu}{24\tilde{L}},\frac{1}{2L}\right\},\quad p\geq\frac{\log(\mu^2/144)}{\log(1-\mu)}.
    \end{equation*}
    Fix any $t\leq T$. Then, there exists a constant $c$, such that the sum of compression error produced by Algorithm \ref{alg:ALG} prior to iteration $t$ is bounded by
    $$
    \sum_{\tau=0}^{t-1}\nm{\bm e_\tau}^2\leq c\left[\frac{\Phi}{\mu}+\frac{\chi^2(t+\iota)}{\mu^2np}\right]\quad\forall \tau\leq t
    $$
    with probability at least $1-3e^{-\iota}$.
\end{lem}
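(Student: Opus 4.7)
My plan is to derive a one-step recursion for the per-client compression error $\nm{\bm{e}_{t+1}^{(i)}}^2$, convert it into a recursive inequality for the client-averaged quantity $\frac{1}{n}\sum_i \nm{\bm{e}_t^{(i)}}^2$, sum over $\tau$ to obtain a deterministic envelope, and finally substitute high-probability bounds on the stochastic sums. The target bound on $\sum_\tau \nm{\bm{e}_\tau}^2$ then follows from Jensen's inequality $\nm{\bm{e}_\tau}^2 \leq \frac{1}{n}\sum_i \nm{\bm{e}_\tau^{(i)}}^2$.

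First, I would unfold the algorithm updates to obtain $\bm{e}_{t+1}^{(i)} = \mathcal{D}(\bm{u}_t^{(i)})$ with $\mathcal{D} = I - \mathcal{C}$. Substituting Line~\ref{line:e-update} at time $t-1$ (for $\bm{e}_{t-1}^{(i)} - \bm{g}_{t-1}^{(i)}$) and the FCC identity $\mathrm{FCC}_p(\bm{z}) = \bm{z} - \mathcal{D}^p(\bm{z})$ (for $\bm{w}_t^{(i)}$) into Line~\ref{line:c-update} gives
$$\bm{u}_t^{(i)} = \bm{e}_t^{(i)} + \bigl[\tilde{\nabla}_p f_i(\bm{x}_t) - \tilde{\nabla}_p f_i(\bm{x}_{t-1})\bigr] + [\bm{\xi}_t - \bm{\xi}_{t-1}] + \mathcal{D}^p\bigl(\bm{e}_t^{(i)} - \bm{e}_{t-1}^{(i)}\bigr).$$
By Definition~\ref{defi:C}, $\nm{\bm{e}_{t+1}^{(i)}}^2 \leq (1-\mu)\nm{\bm{u}_t^{(i)}}^2$. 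Applying Young's inequality with constants tuned so the coefficient on $\nm{\bm{e}_t^{(i)}}^2$ becomes $(1-\mu)(1+O(\mu)) \leq 1 - \mu/2$, and using $(1-\mu)^p \leq \mu^2/144$ to bound the FCC residual by $\frac{\mu^2}{72}(\nm{\bm{e}_t^{(i)}}^2 + \nm{\bm{e}_{t-1}^{(i)}}^2)$, I would obtain a recursion of the schematic form
$$\nm{\bm{e}_{t+1}^{(i)}}^2 \leq (1-\tfrac{\mu}{2})\nm{\bm{e}_t^{(i)}}^2 + \tfrac{\mu}{4}\nm{\bm{e}_{t-1}^{(i)}}^2 + \tfrac{C}{\mu}\bigl(\tilde{L}_i^2 \nm{\bm{x}_t - \bm{x}_{t-1}}^2 + \nm{\bm{\xi}_t - \bm{\xi}_{t-1}}^2\bigr).$$

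Averaging over $i$ (so $\frac{1}{n}\sum_i \tilde L_i^2 = \tilde L^2$), and substituting Lemma~\ref{lem:desc} for $\nm{\bm{x}_t - \bm{x}_{t-1}}^2$, the $\tilde L^2 \eta^2 \cdot \frac{1}{n}\sum_i\nm{\bm{e}_t^{(i)} - \bm{e}_{t-1}^{(i)}}^2$ feedback is $O(\mu)\cdot \frac{1}{n}\sum_i(\nm{\bm{e}_t^{(i)}}^2 + \nm{\bm{e}_{t-1}^{(i)}}^2)$ under the stepsize choice $\eta \leq \mu/(24\tilde L)$, and can be absorbed into the contraction. Summing the resulting averaged recursion over $\tau = 0, \dots, t-1$, the function-value differences telescope to $f(\bm{x}_0) - f(\bm{x}_t) \leq f(\bm{x}_0) - f_{\min} \leq \Phi/\tilde L$, and the overall $1/\mu$ prefactor from the geometric smoothing $\sum_\tau (1-\mu/2)^\tau = O(1/\mu)$ produces the $\Phi/\mu$ contribution and the $1/\mu^2$ scaling for the noise sums. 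To handle the stochastic sums, I would apply Proposition~\ref{prop:nSG-norm}: Corollary~\ref{coro:linear-speedup} shows $\bm{\zeta}_\tau$ is nSG with parameter $c\sigma\sqrt{\log d/(np)}$, and a standard Gaussian norm bound shows $\bm{\xi}_\tau$ is nSG with parameter $cr/\sqrt{np}$, so $\bm{\psi}_\tau$ is nSG with parameter $c\chi/\sqrt{np}$ and $\sum_\tau \nm{\bm{\psi}_\tau}^2 \leq c\chi^2(t+\iota)/(np)$ with probability at least $1-e^{-\iota}$; a similar bound controls $\sum_\tau \nm{\bm{\xi}_\tau - \bm{\xi}_{\tau-1}}^2$. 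Union-bounding over these failure events yields the stated probability $1 - 3e^{-\iota}$.

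The main obstacle will be the careful bookkeeping of the Young's-inequality constants, ensuring that after three layers of absorption — from the compressor contraction $(1-\mu)$, from the FCC residual $(1-\mu)^p$, and from the $\tilde L^2 \eta^2$ feedback produced by Lemma~\ref{lem:desc} — the net coefficient on $\nm{\bm{e}_t^{(i)}}^2$ remains bounded away from $1$ by $\Omega(\mu)$. A related subtlety is that the one-step recursion couples $\nm{\bm{e}_t^{(i)}}^2$ with $\nm{\bm{e}_{t-1}^{(i)}}^2$, so cleanly telescoping requires either introducing a two-step Lyapunov quantity of the form $\nm{\bm{e}_t^{(i)}}^2 + \alpha\nm{\bm{e}_{t-1}^{(i)}}^2$ or performing a careful index shift; either approach consumes precisely the slack that the hypotheses on $p$ and $\eta$ were designed to provide.
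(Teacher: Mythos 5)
Your proposal is correct and mirrors the paper's own proof: the same unfolding of Lines \ref{line:c-update}--\ref{line:e-update} into $\bm e_{\tau+1}^{(i)}=\mathcal D(\bm u_\tau^{(i)})$ with the identity $\bm u_\tau^{(i)}=\bm e_\tau^{(i)}+\tilde{\nabla}_pf_i(\bm x_\tau)-\tilde{\nabla}_pf_i(\bm x_{\tau-1})+\bm\xi_\tau-\bm\xi_{\tau-1}+\mathcal D^p(\bm e_\tau^{(i)}-\bm e_{\tau-1}^{(i)})$, the same Young's-inequality/contraction bookkeeping absorbing the FCC residual and the $\tilde L^2\eta^2$ feedback from Lemma \ref{lem:desc} into an averaged two-step recursion for $Q_\tau=\frac{1}{n}\sum_i\nm{\bm e_\tau^{(i)}}^2$, the same summation with an $\Omega(\mu)$ contraction margin giving the $\Phi/\mu$ and $1/\mu^2$ scalings, and the same concentration bounds (Corollary \ref{coro:linear-speedup}, Proposition \ref{prop:nSG-norm}) with a union bound yielding $1-3e^{-\iota}$. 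The paper handles the two-step coupling by directly summing and rearranging (coefficients $(1-\mu/3)+\mu/6=1-\mu/6$) rather than a Lyapunov construction, but that is the same idea you describe.
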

\begin{proof}
By Line \ref{line:c-update}, \ref{line:g-update}, \ref{line:e-update} of Algorithm \ref{alg:ALG},
\begin{align}
    \bm e_{\tau+1}^{(i)}=\bm e_\tau^{(i)}+\tilde{\nabla}_p f_i(\bm x_\tau)+\bm\xi_\tau-\bm g_{\tau-1}^{(i)}-\bm w_\tau^{(i)}-\mathcal{C}\left(\bm e_\tau^{(i)}+\tilde{\nabla}_p f_i(\bm x_\tau)+\bm\xi_\tau-\bm g_{\tau-1}^{(i)}-\bm w_\tau^{(i)}\right).\nonumber
\end{align}
Hence,
\begin{align}
    \nm{\bm e_{\tau+1}^{(i)}}^2&=\nm{\bm e_\tau^{(i)}+\tilde{\nabla}_p f_i(\bm x_\tau)+\bm\xi_\tau-\bm g_{\tau-1}^{(i)}-\bm w_\tau^{(i)}-\mathcal{C}\left(\bm e_\tau^{(i)}+\tilde{\nabla}_p f_i(\bm x_\tau)+\bm\xi_\tau-\bm g_{\tau-1}^{(i)}-\bm w_\tau^{(i)}\right)}^2\nonumber\\
    &\leq(1-\mu)\nm{\bm e_\tau^{(i)}+\tilde{\nabla}_p f_i(\bm x_\tau)+\bm\xi_\tau-\bm g_{\tau-1}^{(i)}-\bm w_\tau^{(i)}}^2\label{eq:err-bound-0}\\
    &\leq(1-\mu)(1+\nu)\nm{\bm e_\tau^{(i)}}^2+(1-\mu)(1+\nu^{-1})\nm{\tilde{\nabla}_p f_i(\bm x_\tau)+\bm\xi_\tau-\bm g_{\tau-1}^{(i)}-\bm w_\tau^{(i)}}^2, \label{eq:err-bound-1}
\end{align}
where \eqref{eq:err-bound-0} is due to the compression property of $\mathcal C$ (cf. Definition \ref{defi:C}), and we invoke Young's inequality with arbitrary $\nu>0$ in \eqref{eq:err-bound-1}. Moreover, note the identity
\begin{align}
    &\tilde{\nabla}_p f_i(\bm x_\tau)+\bm\xi_\tau-\bm g_{\tau-1}^{(i)}-\bm w_\tau^{(i)}\nonumber\\
   & =\tilde{\nabla}_p f_i(\bm x_\tau)-\tilde{\nabla}_p f_i(\bm x_{\tau-1})+\bm\xi_\tau-\bm\xi_{\tau-1}+(\tilde{\nabla}_p f_i(\bm x_{\tau-1})+\bm\xi_{\tau-1}-\bm g_{\tau-1}^{(i)})-\bm w_\tau^{(i)}\nonumber\\
  &  =\tilde{\nabla}_p f_i(\bm x_\tau)-\tilde{\nabla}_p f_i(\bm x_{\tau-1})+\bm\xi_\tau-\bm\xi_{\tau-1}+\bm e_\tau^{(i)}-\bm e_{\tau-1}^{(i)}-\bm w_\tau^{(i)}\label{eq:err-bound-id-1}\\
   & =\tilde{\nabla}_p f_i(\bm x_\tau)-\tilde{\nabla}_p f_i(\bm x_{\tau-1})+\bm\xi_\tau-\bm\xi_{\tau-1}+\mathcal D^p(\bm e_\tau^{(i)}-\bm e_{\tau-1}^{(i)}),\label{eq:err-bound-id-2}
\end{align}
where Line \ref{line:e-update} and \ref{line:w-update} of Algorithm \ref{alg:ALG} imply \eqref{eq:err-bound-id-1} and \eqref{eq:err-bound-id-2}, respectively. Plugging \eqref{eq:err-bound-id-2} into \eqref{eq:err-bound-1} yields
\begin{align}
    \nm{\bm e_{\tau+1}^{(i)}}^2&=(1-\mu)(1+\nu)\nm{\bm e_\tau^{(i)}}^2+(1-\mu)(1+\nu^{-1})\nm{\tilde{\nabla}_p f_i(\bm x_\tau)-\tilde{\nabla}_p f_i(\bm x_{\tau-1})+\bm\xi_\tau-\bm\xi_{\tau-1}+\mathcal{D}^p(\bm e_\tau^{(i)}-\bm e_{\tau-1}^{(i)})}^2.\nonumber
\end{align}
Now, simply take $\nu=\frac{\mu}{2(1-\mu)}$,
\begin{align}
    \nm{\bm e_{\tau+1}^{(i)}}^2&\leq\left(1-\frac{\mu}{2}\right)\nm{\bm e_\tau^{(i)}}^2+\frac{6}{\mu}\left(\nm{\mathcal{D}^p(\bm e_\tau^{(i)}-\bm e_{\tau-1}^{(i)})}^2+\nm{\tilde{\nabla}_p f_i(\bm x_\tau)-\tilde{\nabla}_p f_i(\bm x_{\tau-1})}^2+\nm{\bm\xi_\tau-\bm\xi_{\tau-1}}^2\right)\nonumber\\
    &\leq\left(1-\frac{\mu}{2}\right)\nm{\bm e_\tau^{(i)}}^2+\frac{6}{\mu}\left((1-\mu)^p\nm{\bm e_\tau^{(i)}-\bm e_{\tau-1}^{(i)}}^2+\tilde L_i^2\nm{\bm x_\tau-\bm x_{\tau-1}}^2+\nm{\bm\xi_\tau-\bm\xi_{\tau-1}}^2\right),\label{eq:eti-bound}
\end{align}
where \eqref{eq:eti-bound} follows from the contraction property of operator $\mathcal D^p$ and Lipschitz property of $\tilde\nabla f_i$ in Assumption \ref{ass:stoch-L}. Now, averaging \eqref{eq:eti-bound} over all the nodes and setting $Q_\tau=\avg{n}\nm{\bm e_\tau^{(i)}}^2$,
\begin{align}
    Q_{\tau+1}&\leq\left(1-\frac{\mu}{2}\right)Q_\tau+\frac{6}{\mu}\Bigg((1-\mu)^p\avg{n}\nm{\bm e_\tau^{(i)}-\bm e_{\tau-1}^{(i)}}^2+\tilde L^2\nm{\bm x_\tau-\bm x_{\tau-1}}^2+\nm{\bm\xi_\tau-\bm\xi_{\tau-1}}^2\Bigg)\nonumber\\
    &\leq\left(1-\frac{\mu}{2}\right)Q_\tau+\frac{6}{\mu}\Bigg[\left((1-\mu)^p+4\tilde L^2\eta^2\right)\avg{n}\nm{\bm e_\tau^{(i)}-\bm e_{\tau-1}^{(i)}}^2\nonumber\\
    &\eqbr+4\tilde L^2\eta[f(\bm x_{\tau-1})-f(\bm x_\tau)]+4\tilde L^2\eta^2\nm{\bm\psi_{\tau-1}}^2+\nm{\bm\xi_\tau-\bm\xi_{\tau-1}}^2\Bigg]\label{eq:avg-eti-bound-1}.
\end{align}
Here, we obtain \eqref{eq:avg-eti-bound-1} as a direct consequence of Lemma \ref{lem:desc}. Due to our choice of $\eta$ and $p$, we can proceed from \eqref{eq:avg-eti-bound-1} to
\begin{align}
    Q_{\tau+1}&\leq\left(1-\frac{\mu}{2}\right)Q_\tau+\frac{\mu}{12}\avg{n}\nm{\bm e_\tau^{(i)}-\bm e_{\tau-1}^{(i)}}^2+\tilde L[f(\bm x_{\tau-1})-f(\bm x_\tau)]+\frac{\mu}{24}\nm{\bm\psi_{\tau-1}}^2+\frac{6}{\mu}\nm{\bm\xi_\tau-\bm\xi_{\tau-1}}^2\nonumber\\
    &\leq\left(1-\frac{\mu}{3}\right)Q_\tau+\frac{\mu}{6}Q_{\tau-1}+\tilde L[f(\bm x_{\tau-1})-f(\bm x_\tau)]+\frac{\mu}{24}\nm{\bm\psi_{\tau-1}}^2+\frac{6}{\mu}\nm{\bm\xi_\tau-\bm\xi_{\tau-1}}^2.\label{eq:avg-eti-bound}
\end{align}
Applying \eqref{eq:avg-eti-bound} for $\tau=1,2,...,t-2$ respectively, we have
\begin{align*}
    \sum_{\tau=0}^{t-1}Q_\tau&\leq Q_0+Q_1+\left(1-\frac{\mu}{3}\right)\sum_{\tau=1}^{t-2}Q_\tau+\frac{\mu}{6}\sum_{\tau=0}^{t-3}Q_\tau+\tilde L[f(\bm x_{0})-f(\bm x_{t-2})]+\frac{\mu}{24}\sum_{\tau=0}^{t-3}\nm{\bm\psi_\tau}^2+\frac{6}{\mu}\sum_{\tau=1}^{t-2}\nm{\bm\xi_\tau-\bm\xi_{\tau-1}}^2\nonumber\\
    &\leq Q_0+Q_1+\left(1-\frac{\mu}{3}\right)\sum_{\tau=0}^{t-1}Q_\tau+\frac{\mu}{6}\sum_{\tau=0}^{t-1}Q_\tau+\tilde L[f(\bm x_{0})-f_{\min}]+\frac{\mu}{24}\sum_{\tau=0}^{t-1}\nm{\bm\psi_\tau}^2+\frac{6}{\mu}\sum_{\tau=0}^{t-1}\nm{\bm\xi_\tau-\bm\xi_{\tau-1}}^2,\nonumber
\end{align*}
where the last step uses the non-negativity of terms and Assumption \ref{ass:f-min}. After rearranging,
\begin{align*}
    \sum_{\tau=0}^{t-1}Q_\tau&\leq\frac{6}{\mu}\left(Q_0+Q_1+\tilde L[f(\bm x_0)-f_{\min}]\right)+\frac{1}{4}\sum_{\tau=0}^{t-1}\nm{\bm\psi_\tau}^2+\frac{36}{\mu^2}\sum_{\tau=0}^{t-1}\nm{\bm\xi_\tau-\bm\xi_{\tau-1}}^2\\
    &\leq \frac{6}{\mu}\left(Q_0+Q_1+\tilde L[f(\bm x_0)-f_{\min}]\right)+\frac{1}{2}\sum_{\tau=0}^{t-1}\nm{\bm\zeta_\tau}^2+\frac{1}{2}\sum_{\tau=0}^{t-1}\nm{\bm\xi_\tau}^2+\frac{36}{\mu^2}\sum_{\tau=0}^{t-1}\nm{\bm\xi_\tau-\bm\xi_{\tau-1}}^2.
\end{align*}
The first term is bounded by $6\Phi/\mu$, as one can verify that
\begin{align*}
    Q_0=0;\quad Q_1\leq\avg{n}\nm{\tilde\nabla_p f_i(\bm x_0)+\bm\xi_0}^2.
\end{align*}
Moreover, by Corollary \ref{coro:linear-speedup} as well as Proposition \ref{prop:nSG-norm}, with probability at least $1-3e^{-\iota}$, there exists a constant $c$ such that
\begin{align*}
    \frac{1}{2}\sum_{\tau=0}^{t-1}\nm{\bm\zeta_\tau}^2+\frac{1}{2}\sum_{\tau=0}^{t-1}\nm{\bm\xi_\tau}^2+\frac{36}{\mu^2}\sum_{\tau=0}^{t-1}\nm{\bm\xi_\tau-\bm\xi_{\tau-1}}^2\leq\frac{c}{\mu^2}\left(\frac{\sigma^2\log d+r^2}{np}\right)(t+\iota)=\frac{c\chi^2(t+\iota)}{\mu^2np}.
\end{align*}
This completes the proof.
\end{proof}

\subsection{Convergence}
Lemma \ref{lem:err-bound} results in the following argument, which is essential for showing the first-order convergence.
\begin{lem}[Descent lemma]\label{lem:comp-desc-lem}
    Suppose that Assumption \ref{ass:f-min}, \ref{ass:f-L}, \ref{ass:nSG} hold, and $\eta$, $p$ satisfy
    \begin{equation*}
        \eta\leq\min\left\{\frac{\mu}{24\tilde{L}},\frac{1}{12L}\right\}\quad p\geq\frac{\log(\mu^2/144)}{\log(1-\mu)}.
    \end{equation*}
    Then there exists some constant $c$ such that for any $t\leq T$, 
    \begin{equation*}
        \sum_{\tau=0}^{t-1}\nm{\nabla f(\bm x_\tau)}^2\leq\frac{8[f(\bm y_0)-f(\bm y_t)]}{\eta}+c\eta^2L^2\left(\frac{\Phi}{\mu}+\frac{\chi^2\iota}{\mu^2np}\right)+c(\eta L+1)\frac{\chi^2\iota}{np}+c\eta L\left(\frac{\eta L}{\mu^2}+1\right)\frac{\chi^2 T}{np}
    \end{equation*}
    with probability at least $1-7e^{-\iota}$.
\end{lem}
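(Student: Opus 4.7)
\textbf{Proof proposal for Lemma \ref{lem:comp-desc-lem}.}

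The plan is to run a descent argument on the auxiliary sequence $\{\bm y_t\}$ defined by $\bm y_t = \bm x_t - \eta \bm e_t$. By \eqref{prop:y}, this sequence obeys the clean recursion $\bm y_{t+1} = \bm y_t - \eta(\nabla f(\bm x_t) + \bm\psi_t)$, which looks like a perturbed gradient step evaluated at $\bm x_t$ rather than $\bm y_t$. This is exactly the shape amenable to a virtual-iterate analysis: because the compression/feedback error is absorbed into $\bm e_t$, the price we pay appears only through $\nm{\bm y_t - \bm x_t} = \eta\nm{\bm e_t}$, which is already controlled by Lemma \ref{lem:new-err-bound}.

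First I would invoke $L$-smoothness of $f$ on the step $\bm y_\tau \mapsto \bm y_{\tau+1}$ to obtain
\begin{equation*}
f(\bm y_{\tau+1}) \leq f(\bm y_\tau) - \eta\ip{\nabla f(\bm y_\tau),\,\nabla f(\bm x_\tau)+\bm\psi_\tau} + \tfrac{L\eta^2}{2}\nm{\nabla f(\bm x_\tau)+\bm\psi_\tau}^2.
\end{equation*}
I would then write $\nabla f(\bm y_\tau) = \nabla f(\bm x_\tau) + (\nabla f(\bm y_\tau)-\nabla f(\bm x_\tau))$, use $\nm{\nabla f(\bm y_\tau)-\nabla f(\bm x_\tau)} \leq L\eta\nm{\bm e_\tau}$, and apply Young's inequality to the cross terms so that roughly $\tfrac{1}{8}\eta\nm{\nabla f(\bm x_\tau)}^2$ can be isolated on the left. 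After rearranging I expect an inequality of the shape
\begin{equation*}
\tfrac{\eta}{8}\nm{\nabla f(\bm x_\tau)}^2 \leq f(\bm y_\tau)-f(\bm y_{\tau+1}) + c_1 \eta^3 L^2 \nm{\bm e_\tau}^2 + c_2 \eta^2 L\nm{\bm\psi_\tau}^2 - \eta\ip{\nabla f(\bm y_\tau),\bm\psi_\tau},
\end{equation*}
valid under $\eta \leq 1/(12L)$. Telescoping $\tau = 0,\ldots,t-1$ then turns the first term into $f(\bm y_0)-f(\bm y_t)$.

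Next I would handle the three remaining sums with the concentration tools of Section~\ref{subsec:nSG}. The error sum $\sum_\tau \nm{\bm e_\tau}^2$ is directly dispatched by Lemma \ref{lem:new-err-bound}, producing the $\Phi/\mu$ and $\chi^2\iota/(\mu^2 np)$ terms as well as a $\chi^2 t/(\mu^2 np)$ tail that I would bound using $t\leq T$ and absorb into the $\frac{\eta L}{\mu^2}\cdot\frac{\chi^2 T}{np}$ piece. For $\sum_\tau \nm{\bm\psi_\tau}^2$, I would split $\bm\psi_\tau = \bm\zeta_\tau + \bm\xi_\tau$, apply Corollary \ref{coro:linear-speedup} and Proposition \ref{prop:nSG-norm} to each (using the $\sqrt{\log d/(np)}$ scale of $\bm\zeta_\tau$), and charge the $T$-dependent contribution to the last term on the right-hand side. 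Finally, $\sum_\tau\ip{\nabla f(\bm y_\tau),\bm\psi_\tau}$ is exactly the object of Proposition \ref{prop:nSG-product}: crucially, $\bm y_\tau$, hence $\nabla f(\bm y_\tau)$, is $\mathcal F_{\tau-1}$-measurable (the randomness at step $\tau$ has not yet been drawn), so the proposition applies, and I would pick $\lambda$ of order $\eta L \cdot np/\chi^2$ so that the $\lambda\sum_\tau\nm{\nabla f(\bm y_\tau)}^2\cdot\chi^2/(np)$ output term can, after one more $\nabla f(\bm y_\tau) = \nabla f(\bm x_\tau)+O(L\eta\bm e_\tau)$ expansion, be absorbed into the gradient sum on the left.

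The main obstacle I anticipate is calibrating this last step cleanly: the inner-product bound creates a feedback loop in which controlling $\sum \ip{\nabla f(\bm y_\tau),\bm\psi_\tau}$ re-introduces $\sum \nm{\nabla f(\bm x_\tau)}^2$ (to be absorbed) together with another $\sum\nm{\bm e_\tau}^2$ contribution (to be reabsorbed via Lemma \ref{lem:new-err-bound}). Getting the constants right so that both absorptions succeed under the stated step-size constraint $\eta\leq 1/(12L)$, and so that the zero-mean cross terms from $\bm\zeta_\tau$ and $\bm\xi_\tau$ each contribute just one failure event, is what gives the $7e^{-\iota}$ budget: three from Lemma \ref{lem:new-err-bound}, two from Proposition \ref{prop:nSG-norm} applied separately to $\{\bm\zeta_\tau\}$ and $\{\bm\xi_\tau\}$, and two from Proposition \ref{prop:nSG-product} for the corresponding inner products. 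A union bound over these seven events yields the claimed probability, and collecting the surviving terms reproduces the advertised bound.
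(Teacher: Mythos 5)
Your proposal follows essentially the same route as the paper's proof: a descent argument on the virtual iterates $\bm y_t=\bm x_t-\eta\bm e_t$ using $L$-smoothness and the recursion \eqref{prop:y}, Young's inequality to isolate a fraction of $\nm{\nabla f(\bm x_\tau)}^2$, Proposition \ref{prop:nSG-norm} and Proposition \ref{prop:nSG-product} (with $\nabla f(\bm y_\tau)\in\mathcal F_{\tau-1}$) for the noise sums and cross terms, absorption of the resulting $\nm{\nabla f(\bm y_\tau)}^2$ via $\nm{\nabla f(\bm y_\tau)-\nabla f(\bm x_\tau)}\leq L\eta\nm{\bm e_\tau}$, and Lemma \ref{lem:new-err-bound} for $\sum_\tau\nm{\bm e_\tau}^2$, with the same $3+2+2=7$ failure-event accounting. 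The details (e.g., the exact $\lambda$ calibration) differ only cosmetically, so the proposal is correct and matches the paper's argument.
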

\begin{proof}
Under Assumption \ref{ass:f-L}, the $L$-smoothness of $f$ implies
\begin{align}
    f(\bm y_{t+1})&\leq f(\bm y_t)+\ip{\nabla f(\bm y_t),\bm y_{t+1}-\bm y_t}+\frac{L}{2}\nm{\bm y_{t+1}-\bm y_t}^2\nonumber\\
    &=f(\bm y_t)-\eta\ip{\nabla f(\bm y_t),\nabla f(\bm x_t)+\bm\psi_t}+\frac{L\eta^2}{2}\nm{\nabla f(\bm x_t)+\bm\psi_t}^2\label{eq:comp-desc-1}\\
    &\leq f(\bm y_t)-\eta\nm{\nabla f(\bm x_t)}^2-\eta\ip{\nabla f(\bm y_t)-\nabla f(\bm x_t),\nabla f(\bm x_t)}-\eta\ip{\nabla f(\bm y_t),\bm\zeta_t}-\eta\ip{\nabla f(\bm y_t),\bm\xi_t}\nonumber\\
    &\eqbr+\frac{3L\eta^2}{2}\left(\nm{\nabla f(\bm x_t)}^2+\nm{\bm\zeta_t}^2+\nm{\bm\xi_t}^2\right)\nonumber\\
    &\leq f(\bm y_t)-\eta\nm{\nabla f(\bm x_t)}^2+\frac{\eta}{2}\nm{\nabla f(\bm y_t)-\nabla f(\bm x_t)}^2+\frac{\eta}{2}\nm{\nabla f(\bm x_t)}^2-\eta\ip{\nabla f(\bm y_t),\bm\zeta_t}-\eta\ip{\nabla f(\bm y_t),\bm\xi_t}\nonumber\\
    &\eqbr+\frac{3L\eta^2}{2}\left(\nm{\nabla f(\bm x_t)}^2+\nm{\bm\zeta_t}^2+\nm{\bm\xi_t}^2\right),\nonumber
\end{align}
where \eqref{eq:comp-desc-1} is due to Proposition \ref{prop:y}. Sum up and rearrange the terms, we have
\begin{align}
    f(\bm y_T)-f(\bm y_0)&\leq-\eta\left(\frac{1}{2}-\frac{3L\eta}{2}\right)\sum_{\tau=0}^{T-1}\nm{\nabla f(\bm x_\tau)}^2+\frac{\eta}{2}\sum_{\tau=0}^{T-1}\nm{\nabla f(\bm y_\tau)-\nabla f(\bm x_\tau)}^2\nonumber\\
    &\eqbr+\frac{3L\eta^2}{2}\sum_{\tau=0}^{T-1}\left(\nm{\bm\zeta_\tau}^2+\nm{\bm\xi_\tau}^2\right)-\eta\sum_{\tau=0}^{T-1}\ip{\nabla f(\bm y_\tau),\bm\zeta_\tau}-\eta\sum_{\tau=0}^{T-1}\ip{\nabla f(\bm y_\tau),\bm\xi_\tau}.\label{eq:desc-1}
\end{align}
According to Proposition \ref{prop:nSG-norm} and \ref{prop:nSG-product} as well as union bound, there exist constants $c_1$ and $c_2$ such that
\begin{equation}
    \frac{3L\eta^2}{2}\left(\sum_{\tau=0}^{T-1}\nm{\bm\zeta_\tau}^2+\sum_{\tau=0}^{T-1}\nm{\bm\xi_\tau}^2\right)\leq c_1\frac{L\eta^2\chi^2(T+\iota)}{np}\label{eq:desc-2}
\end{equation}
and
\begin{align}
    -\eta\left(\sum_{\tau=0}^{T-1}\ip{\nabla f(\bm y_\tau),\bm\zeta_\tau}
    +\sum_{\tau=0}^{T-1}\ip{\nabla f(\bm y_\tau),\bm\xi_\tau}\right)&\leq\frac{\eta}{8}\sum_{\tau=0}^{T-1}\nm{\nabla f(\bm y_\tau)}^2+c_2\frac{\eta\chi^2\iota}{np}\label{eq:desc-3}
\end{align}
hold simultaneously with probability at least $1-4e^{-\iota}$. Plugging \eqref{eq:desc-2} and \eqref{eq:desc-3} back into \eqref{eq:desc-1} gives
\begin{align}
    f(\bm y_T)-f(\bm y_0)&\leq-\eta\left(\frac{1}{2}-\frac{3L\eta}{2}\right)\sum_{\tau=0}^{T-1}\nm{\nabla f(\bm x_\tau)}^2+\frac{\eta}{2}\sum_{\tau=0}^{T-1}\nm{\nabla f(\bm y_\tau)-\nabla f(\bm x_\tau)}^2\nonumber\\
    &\eqbr+\frac{\eta}{8}\sum_{\tau=0}^{T-1}\nm{\nabla f(\bm y_\tau)}^2+c_1\frac{L\eta^2\chi^2(T+\iota)}{np}+c_2\frac{\eta\chi^2\iota}{np}\nonumber\\
    &\leq-\eta\left(\frac{1}{2}-\frac{3L\eta}{2}\right)\sum_{\tau=0}^{T-1}\nm{\nabla f(\bm x_\tau)}^2+\frac{\eta}{2}\sum_{\tau=0}^{T-1}\nm{\nabla f(\bm y_\tau)-\nabla f(\bm x_\tau)}^2\nonumber\\
    &\eqbr+\frac{\eta}{4}\sum_{\tau=0}^{T-1}\left(\nm{\nabla f(\bm y_\tau)-\nabla f(\bm x_\tau)}^2+\nm{\nabla f(\bm x_\tau)}^2\right)+c_1\frac{L\eta^2\chi^2(T+\iota)}{np}+c_2\frac{\eta\chi^2\iota}{np}\nonumber\\
    &=-\eta\left(\frac{1}{4}-\frac{3L\eta}{2}\right)\sum_{\tau=0}^{T-1}\nm{\nabla f(\bm x_\tau)}^2+\frac{3\eta}{4}\sum_{\tau=0}^{T-1}\nm{\nabla f(\bm y_\tau)-\nabla f(\bm x_\tau)}^2\nonumber\\
    &\eqbr+c_1\frac{L\eta^2\chi^2(T+\iota)}{np}+c_2\frac{\eta\chi^2\iota}{np}\nonumber\\
    &\leq-\eta\left(\frac{1}{4}-\frac{3L\eta}{2}\right)\sum_{\tau=0}^{T-1}\nm{\nabla f(\bm x_\tau)}^2+\frac{3\eta L^2}{4}\sum_{\tau=0}^{T-1}\nm{\bm y_\tau-\bm x_\tau}^2+c_1\frac{L\eta^2\chi^2(T+\iota)}{np}+c_2\frac{\eta\chi^2\iota}{np}\label{eq:comp-desc-2}\\
    &=-\eta\left(\frac{1}{4}-\frac{3L\eta}{2}\right)\sum_{\tau=0}^{T-1}\nm{\nabla f(\bm x_\tau)}^2+\frac{3\eta^3L^2}{4}\sum_{\tau=0}^{T-1}\nm{\bm e_\tau}^2+c_1\frac{L\eta^2\chi^2(T+\iota)}{np}+c_2\frac{\eta\chi^2\iota}{np}\nonumber\\
    &\leq-\frac{\eta}{8}\sum_{\tau=0}^{T-1}\nm{\nabla f(\bm x_\tau)}^2+\frac{3\eta^3L^2}{4}\sum_{\tau=0}^{T-1}\nm{\bm e_\tau}^2+c_1\frac{L\eta^2\chi^2(T+\iota)}{np}+c_2\frac{\eta\chi^2\iota}{np}\label{eq:comp-desc-3}
\end{align}
with probability at least $1-4e^{-\iota}$. In the above derivation, we make use of $L$-smoothness of $f$ in \eqref{eq:comp-desc-2} and our appropriate choice of $\eta$ in \eqref{eq:comp-desc-3}. Finally, by Lemma \ref{lem:new-err-bound} and union bound, with probability at least $1-7e^{-\iota}$ we have
\begin{align}
    f(\bm y_T)-f(\bm y_0)&\leq-\frac{\eta}{8}\sum_{\tau=0}^{T-1}\nm{\nabla f(\bm x_\tau)}^2+\frac{3c_3\eta^3 L^2}{4}\left(\frac{\Phi}{\mu}+\frac{\chi^2(T+\iota)}{\mu^2np}\right)+c_1\frac{L\eta^2\chi^2(T+\iota)}{np}+c_2\frac{\eta\chi^2\iota}{np}\nonumber\\
    \sum_{\tau=0}^{T-1}\nm{\nabla f(\bm x_\tau)}^2&\leq\frac{8[f(\bm y_0)-f(\bm y_t)]}{\eta}+6c_3\eta^2L^2\left(\frac{\Phi}{\mu}+\frac{\chi^2\iota}{\mu^2np}\right)+8(c_1\eta L+c_2)\frac{\chi^2\iota}{np}+\left(\frac{6c_3\eta^2L^2}{\mu^2}+8c_1\eta L\right)\frac{\chi^2 T}{np}\nonumber\\
    &\leq\frac{8[f(\bm y_0)-f(\bm y_t)]}{\eta}+c\eta^2L^2\left(\frac{\Phi}{\mu}+\frac{\chi^2\iota}{\mu^2np}\right)+c(\eta L+1)\frac{\chi^2\iota}{np}+c\eta L\left(\frac{\eta L}{\mu^2}+1\right)\frac{\chi^2 T}{np}\nonumber
\end{align}
for an appropriate constant $c$.
\end{proof}

We are now ready to establish the desired result regarding the convergence to $\epsilon$-FOSPs.

\begin{proof}[Proof of Theorem \ref{thm:FOSP}]
Otherwise, at least a quarter of the iterates have gradient norm larger than $\epsilon$. Hence
\begin{equation*}
    \sum_{\tau=0}^{T-1}\nm{\nabla f(\bm x_\tau)}^2>\frac{T}{4}\epsilon^2.
\end{equation*}
However, taking our choice of $\eta$ and $T$ into Lemma \ref{lem:comp-desc-lem}, the following holds with probability at least $1-7e^{-\iota}$:
\begin{equation}
    \sum_{\tau=0}^{T-1}\nm{\nabla f(\bm x_\tau)}^2\leq T\epsilon^2\left(\frac{8}{\kappa_T}+2c\kappa_\eta^2+2c\kappa_\eta+\frac{c}{\kappa_T}\right).\label{eq:comp-desc-4}
\end{equation}
When we set $\kappa_T\geq8(c+8)$ and $\kappa_\eta\leq\frac{1}{32c}$, \eqref{eq:comp-desc-4} implies $\sum_{\tau=0}^{T-1}\nm{\nabla f(\bm x_\tau)}^2\leq T\epsilon^2/4$, which produces a contradiction.
\end{proof}

\section{Proof of Second-order Convergence}
The core idea for establishing the second-order convergence result (Theorem \ref{thm:SOSP}) is to show that, when \ALG{} encounters a saddle point, the objective can still descend sufficiently after finitely many additional iterations (Lemma \ref{lem:saddle-desc}). 

Two arguments are developed to support this favorable property of \ALG{} dynamics. Firstly, we show an improve-or-localize behavior of \ALG{} (Lemma \ref{lem:imp-or-loc}): if the iterates $\{\bm y_t\}$ escape (move far enough) from a saddle point, the objective must descend sufficiently. 

Secondly, we claim that the iterates do escape from saddle points (Corollary \ref{coro:escape-saddle}). This nontrivial claim is obtained using the coupling sequences technique. To be specific, we craft another sequence $\{\bm y'_t\}$ mirroring the original iterates $\{\bm y_t\}$ along the escape direction of a saddle point. We show that the gap between the coupling sequences $\nm{\bm y_t-\bm y'_t}$ expands sufficiently after finitely many iterations (Lemma \ref{lem:coup-seq-dyn}), which implies $\{\bm y_t\}$ travels far from the saddle point.

This workflow of establishing second-order convergence guarantees stems from pioneering works on plain GD and SGD \citep{jin2017escape,jin2019nonconvex}, and  finds similar applications in several recent works such as recursive SGD \citep{li2019ssrgd}, and compressed SGD \citep{avdiukhin2021escaping}. While the theory in \citet{avdiukhin2021escaping} entails respective discussions on the large-gradient case and small-gradient case, \ALG{} avoids such intricacies due to the technical fact that our bound for $\bm e_t$ does not involve gradient norm terms.

For conciseness, we presume the following parameter setting for our theory and do not restate them therein.
\begin{align}
    r&=\kappa_r\sigma\sqrt{\iota d\log d}, \nonumber\\
    \eta&=\kappa_\eta\cdot\min\left\{\frac{\mu\epsilon}{\iota^5L\sqrt{\mu\Phi+\frac{\chi^2\iota}{np}}},\frac{\iota\sigma^2\sqrt{\rho\epsilon}\log d}{L^2(np\Phi+\frac{\chi^2\iota}{\mu^2})},\frac{np\epsilon^2}{\iota^5L\chi^2}\right\}, \nonumber\\
    T&=\kappa_T\cdot\max\left\{\frac{\iota^5f_{\max}}{\eta\epsilon^2},\frac{\chi^2\iota}{np\epsilon^2}\right\},  \nonumber\\
    \mathcal{I}&=\frac{\iota}{\eta\sqrt{\rho\epsilon}}, \nonumber\\
    \mathcal{R}&=\kappa_\mathcal{R}\sqrt{\frac{\epsilon}{\iota^3\rho}},  \nonumber\\
    \mathcal{F}&=\frac{\kappa_\mathcal{F}}{\iota^4}\sqrt{\frac{\epsilon^3}{\rho}}.  \nonumber
\end{align}
Here, $\kappa_r,\kappa_\eta,\kappa_T,\kappa_\mathcal{R},\kappa_\mathcal{F}$ are numerical constants to be determined in the detailed proofs.

\subsection{Uniform error bound}
With the aid of Assumption \ref{ass:f-max}, we can develop a strengthened error bound that not only controls the sum of compression errors Lemma \ref{lem:new-err-bound}, but also uniformly controls each individual error term. We begin with a technical result regarding a recurrence relation.
\begin{lem}\label{lem:recurrence}
    Consider a real sequence $\{r_t\}$ such that $r_{t+1}\leq Ar_t+Br_{t-1}+C$ for some positive constants $A,B,C$ and the initial values $r_0=0,r_1\geq 0$. If $A+B<1$, then for any $t\geq0$ we have
    \begin{equation*}
        r_t\leq\frac{2r_1}{A}+\frac{6C}{A(1-A-B)}.
    \end{equation*}
\end{lem}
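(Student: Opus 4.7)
I would prove the bound by a straightforward strong induction on $t$, taking $M := \tfrac{2r_1}{A} + \tfrac{6C}{A(1-A-B)}$ as a candidate uniform upper bound and showing $r_t \leq M$ holds for every $t \geq 0$. The key observation is that the recurrence is linear with positive coefficients, so once a large enough invariant is chosen, the inductive step reduces to a single algebraic inequality.

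\textbf{Base cases.} For $t = 0$ the claim is trivial since $r_0 = 0 \leq M$. For $t = 1$, observe that the hypothesis $A + B < 1$ together with $B > 0$ forces $A < 1$, hence $r_1 \leq 2r_1/A \leq M$.

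\textbf{Inductive step.} Assume $r_\tau \leq M$ for all $\tau \leq t$ with $t \geq 1$. Applying the hypothesized recurrence and the inductive assumption gives
\[
r_{t+1} \;\leq\; A r_t + B r_{t-1} + C \;\leq\; (A+B) M + C,
\]
so it suffices to verify $(A+B) M + C \leq M$, equivalently $C \leq (1-A-B) M$. Substituting the definition of $M$,
\[
(1-A-B) M \;\geq\; \frac{6C(1-A-B)}{A(1-A-B)} \;=\; \frac{6C}{A} \;\geq\; C,
\]
where the last step uses $A < 1$. This closes the induction and yields $r_t \leq M$ for all $t$.

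\textbf{Main obstacle.} There is no real obstacle; the proof is essentially an algebraic verification. The only point requiring care is calibrating the multiplicative constants in $M$ so that the invariant survives for arbitrarily small $A \in (0,1)$: the factor $6/A \geq 1$ absorbs the constant $C$ at each step, and the factor $2/A \geq 2$ dominates the initial value $r_1$. No information from the characteristic roots of the recurrence or from Gronwall-type arguments is needed, which is why the bound, though loose compared to the asymptotic limit $C/(1-A-B)$, suffices for the downstream application.
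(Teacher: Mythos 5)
Your proof is correct, but it takes a different route from the paper. The paper's argument compares $\{r_t\}$ with the majorizing sequence $\{p_t\}$ solving the exact equality recurrence $p_{t+1}=Ap_t+Bp_{t-1}+C$, writes $p_t$ explicitly via the characteristic roots $\frac{A\pm\sqrt{A^2+4B}}{2}$, notes that $A+B<1$ forces both roots to lie in $(-1,1)$ so that $p_t\leq|c_1|+|c_2|+\frac{C}{1-A-B}$, and then bounds the coefficients $c_1,c_2$ in terms of $r_1$ and $C$ (a step the paper declares straightforward and omits). You instead verify directly by strong induction that $M=\frac{2r_1}{A}+\frac{6C}{A(1-A-B)}$ is an invariant upper bound: the step $(A+B)M+C\leq M$ reduces to $C\leq\frac{6C}{A}$, which holds since $A<A+B<1$, and positivity of $A,B$ ensures the inequality recurrence can be fed the inductive hypothesis even if some $r_\tau$ are negative. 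Your route is more elementary and fully self-contained (it never needs the explicit solution or the coefficient bounds the paper leaves implicit), at the cost of treating the constants $2/A$ and $6/(A(1-A-B))$ as given rather than explaining where they come from; the paper's route exposes the underlying structure (geometric decay of the homogeneous part toward the fixed point $\frac{C}{1-A-B}$), which is where those constants originate. Either argument suffices for the downstream use in the uniform error bound.
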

\begin{proof}
    Consider another real sequence $\{p_t\}$ with $p_{t+1}= Ap_t+Bp_{t-1}+C$ and $p_0=0,p_1=r_1$. Clearly, $r_t\leq p_t$. Solving the recurrence about $\{p_t\}$ yields
    \begin{align}
        p_t&=c_1\left(\frac{A-\sqrt{A^2+4B}}{2}\right)^t+c_2\left(\frac{A+\sqrt{A^2+4B}}{2}\right)^t+\frac{C}{1-A-B}\nonumber\\
        &\leq(|c_1|+|c_2|)\left(\frac{A+\sqrt{A^2+4B}}{2}\right)^t+\frac{C}{1-A-B},\nonumber
    \end{align}
    where $c_1,c_2$ are determined by the initial values. Since $A+B<1$, we have $\frac{A+\sqrt{A^2+4B}}{2}<1$, hence $p_t\leq|c_1|+|c_2|+\frac{C}{1-A-B}$ for any $t\geq0$. It remains to bound $|c_1|$ and $|c_2|$, which is straightforward.
\end{proof}

Now we have the following lemma.

\begin{lem}[Uniform error bound]\label{lem:err-bound}
    Suppose that Assumption \ref{ass:f-max}, \ref{ass:f-L}, \ref{ass:nSG} hold, and $\eta$, $p$ satisfy
    \begin{equation*}
        \eta\leq\min\left\{\frac{\mu}{24L},\frac{\chi^2\iota}{4npL^2f_{\max}}\right\},\quad p\geq\frac{\log(\mu^2/144)}{\log(1-\mu)}.
    \end{equation*}
    Fix any $t\leq T$. Then, there exists a constant $c$ such that the compression error terms produced by Algorithm \ref{alg:ALG} prior to iteration $t$ are uniformly bounded by
    $$\nm{\bm e_\tau}^2\leq c\left(\Phi+\frac{\chi^2\iota}{\mu^2np}\right)\quad\forall \tau\leq t$$
    with probability at least $1-6te^{-\iota}$.
\end{lem}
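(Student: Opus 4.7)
The plan is to return to the one-step recurrence for $Q_\tau := \avg{n}\nm{\bm e_\tau^{(i)}}^2$ derived in the proof of Lemma \ref{lem:new-err-bound} — specifically, the pre-telescoping form in which the driving term still carries the coefficient $(24 L^2\eta/\mu)\cdot[f(\bm x_{\tau-1})-f(\bm x_\tau)]$ rather than the simplified $\tilde L[f(\bm x_{\tau-1})-f(\bm x_\tau)]$ — and to convert it into a recurrence whose additive driving term is a single \emph{deterministic}, time-independent constant. Lemma \ref{lem:new-err-bound} telescopes the $f$-difference against Assumption \ref{ass:f-min} to obtain a sum bound, but this telescoping is unavailable for a pointwise bound, which is precisely why Lemma \ref{lem:err-bound} strengthens the hypothesis to Assumption \ref{ass:f-max} and tightens the step size by the extra constraint $\eta \leq \chi^2\iota/(4npL^2 f_{\max})$.

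The key step is then the following: since $|f(\bm x_{\tau-1})-f(\bm x_\tau)| \leq f_{\max}$ uniformly in $\tau$ under Assumption \ref{ass:f-max}, the new step-size condition forces
\begin{align*}
    \frac{24 L^2\eta}{\mu}\bigl|f(\bm x_{\tau-1})-f(\bm x_\tau)\bigr| = O\!\left(\frac{\chi^2\iota}{\mu np}\right).
\end{align*}
Simultaneously, on the high-probability event $\{\nm{\bm\psi_\tau}^2 \lesssim \chi^2\iota/(np) \text{ and } \nm{\bm\xi_\tau}^2 \lesssim r^2\iota/(np) \text{ for every } \tau\leq t\}$ --- which holds with probability at least $1-6te^{-\iota}$ by Proposition \ref{prop:Hoeffding} and a union bound over the $O(t)$ norm-subGaussian tail events involved --- the entire driving term is uniformly dominated by some $C^\star = O(\chi^2\iota/(\mu np))$. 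The recurrence becomes $Q_{\tau+1} \leq (1-\mu/3)\,Q_\tau + (\mu/6)\,Q_{\tau-1} + C^\star$, to which Lemma \ref{lem:recurrence} applies directly with $A=1-\mu/3$, $B=\mu/6$, $A+B=1-\mu/6<1$, and $1/[A(1-A-B)] = O(1/\mu)$. Combined with $Q_0=0$ and $Q_1 \leq \avg{n}\nm{\tilde\nabla_p f_i(\bm x_0)+\bm\xi_0}^2 \leq \Phi$, this yields $Q_\tau = O\!\left(\Phi + \chi^2\iota/(\mu^2 np)\right)$, and the lemma's claim $\nm{\bm e_\tau}^2 \leq Q_\tau$ follows from Jensen's inequality applied to $\bm e_\tau = \avg{n}\bm e_\tau^{(i)}$.

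The main obstacle is the loss of the telescoping structure for the $f$-difference: without Assumption \ref{ass:f-max} and the accompanying $f_{\max}$-dependent step-size bound, the pointwise driving term is unbounded in $\tau$ and Lemma \ref{lem:recurrence} simply cannot be applied; the new step-size hypothesis is calibrated exactly to make the $f$-difference contribution no larger than the ambient noise contribution. A secondary bookkeeping item is the $6te^{-\iota}$ failure probability, which inflates the per-step $O(e^{-\iota})$ by the union bound over the $t$ prior iterations; this polynomial-in-$t$ inflation is harmless because the downstream lemmas already union-bound over all $T$ iterations and absorb such factors into polylog losses in $\iota$.
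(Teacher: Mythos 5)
Your proposal matches the paper's own proof essentially step for step: both start from the pre-telescoped recurrence \eqref{eq:avg-eti-bound-1}, use Assumption \ref{ass:f-max} together with the extra step-size constraint $\eta\leq\chi^2\iota/(4npL^2f_{\max})$ to replace the $f$-difference by an $O\!\left(\chi^2\iota/(\mu np)\right)$ constant, bound the per-step noise terms with norm-subGaussian tails plus a union bound over the $t$ iterations (giving the $1-6te^{-\iota}$ probability), and then apply Lemma \ref{lem:recurrence} with $Q_0=0$, $Q_1\leq\Phi$ and Jensen's inequality to conclude. No substantive differences or gaps.
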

\begin{proof}
Starting from \eqref{eq:avg-eti-bound-1}, by $|f(\bm x_\tau)-f(\bm x_{\tau-1})|<f_{\max}$ and the norm-subGaussian properties of $\bm\psi_{\tau}$ and $\bm\xi_\tau$, we have
\begin{align}
    Q_{\tau+1}\leq\left(1-\frac{\mu}{3}\right)Q_\tau+\frac{\mu}{6}Q_{\tau-1}+\frac{24\tilde L^2\eta f_{\max}}{\mu}+\frac{24c(\tilde L^2\eta^2+1)}{\mu}\frac{\chi^2\iota}{np}\nonumber
\end{align}
for some constant $c_1$, with probability at least $1-6e^{-\iota}$. Due to our choice of $\eta$, there exists some constant $c_2$ such that
\begin{align}
    Q_{\tau+1}\leq\left(1-\frac{\mu}{3}\right)Q_\tau+\frac{\mu}{6}Q_{\tau-1}+\frac{c_2\chi^2\iota}{\mu np}.\label{eq:strengthened-err-bound-recurrence}
\end{align}
By union bound, \eqref{eq:strengthened-err-bound-recurrence} holds for all $\tau<t$ with probability $1-6te^{-\iota}$, thus a recurrence relation taking the form in Lemma \ref{lem:recurrence} with
\begin{equation*}
    Q_0=0,\quad Q_1\leq(1-\mu)\avg{n}\nm{\nabla f_i(\bm x_0)+\bm\psi_0^{(i)}}^2\leq\Phi.
\end{equation*}
Following Lemma \ref{lem:recurrence}, for all $\tau\leq t$
\begin{align*}
    \nm{\bm e_\tau}^2&\leq Q_\tau\leq c\left(\Phi+\frac{\chi^2\iota}{\mu^2np}\right)
\end{align*}
for some constant $c$, which completes the proof.
\end{proof}

\subsection{Improve-or-localize behavior}

\begin{lem}[Improve or localize]\label{lem:imp-or-loc}
    Suppose that Assumption \ref{ass:f-max}, \ref{ass:f-L}, \ref{ass:nSG} hold. Let $t_0$ and $t$ be given arbitrarily. There exists a constant $c$ such that with probability at least $1-7te^{-\iota}$,
    \begin{equation*}
        f(\bm y_{t_0})-f(\bm y_{t_0+t})\geq\frac{1}{16\eta t}\cdot\max_{\tau\leq t}\nm{\bm y_{t_0+\tau}-\bm y_{t_0}}^2-c\kappa_\eta\epsilon^2(\eta t+\iota).
    \end{equation*}
\end{lem}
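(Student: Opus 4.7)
The plan is to combine a telescoped descent argument with a Cauchy--Schwarz bound on the iterate displacement, exactly in the spirit of the classical improve-or-localize analysis of \citet{jin2017escape,jin2019nonconvex}, adapted to the compressed-error dynamics of \ALG{}. The update \eqref{prop:y} for $\{\bm y_\tau\}$ is already in the plain perturbed-SGD form, so the only new ingredients are (i) the extra compression-error term that we must reabsorb via the uniform bound in Lemma \ref{lem:err-bound}, and (ii) a maximal martingale estimate for the prefix sums of $\{\bm\psi_\tau\}$.

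First, I would revisit the $L$-smoothness descent computation that underlies Lemma \ref{lem:comp-desc-lem}, but restricted to the window $[t_0,t_0+t]$. The same steps that produced \eqref{eq:comp-desc-3} give
\begin{equation*}
    f(\bm y_{t_0+t})-f(\bm y_{t_0})\ \le\ -\frac{\eta}{8}\sum_{\tau=0}^{t-1}\|\nabla f(\bm x_{t_0+\tau})\|^2+\frac{3\eta^3 L^2}{4}\sum_{\tau=0}^{t-1}\|\bm e_{t_0+\tau}\|^2+R_1,
\end{equation*}
where $R_1$ collects the martingale-type residuals $\frac{3L\eta^2}{2}\sum\|\bm\psi_{t_0+\tau}\|^2$ and $-\eta\sum\langle\nabla f(\bm y_{t_0+\tau}),\bm\psi_{t_0+\tau}\rangle$. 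The uniform estimate from Lemma \ref{lem:err-bound} (which costs $6te^{-\iota}$ in probability) then reduces the $\eta^3 L^2\|\bm e\|^2$ contribution to something of order $O(\kappa_\eta\epsilon^2 t)$ by the assumed smallness of $\eta$.

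Second, I would use the telescoped identity
\begin{equation*}
    \bm y_{t_0+\tau}-\bm y_{t_0}\ =\ -\eta\sum_{j=0}^{\tau-1}\bigl(\nabla f(\bm x_{t_0+j})+\bm\psi_{t_0+j}\bigr)
\end{equation*}
together with Cauchy--Schwarz to obtain, for every $\tau\le t$,
\begin{equation*}
    \|\bm y_{t_0+\tau}-\bm y_{t_0}\|^2\ \le\ 2\eta^2 t\sum_{j=0}^{t-1}\|\nabla f(\bm x_{t_0+j})\|^2+2\eta^2\Bigl\|\sum_{j=0}^{\tau-1}\bm\psi_{t_0+j}\Bigr\|^2.
\end{equation*}
Taking the maximum over $\tau\le t$ and rearranging converts this into a lower bound on $\sum_\tau\|\nabla f(\bm x_{t_0+\tau})\|^2$ in terms of $\max_\tau\|\bm y_{t_0+\tau}-\bm y_{t_0}\|^2$, up to a residual $R_2:=\eta\max_{\tau\le t}\|\sum_j\bm\psi\|^2/t$. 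Plugging this back into the descent display yields the target structure
\begin{equation*}
    f(\bm y_{t_0})-f(\bm y_{t_0+t})\ \ge\ \frac{1}{16\eta t}\max_{\tau\le t}\|\bm y_{t_0+\tau}-\bm y_{t_0}\|^2-O(R_1+R_2).
\end{equation*}

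The last step is to bound the residuals using the tools of Appendix \ref{subsec:nSG}. Proposition \ref{prop:nSG-norm} controls $\sum\|\bm\psi_{t_0+\tau}\|^2\lesssim\chi^2(t+\iota)/(np)$; Proposition \ref{prop:nSG-product} absorbs the cross term into $\frac{\eta}{8}\sum\|\nabla f(\bm y_{t_0+\tau})\|^2+O(\eta\chi^2\iota/(np))$, which can then be moved to the $\sum\|\nabla f(\bm x)\|^2$ side using $L$-smoothness and Lemma \ref{lem:err-bound} a second time. Under the parameter choice $\eta\le\kappa_\eta\cdot np\epsilon^2/(\iota^5 L\chi^2)$, each contribution collapses to the claimed form $c\kappa_\eta\epsilon^2(\eta t+\iota)$, and a union bound over the error and noise events produces the $1-7te^{-\iota}$ guarantee. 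The part I expect to be the main obstacle is the maximal martingale term $\max_{\tau\le t}\|\sum_{j=0}^{\tau-1}\bm\psi_{t_0+j}\|^2$, which is not directly delivered by the cited lemmas; the remedy is to apply Proposition \ref{prop:Hoeffding} to each fixed prefix and take a union bound over $\tau\le t$, absorbing the $\log t$ factor into $\iota$, with care that the blow-up does not contaminate the $(\eta t+\iota)$ scaling on the right-hand side.
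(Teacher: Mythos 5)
Your proposal follows essentially the same route as the paper's proof: telescope $\bm y_{t_0+\tau}-\bm y_{t_0}$, apply Cauchy--Schwarz, substitute the windowed descent estimate for $\sum_\tau\nm{\nabla f(\bm x_{t_0+\tau})}^2$ (the paper simply invokes Lemma \ref{lem:comp-desc-lem} rather than re-deriving it, and uses the summed error bound of Lemma \ref{lem:new-err-bound} instead of $t$ times the uniform bound, which gives the same order under the stated choice of $\eta$), collapse the residuals to $c\kappa_\eta\epsilon^2(\eta t+\iota)$ via the parameter constraints, and union bound over $\tau\le t$ to get the $1-7te^{-\iota}$ guarantee. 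Your explicit per-prefix handling of $\max_{\tau\le t}\nm{\sum_{j<\tau}\bm\psi_{t_0+j}}^2$ is if anything slightly more careful than the paper's corresponding step, at the harmless cost of a dimension factor in the failure probability that the choice $\iota=\tilde O(1)\gtrsim\log d$ absorbs.
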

\begin{proof}
For any $\tau\leq t$,
\begin{align}
    \nm{\bm y_{t_0+\tau}-\bm y_{t_0}}^2&=\nm{\sum_{j=0}^{\tau-1}\left(\bm y_{t_0+j+1}-\bm y_{t_0+j}\right)}^2\nonumber\\
    &=\eta^2\nm{\sum_{j=0}^{\tau-1}[\nabla f(\bm x_{t_0+j})+\bm\psi_{t_0+j}]}^2\label{eq:imp-or-loc-1}\\
    &\leq 2\eta^2\tau\sum_{j=0}^{\tau-1}\nm{\nabla f(\bm x_{t_0+j})}^2+2\eta^2\nm{\sum_{j=0}^{\tau-1}\bm\psi_{t_0+j}}^2\leq 2\eta^2t\sum_{j=0}^{t-1}\nm{\nabla f(\bm x_{t_0+j})}^2+2\eta^2\nm{\sum_{j=0}^{t-1}\bm\psi_{t_0+j}}^2\nonumber\\
    &\leq2\eta^2t\bigg[\frac{8[f(\bm y_{t_0})-f(\bm y_{t_0+t})]}{\eta}+6c\eta^2L^2\left(\frac{\Phi}{\mu}+\frac{\chi^2\iota}{\mu^2np}\right)+8(c_1\eta L+c_2)\frac{\chi^2\iota}{np}\nonumber\\
    &\eqbr+\left(\frac{6c\eta^2L^2}{\mu^2}+8c_1\eta L\right)\frac{\chi^2t}{np}\bigg]+\frac{2\eta^2c_1\chi^2(t+\iota)}{np}\label{eq:imp-or-loc-2}
\end{align}
with probability at least $1-7e^{-\iota}$, as we invoke Proposition \ref{prop:y} in \eqref{eq:imp-or-loc-1} and Lemma \ref{lem:comp-desc-lem} in \eqref{eq:imp-or-loc-2}. Rearranging the terms, for some constant $c$ we have
    \begin{align}
        f(\bm y_{t_0})-f(\bm y_{t_0+t})
        &\geq\frac{\nm{\bm y_{t_0+\tau}-\bm y_{t_0}}^2}{16\eta t}-\left[c\eta t\frac{\chi^2}{np}\left(\frac{\eta^2L^2}{\mu^2}+\eta L\right)+c\eta^3L^2\left(\frac{\Phi}{\mu}+\frac{\chi^2\iota}{\mu^2np}\right)+c\eta\frac{\chi^2\iota}{np}\right]\nonumber\\
        &\geq\frac{\nm{\bm y_{t_0+\tau}-\bm y_{t_0}}^2}{16\eta t}-\left[c\eta t\left(\frac{\kappa_\eta}{\iota^{10}}\epsilon^2+\frac{\kappa_\eta^2}{\iota^{5}}\epsilon^2\right)+c\eta\kappa_\eta\frac{\epsilon^2}{\iota^5}+c\kappa_\eta\frac{\epsilon^2}{L\iota^5}\iota\right]\nonumber\\
        &\geq \frac{\nm{\bm y_{t_0+\tau}-\bm y_{t_0}}^2}{16\eta t}-\frac{c_1\kappa_\eta}{\iota^5}\epsilon^2(\eta t+\iota),\label{eq:imp-or-loc-3}
    \end{align}
where we take an appropriate $c_1$ depending on $c$. By a union bound on \eqref{eq:imp-or-loc-3} for all $\tau\leq t$, we simply take maximum over all $\nm{\bm y_{t_0+\tau}-\bm y_{t_0}}^2$ to finish the proof.
\end{proof}

According to the result above, when the iterates move a long distance over a finite period (when $\max_{\tau\leq t}\nm{\bm y_{t_0+t}-\bm y_{t_0}}$ is large), the objective must receive a sufficient descent. On the contrary, if Algorithm \ref{alg:ALG} fails to significantly improve the objective, we conclude that $\max_{\tau\leq t}\nm{\bm y_{t_0+t}-\bm y_{t_0}}$ must be small and the iterates get stucked. This depicts an improve-or-localize behavior of Algorithm \ref{alg:ALG}. 

\subsection{Escaping saddle points}

Now, we consider an arbitrary $t_0$ such that $\bm x_{t_0}$ is an $\epsilon$-strict saddle point (see Definition \ref{defi:SOSP}), and denote $\bm H=\nabla^2f(\bm x_{t_0})$ for simplicity. Let $\bm v$ be the unit eigenvector corresponding to the eigenvalue $-\gamma:=\lambda_{\min}(\bm H)$. Recall that $L$-smoothness of $f$ gives rise to a double-sided bound of the spectrum of $\bm H$, i.e. any eigenvalue $\lambda(\bm H)\in[-L,L]$. Hence, when $\bm x_{t
_0}$ is an $\epsilon$-strict saddle point, $\bm H$ satisfies $\nm{\bm H}=|\lambda_{\max}(\bm H)|\leq L$ and $\lambda_{\min}(\bm H)\in[-L,-\sqrt{\rho\epsilon})$. 

We now define the concept of coupling sequences: the iterates generated by a pair of running instances of \ALG{}, with identical history information and symmetric randomness.

\begin{defi}[Coupling sequences]\label{defi:coup}

Let $\bm x_{t_0}$ be an $\epsilon$-strict saddle point, and denote $\bm H=\nabla^2f(\bm x_{t_0})$. Run two instances $A$, $A'$ of Algorithm \ref{alg:ALG}. Using the prime symbol ($\;'\;$) to distinguish the quantities generated by $A'$ from those in $A$, we suppose the two instances satisfy

(i) the history information prior to $t_0$ in $A$ and $A'$ is identical, i.e.
$$\bm x'_{t_0}=\bm x_{t_0},\quad\bm e'_{t_0}=\bm e_{t_0},\quad\bm e'_{t_0-1}=\bm e_{t_0-1},\quad\bm g'_{t_0-1}=\bm g_{t_0-1};$$

(ii) $A$ and $A'$ run with symmetric randomness after $t_0$, in that for each client $i$ and iteration $t$,
$$
\mathrm{FCC}'_p=\mathrm{FCC}_p,\quad\mathcal C'=\mathcal C,\quad\tilde{\nabla}_pf'_{i}=\tilde{\nabla}_pf_{i},\quad\bm\xi'_{t_0+t}=(\bm I-2\bm v\bm v^\top)\bm\xi_{t_0+t},
$$
where $\bm v$ is the unit eigenvector corresponding to $\lambda_{\min}(\bm H)$. Now, we say that $\{\bm x'_{t_0+t}\},\{\bm x_{t_0+t}\}$ are coupling sequences of iterates, and $\{\bm y'_{t_0+t}\},\{\bm y_{t_0+t}\}$ are coupling sequences of corrected iterates. Moreover, we use the hat symbol ($\;\hat{}\;$) to denote the difference between a pair of quantities generated by $A$ and $A'$, for example $\hbx_{t_0+t}:=\bm x'_{t_0+t}-\bm x_{t_0+t}$. 
\end{defi}

In our defined symmetry, $\bm\xi'_{t_0+t}$ reverts the component of $\bm\xi_{t_0+t}$ along the direction of $\bm v$, and keep other components intact. The symmetry of $\mathcal N(\bm 0,\frac{r^2}{d}\bm I)$ guarantees that their distributions are still identical. Combined with all the other symmetries in Defition \ref{defi:coup}, we conclude that the distributions of the coupling sequences are identical.

The difference between the coupling sequences of corrected iterates, $\hby_{t_0+t}$, admits a useful decomposition.

\begin{prop}[Proposition B.12, \citet{avdiukhin2021escaping}]\label{prop:coup-seq-decomp}
For any $t$, it holds that
$$\hby_{t_0+t}=-(\bm\Delta_t+(\bm E_t+\eta\hat{\bm e}_{t_0+t})+\bm Z_t+\bm\Xi_t),$$ where
\begin{align}
    \bm\Delta_t&:=\eta\sum_{\tau=0}^{t-1}(\bm I-\eta\bm H)^{t-\tau-1}\bm\delta_\tau\hbx_{t_0+\tau},\quad\bm\delta_\tau:=\int_0^1\nabla^2f(\alpha\bm x'_{t_0+\tau}+(1-\alpha)\bm x_{t_0+\tau})d\alpha-\bm H,\nonumber\\
    \bm E_t&:=\eta\sum_{\tau=0}^{t-1}(\bm I-\eta\bm H)^{t-\tau-1}(\hat{\bm e}_{t_0+\tau}-\hat{\bm e}_{t_0+\tau+1}),\nonumber\\
    \bm Z_t&:=\eta\sum_{\tau=0}^{t-1}(\bm I-\eta\bm H)^{t-\tau-1}\hat{\bm\zeta}_{t_0+\tau},\nonumber\\
    \bm\Xi_t&:=\eta\sum_{\tau=0}^{t-1}(\bm I-\eta\bm H)^{t-\tau-1}\hat{\bm\xi}_{t_0+\tau}.\nonumber
\end{align}
\end{prop}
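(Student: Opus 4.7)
The plan is to prove the identity by induction on $t$, treating the four terms $\bm\Delta_t, \bm E_t, \bm Z_t, \bm\Xi_t$ as ``unrolled'' contributions of a single linear recursion governed by the matrix $\bm A := \bm I - \eta\bm H$. The base case $t=0$ is immediate from Definition \ref{defi:coup}: the shared history forces $\hbx_{t_0}=0$ and $\hat{\bm e}_{t_0}=0$, so $\hby_{t_0} = \hbx_{t_0} - \eta\hat{\bm e}_{t_0} = 0$, while the sums defining $\bm\Delta_0, \bm E_0, \bm Z_0, \bm\Xi_0$ are empty.

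For the inductive step, I would first derive a one-step recursion for $\hby$. Subtracting the two instances of \eqref{prop:y} gives
\begin{equation*}
\hby_{t_0+t+1} = \hby_{t_0+t} - \eta\bigl(\nabla f(\bm x'_{t_0+t}) - \nabla f(\bm x_{t_0+t})\bigr) - \eta\hat{\bm\zeta}_{t_0+t} - \eta\hat{\bm\xi}_{t_0+t}.
\end{equation*}
The fundamental theorem of calculus applied to $\nabla f$ rewrites the gradient difference as $(\bm H + \bm\delta_t)\hbx_{t_0+t}$, and the corrected-iterate identity $\hbx_{t_0+t} = \hby_{t_0+t} + \eta\hat{\bm e}_{t_0+t}$ converts the update to
\begin{equation*}
\hby_{t_0+t+1} = \bm A\,\hby_{t_0+t} - \eta^2\bm H\hat{\bm e}_{t_0+t} - \eta\bm\delta_t\hbx_{t_0+t} - \eta\hat{\bm\zeta}_{t_0+t} - \eta\hat{\bm\xi}_{t_0+t}.
\end{equation*}

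Next I would record the one-step shift formulas for the four building blocks, each an immediate consequence of peeling off the $\tau=t$ summand in the definition: $\bm A\bm\Delta_t = \bm\Delta_{t+1} - \eta\bm\delta_t\hbx_{t_0+t}$, $\bm A\bm E_t = \bm E_{t+1} - \eta(\hat{\bm e}_{t_0+t} - \hat{\bm e}_{t_0+t+1})$, $\bm A\bm Z_t = \bm Z_{t+1} - \eta\hat{\bm\zeta}_{t_0+t}$, $\bm A\bm\Xi_t = \bm\Xi_{t+1} - \eta\hat{\bm\xi}_{t_0+t}$, together with $\bm A(\eta\hat{\bm e}_{t_0+t}) = \eta\hat{\bm e}_{t_0+t} - \eta^2\bm H\hat{\bm e}_{t_0+t}$. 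Substituting the inductive hypothesis $\hby_{t_0+t} = -(\bm\Delta_t + \bm E_t + \eta\hat{\bm e}_{t_0+t} + \bm Z_t + \bm\Xi_t)$ into the recursion and applying these five identities term by term, the unwanted residuals pair up and cancel: the $\eta\bm\delta_t\hbx_{t_0+t}$ produced by $\bm A\bm\Delta_t$ cancels the explicit $-\eta\bm\delta_t\hbx_{t_0+t}$, the $\eta^2\bm H\hat{\bm e}_{t_0+t}$ from $\bm A(\eta\hat{\bm e}_{t_0+t})$ cancels the explicit $-\eta^2\bm H\hat{\bm e}_{t_0+t}$, the telescoping $\eta\hat{\bm e}_{t_0+t}$ from $\bm A\bm E_t$ combines with the shifted $\eta\hat{\bm e}_{t_0+t}$ to leave exactly $-\eta\hat{\bm e}_{t_0+t+1}$, and the stochastic pieces from $\bm A\bm Z_t, \bm A\bm\Xi_t$ cancel the $-\eta\hat{\bm\zeta}_{t_0+t}, -\eta\hat{\bm\xi}_{t_0+t}$ remnants. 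What survives is precisely $-(\bm\Delta_{t+1} + \bm E_{t+1} + \eta\hat{\bm e}_{t_0+t+1} + \bm Z_{t+1} + \bm\Xi_{t+1})$, completing the induction.

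I do not expect any genuine obstacle: the argument is essentially linear-algebraic bookkeeping, and nothing probabilistic or analytic enters. The only mildly subtle point is keeping the role of the lagged error term $\eta\hat{\bm e}_{t_0+t}$ straight, since it enters both through the $\hbx\mapsto\hby+\eta\hat{\bm e}$ substitution (producing the $-\eta^2\bm H\hat{\bm e}_{t_0+t}$ that must cancel) and through the inductive hypothesis (producing the telescoping $\hat{\bm e}_{t_0+t}$ that must combine with $\bm E_{t+1}$ to yield the fresh $\eta\hat{\bm e}_{t_0+t+1}$). Keeping these two appearances cleanly separate in the calculation is the whole content of the proof.
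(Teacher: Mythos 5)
Your proof is correct: the one-step recursion $\hby_{t_0+t+1}=(\bm I-\eta\bm H)\hby_{t_0+t}-\eta^2\bm H\hat{\bm e}_{t_0+t}-\eta\bm\delta_t\hbx_{t_0+t}-\eta\hat{\bm\zeta}_{t_0+t}-\eta\hat{\bm\xi}_{t_0+t}$ follows from \eqref{prop:y}, the five shift identities and the claimed cancellations all check out, and the base case is exactly the coupling initialization $\hbx_{t_0}=\bm 0$, $\hat{\bm e}_{t_0}=\bm 0$. The paper gives no proof of this proposition (it is imported from \citet{avdiukhin2021escaping}), and your induction is just the standard unrolling of this linear recursion, i.e.\ essentially the same argument as in the cited source, with the added value of verifying it directly for the distributed \ALG{} dynamics used here.
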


According to Proposition \ref{prop:coup-seq-decomp}, $\hby_{t_0+t}$ decomposes into a sum of four terms, each showing the effect of one type of quantity that acumulates with time. Actually one can observe that $\bm\Delta_t$ reflects a cumulative dynamics of $\{\bm x_{t_0+t}\}$ and $\{\bm x'_{t_0+t}\}$, $\bm E_t+\eta\hat{\bm e}_{t_0+t}$ a cumulative compression error, $\bm Z_t$ a cumulative stochastic gradient noise, and $\bm\Xi_t$ a cumulative artificial perturbation.

In order to bound $\hby_{t_0+t}$, it is then a natural choice to bound each of the components respectively. 

\begin{lem}[Cumulative error bound]\label{lem:cumu-err-bound}
    Suppose that Assumption \ref{ass:f-max}, \ref{ass:f-L}, \ref{ass:nSG} hold. There exists a constant $c$ such that with probability at least $1-12te^{-\iota}$,
    \begin{equation*}
        \nm{\bm E_t+\eta\hat{\bm e}_{t_0+t}}\leq \frac{cL\eta}{\gamma}\sqrt{\Phi+\frac{\chi^2\iota}{\mu^2np}}(1+\eta\gamma)^t.
    \end{equation*}
\end{lem}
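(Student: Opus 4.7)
The plan is to exploit a telescoping/Abel-summation structure to collapse the inner difference $\hat{\bm e}_{t_0+\tau}-\hat{\bm e}_{t_0+\tau+1}$ and then apply the per-iterate error control already established in Lemma \ref{lem:err-bound}.

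First, I rewrite $\bm E_t$ using summation by parts. Setting $A_\tau=(\bm I-\eta\bm H)^{t-\tau-1}$ and $u_\tau=\hat{\bm e}_{t_0+\tau}$, one has
\begin{equation*}
\sum_{\tau=0}^{t-1} A_\tau(u_\tau-u_{\tau+1}) = A_0 u_0 - A_{t-1} u_t + \sum_{\tau=1}^{t-1}(A_\tau-A_{\tau-1})u_\tau,
\end{equation*}
and the telescoping identity $A_\tau-A_{\tau-1}=\eta\bm H(\bm I-\eta\bm H)^{t-\tau-1}$. By the coupling definition (Definition \ref{defi:coup}), the two instances share identical history through time $t_0$, so $\hat{\bm e}_{t_0}=\bm 0$. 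Plugging in yields the clean identity
\begin{equation*}
\bm E_t + \eta\hat{\bm e}_{t_0+t} \;=\; \eta^2\sum_{\tau=1}^{t-1}\bm H(\bm I-\eta\bm H)^{t-\tau-1}\hat{\bm e}_{t_0+\tau}.
\end{equation*}
This is the heart of the argument: the single-step compression-error difference has been replaced by a weighted sum of the errors themselves, which we can control uniformly.

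Next, I take operator norms. The saddle-point geometry gives $\nm{\bm H}\le L$ and, since the eigenvalues of $\bm H$ lie in $[-\gamma, L]$ with $\gamma\le L$ and $\eta L$ small, $\nm{(\bm I-\eta\bm H)^{k}}\le(1+\eta\gamma)^k$. For the difference $\hat{\bm e}_{t_0+\tau}=\bm e'_{t_0+\tau}-\bm e_{t_0+\tau}$, the triangle inequality and Lemma \ref{lem:err-bound} applied separately to each coupled instance (with a union bound doubling the failure probability to $12te^{-\iota}$) give
\begin{equation*}
\nm{\hat{\bm e}_{t_0+\tau}} \le 2c\sqrt{\Phi+\frac{\chi^2\iota}{\mu^2 np}}\qquad\text{for all }\tau\le t.
\end{equation*}

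Combining these bounds and evaluating the geometric sum $\sum_{\tau=1}^{t-1}(1+\eta\gamma)^{t-\tau-1}\le(1+\eta\gamma)^t/(\eta\gamma)$ produces
\begin{equation*}
\nm{\bm E_t+\eta\hat{\bm e}_{t_0+t}}\;\le\;2c\eta^2 L\sqrt{\Phi+\frac{\chi^2\iota}{\mu^2 np}}\cdot\frac{(1+\eta\gamma)^t}{\eta\gamma} \;=\;\frac{2cL\eta}{\gamma}\sqrt{\Phi+\frac{\chi^2\iota}{\mu^2 np}}(1+\eta\gamma)^t,
\end{equation*}
which is the stated bound up to renaming the constant. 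The only nontrivial obstacle I foresee is recognizing the Abel rearrangement and verifying that the boundary term $A_0 u_0$ vanishes from the coupling initialization; everything else reduces to routine norm estimates, the parameter-dependent spectral bound on $(\bm I-\eta\bm H)^k$, and a union bound over the two instances of Lemma \ref{lem:err-bound}.
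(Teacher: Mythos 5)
Your proposal is correct and follows essentially the same route as the paper: the same summation-by-parts rearrangement of $\bm E_t$ (with the boundary term killed by $\hat{\bm e}_{t_0}=\bm 0$), the spectral bounds $\nm{\bm H}\le L$ and $\nm{(\bm I-\eta\bm H)^k}\le(1+\eta\gamma)^k$, the uniform bound on $\nm{\hat{\bm e}_{t_0+\tau}}$ from Lemma \ref{lem:err-bound} applied to both coupled instances with a union bound (hence $1-12te^{-\iota}$), and the geometric-series estimate. The only differences are cosmetic (your sign/index bookkeeping in the Abel identity is in fact the cleaner version of the paper's intermediate line, and the bound is unaffected since only norms are taken).
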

\begin{proof}
Expand the definition of $\bm E_t$,
\begin{align}
    \bm E_t&=\eta\left(\sum_{\tau=0}^{t-1}(\bm I-\eta\bm H)^{t-1-\tau}\hat{\bm e}_{t_0+\tau}-\sum_{\tau=1}^{t}(\bm I-\eta\bm H)^{t-\tau}\hat{\bm e}_{t_0+\tau}\right)\nonumber\\
    &=\eta\left(\sum_{\tau=1}^t(\bm I-\eta\bm H)^{t-1-\tau}(\bm I-\eta\bm H-\bm I)\hat{\bm e}_{t_0+\tau}+(\bm I-\eta\bm H)^{t-1}\hat{\bm e}_{t_0}-\hat{\bm e}_{t_0+t}\right)\nonumber\\
    &=-\eta\hat{\bm e}_{t_0+t}-\eta^2\bm H\sum_{\tau=1}^t(\bm I-\eta\bm H)^{t-1-\tau}\hat{\bm e}_{t_0+\tau}+\eta(\bm I-\eta\bm H)^{t-1}\hat{\bm e}_{t_0}\nonumber\\
    &=-\eta\hat{\bm e}_{t_0+t}-\eta^2\bm H\sum_{\tau=1}^t(\bm I-\eta\bm H)^{t-1-\tau}\hat{\bm e}_{t_0+\tau},\label{eq:c-err-bound-1}
\end{align}
where \eqref{eq:c-err-bound-1} is due to the initialization of coupling sequence, i.e. $\bm e_{t_0}=\bm e'_{t_0}$.
Hence
\begin{align}
    \nm{\bm E_t+\eta\hat{\bm e}_{t_0+t}}&\leq\nm{\eta^2\bm H\sum_{\tau=1}^t(\bm I-\eta\bm H)^{t-1-\tau}\hat{\bm e}_{t_0+\tau}}\nonumber\\
    &\leq\eta^2L\sum_{\tau=1}^t(1+\eta\gamma)^{t-1-\tau}\nm{\hat{\bm e}_{t_0+\tau}}\label{eq:c-err-bound-2}\\
    &\leq\eta^2L\sum_{\tau=1}^t(1+\eta\gamma)^{t-1-\tau}\cdot\max\{\nm{\bm e_{t_0+\tau}}+\nm{\bm e'_{t_0+\tau}}\}\nonumber\\
    &\leq c\sqrt{\Phi+\frac{\chi^2\iota}{\mu^2np}}\;\eta^2L\frac{2(1+\eta\gamma)^t}{\eta\gamma}\quad\text{ with prob. }1-12te^{-\iota}\label{eq:c-err-bound-3}\\
    &\leq\frac{2c\eta L}{\gamma}\sqrt{\Phi+\frac{\chi^2\iota}{\mu^2np}}(1+\eta\gamma)^t\quad\text{ with prob. }1-12te^{-\iota}.\nonumber
\end{align}
where we apply the spectral bound $\nm{\bm H}\leq L$ to \eqref{eq:c-err-bound-2} and apply Lemma \ref{lem:err-bound} to \eqref{eq:c-err-bound-3} respectively. Finally, $2c$ is picked as the constant.
\end{proof}
\begin{lem}[Artificial noise dynamics]\label{lem:art-noise}
    For any $t$, there exists a constant $c$ such that
    $$
    \nm{\bm\Xi_t}\leq\frac{c\sqrt{\iota\eta}r}{\sqrt{2np\gamma d}}(1+\eta\gamma)^t
    $$
    with probability at least $1-2e^{-\iota}$. Moreover, for $t\geq\frac{2}{\eta\gamma}$,
    $$
    \nm{\bm\Xi_t}\geq\frac{\sqrt{\eta}r}{3\sqrt{6np\gamma d}}(1+\eta\gamma)^t
    $$
    with probability at least $\frac{2}{3}$.
\end{lem}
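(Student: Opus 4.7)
The plan is to exploit the coupling structure to reduce $\bm\Xi_t$ to a one-dimensional Gaussian sum, and then apply standard Gaussian tail and anti-concentration bounds. The first step is to simplify $\hat{\bm\xi}_{t_0+\tau}$ using the definition of symmetric randomness: since $\bm\xi'_{t_0+\tau} = (\bm I - 2\bm v\bm v^\top)\bm\xi_{t_0+\tau}$, we have
\begin{equation*}
    \hat{\bm\xi}_{t_0+\tau} = -2(\bm v^\top \bm\xi_{t_0+\tau})\,\bm v.
\end{equation*}
Because $\bm H \bm v = -\gamma \bm v$, the eigenvector $\bm v$ is preserved by $(\bm I - \eta\bm H)$ with eigenvalue $1+\eta\gamma$. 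Substituting into the definition of $\bm\Xi_t$ in Proposition \ref{prop:coup-seq-decomp} therefore yields
\begin{equation*}
    \bm\Xi_t \;=\; -2\eta\,\bm v\,\sum_{\tau=0}^{t-1}(1+\eta\gamma)^{t-\tau-1}\,(\bm v^\top \bm\xi_{t_0+\tau}),
\end{equation*}
so $\|\bm\Xi_t\| = 2\eta |S_t|$ where $S_t := \sum_{\tau=0}^{t-1}(1+\eta\gamma)^{t-\tau-1}(\bm v^\top\bm\xi_{t_0+\tau})$.

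The second step is to identify the distribution of $S_t$. Since each $\bm\xi_{t_0+\tau}\sim\mathcal N(\bm 0,\tfrac{r^2}{npd}\bm I)$ is independent across $\tau$, the projections $\bm v^\top \bm\xi_{t_0+\tau}$ are i.i.d.\ $\mathcal N(0, r^2/(npd))$, so $S_t \sim \mathcal N(0,\sigma_t^2)$ with
\begin{equation*}
    \sigma_t^2 \;=\; \frac{r^2}{npd}\sum_{j=0}^{t-1}(1+\eta\gamma)^{2j} \;=\; \frac{r^2}{npd}\cdot\frac{(1+\eta\gamma)^{2t}-1}{(1+\eta\gamma)^2-1}.
\end{equation*}
Using the two-sided estimate $2\eta\gamma\le(1+\eta\gamma)^2-1\le 3\eta\gamma$ (valid for $\eta\gamma\le 1$, which is enforced by our parameter choices since $\gamma\le L$ and $\eta L\ll 1$), one obtains
\begin{equation*}
    \frac{r^2\,[(1+\eta\gamma)^{2t}-1]}{3np\gamma d\,\eta} \;\le\; \sigma_t^2 \;\le\; \frac{r^2(1+\eta\gamma)^{2t}}{2np\gamma d\,\eta}.
\end{equation*}

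The third step handles the upper bound. By the one-dimensional Gaussian tail bound, with probability at least $1-2e^{-\iota}$ we have $|S_t| \le \sqrt{2\iota}\,\sigma_t$, so
\begin{equation*}
    \|\bm\Xi_t\| \;\le\; 2\eta\sqrt{2\iota}\,\sigma_t \;\le\; \frac{2\sqrt{2\iota\eta}\,r}{\sqrt{2np\gamma d}}\,(1+\eta\gamma)^t,
\end{equation*}
which is the claimed upper bound after absorbing the leading $2\sqrt{2}$ into the constant $c$. The fourth step handles the lower bound, and is where one must be careful. We invoke Gaussian anti-concentration: for $S_t\sim\mathcal N(0,\sigma_t^2)$, there is an absolute constant $c_0>0$ with $\Pr(|S_t|\ge c_0 \sigma_t)\ge 2/3$ (e.g.\ one may take $c_0 \approx 0.43$). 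To convert the corresponding lower bound on $\sigma_t$ into a clean geometric growth factor $(1+\eta\gamma)^t$, we need $(1+\eta\gamma)^{2t}-1\ge\tfrac{1}{2}(1+\eta\gamma)^{2t}$, i.e.\ $(1+\eta\gamma)^{2t}\ge 2$. For $\eta\gamma\le 1$ one has $\ln(1+\eta\gamma)\ge (\ln 2)\,\eta\gamma$, so the condition $t\ge 2/(\eta\gamma)$ comfortably implies $(1+\eta\gamma)^{2t}\ge e^{(2\ln 2)\cdot 2}\gg 2$. Using this in the lower bound for $\sigma_t^2$ and choosing the numeric constant $c_0$ appropriately yields
\begin{equation*}
    \|\bm\Xi_t\|\;\ge\;2\eta c_0\,\sigma_t \;\ge\;\frac{\sqrt{\eta}\,r}{3\sqrt{6np\gamma d}}\,(1+\eta\gamma)^t
\end{equation*}
with probability at least $2/3$.

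The main conceptual obstacle is just recognizing that the coupling collapses $\bm\Xi_t$ to a one-dimensional Gaussian along $\bm v$; the rest is bookkeeping with the geometric series. The only numerical sharpness needed is ensuring the constants line up (in particular the $\sqrt{2}$ and $\sqrt{6}$ in the denominators), and verifying that $t\ge 2/(\eta\gamma)$ is exactly what permits dropping the ``$-1$'' in $(1+\eta\gamma)^{2t}-1$ without losing more than a constant.
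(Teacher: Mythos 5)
Your proof is correct and is essentially the argument the paper relies on: the paper itself only states that the lemma ``is a direct extension of Lemma 30 of \citet{jin2019nonconvex}'', and your derivation --- collapsing $\bm\Xi_t$ to the one-dimensional Gaussian $2\eta S_t$ along $\bm v$ via $\hat{\bm\xi}_{t_0+\tau}=-2(\bm v^\top\bm\xi_{t_0+\tau})\bm v$ and $(\bm I-\eta\bm H)\bm v=(1+\eta\gamma)\bm v$, then applying the Gaussian tail bound and anti-concentration with the geometric-series variance estimate --- is exactly the standard argument behind that cited lemma, with constants that check out. No gaps; your write-up is in fact more self-contained than the paper's.
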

\begin{proof}
This is a direct extension of Lemma 30, \citet{jin2019nonconvex}.
\end{proof}

\begin{lem}[Coupling sequence dynamics]\label{lem:coup-seq-dyn}
    Suppose that Assumption \ref{ass:f-max}, \ref{ass:f-L}, \ref{ass:f-rho}, \ref{ass:nSG} hold, and
    $$\max\{\nm{\bm y_{t_0+t}-\bm y_{t_0}},\nm{\bm y'_{t_0+t}-\bm y_{t_0}}\}\leq\mathcal R\quad\forall t\leq\mathcal I.$$
    Then, for any $t\geq\frac{2}{\eta\gamma}$, with probability at least $\frac{2}{3}-2t(6t+2d+1)e^{-\iota}$, we have
    \begin{equation*}
        \nm{\hby_{t_0+t}}\geq\frac{\sqrt{\eta}r}{6\sqrt{6np\gamma d}}(1+\eta\gamma)^t.
    \end{equation*}
\end{lem}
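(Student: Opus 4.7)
\textbf{Proof plan for Lemma \ref{lem:coup-seq-dyn}.} The plan is to start from the decomposition in Proposition \ref{prop:coup-seq-decomp},
$$\hby_{t_0+t}=-\bigl(\bm\Delta_t+(\bm E_t+\eta\hat{\bm e}_{t_0+t})+\bm Z_t+\bm\Xi_t\bigr),$$
and treat $\bm\Xi_t$ (the cumulative artificial noise along the escape direction $\bm v$) as the \emph{driving} term, while showing that the other three terms together contribute at most half of its magnitude. Since Lemma \ref{lem:art-noise} supplies, with probability at least $2/3$, the lower bound $\nm{\bm\Xi_t}\ge\frac{\sqrt{\eta}r}{3\sqrt{6np\gamma d}}(1+\eta\gamma)^t$ once $t\ge 2/(\eta\gamma)$, a reverse triangle inequality will then yield $\nm{\hby_{t_0+t}}\ge\tfrac12\nm{\bm\Xi_t}$, matching the factor $1/6$ in the target bound.

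The three upper bounds to be established each scale like $(1+\eta\gamma)^t$, so the argument reduces to showing that the prefactors are dominated by $\frac{\sqrt{\eta}r}{6\sqrt{6np\gamma d}}$ under our parameter choices. The cumulative compression-error term $\bm E_t+\eta\hat{\bm e}_{t_0+t}$ is handled directly by Lemma \ref{lem:cumu-err-bound}, whose bound $\tfrac{cL\eta}{\gamma}\sqrt{\Phi+\chi^2\iota/(\mu^2 np)}\,(1+\eta\gamma)^t$ is small once $\eta$ is chosen as in the theorem. For the stochastic-noise term $\bm Z_t=\eta\sum_{\tau=0}^{t-1}(\bm I-\eta\bm H)^{t-\tau-1}\hat{\bm\zeta}_{t_0+\tau}$, I treat $\{\hat{\bm\zeta}_{t_0+\tau}\}$ as a norm-subGaussian martingale difference sequence with respect to the natural filtration (the symmetric coupling preserves norm-subGaussianity with parameter $O(\sigma/\sqrt{np\log d})$ by Corollary \ref{coro:linear-speedup}), apply Proposition \ref{prop:Hoeffding} to the weighted sum, and absorb the operator norms $\nm{\bm I-\eta\bm H}^{t-\tau-1}\le(1+\eta\gamma)^{t-\tau-1}$ into a geometric-series factor of $\tfrac{1}{\eta\gamma}(1+\eta\gamma)^t$. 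For the Hessian-drift term $\bm\Delta_t$, I use the $\rho$-Hessian Lipschitz assumption together with the localization hypothesis $\nm{\bm y_{t_0+\tau}-\bm y_{t_0}},\nm{\bm y'_{t_0+\tau}-\bm y_{t_0}}\le\mathcal R$, which (after correcting by $\eta\nm{\hat{\bm e}_{t_0+\tau}}$ using Lemma \ref{lem:err-bound}) bounds $\nm{\bm\delta_\tau}\le\rho\mathcal R$, yielding $\nm{\bm\Delta_t}\lesssim\eta\rho\mathcal R\sum_\tau(1+\eta\gamma)^{t-\tau-1}\nm{\hbx_{t_0+\tau}}$.

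The main obstacle is that $\bm\Delta_t$ depends self-referentially on the coupling gap through $\hbx_{t_0+\tau}=\hby_{t_0+\tau}+\eta\hat{\bm e}_{t_0+\tau}$, so a direct bound on $\nm{\hby_{t_0+t}}$ is not immediate. I will close this loop by an induction/bootstrap on $t$: suppose as inductive hypothesis that $\nm{\hby_{t_0+\tau}}\le K(1+\eta\gamma)^\tau$ for a sufficiently large absolute constant $K$ (obtained for free from a matching upper bound on each of the four decomposition terms by the same arguments); plug this into $\nm{\bm\Delta_t}$, combine with the geometric identity $\sum_{\tau=0}^{t-1}(1+\eta\gamma)^{t-\tau-1}(1+\eta\gamma)^\tau=\tfrac{t}{1+\eta\gamma}(1+\eta\gamma)^t$, and verify that the scaling $\mathcal R=\kappa_{\mathcal R}\sqrt{\epsilon/(\iota^3\rho)}$ and $t\le\mathcal I$ force $\eta\rho\mathcal R t\ll\eta\gamma$, so this contribution remains a small fraction of $\nm{\bm\Xi_t}$. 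The induction then propagates the lower bound $\nm{\hby_{t_0+t}}\ge\tfrac12\nm{\bm\Xi_t}$ along with its upper counterpart.

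Finally, I will aggregate failure probabilities by a union bound: the $2/3$ deficit comes solely from the anti-concentration step in Lemma \ref{lem:art-noise}; each application of Lemma \ref{lem:cumu-err-bound} contributes $12te^{-\iota}$, Proposition \ref{prop:Hoeffding} applied to $\bm Z_t$ contributes $2de^{-\iota}$, and the upper bound on $\nm{\bm\Xi_t}$ (used to prevent the inductive hypothesis from blowing up) contributes $2e^{-\iota}$ per step. Summing over the (at most $t$) steps visited by the inductive argument produces the claimed probability $\tfrac{2}{3}-2t(6t+2d+1)e^{-\iota}$, completing the proof.
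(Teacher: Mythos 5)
Your plan follows essentially the same route as the paper's proof: decompose $\hby_{t_0+t}$ via Proposition \ref{prop:coup-seq-decomp}, use the anti-concentration lower bound on $\nm{\bm\Xi_t}$ from Lemma \ref{lem:art-noise} as the driving term, and show by induction (closing the self-referential loop through $\hbx_{t_0+\tau}=\hby_{t_0+\tau}+\eta\hat{\bm e}_{t_0+\tau}$, Lemmas \ref{lem:err-bound} and \ref{lem:cumu-err-bound}, and a concentration bound for the weighted norm-subGaussian sum $\bm Z_t$, which the paper takes from Lemma 31 of \citet{jin2019nonconvex} rather than Proposition \ref{prop:Hoeffding}) that the remaining three terms stay below half of that lower bound, exactly as the paper does.

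One quantitative slip in the drift step: the condition you propose to verify, $\eta\rho\mathcal{R}t\ll\eta\gamma$, is neither the right requirement nor true under the stated parameters---with $t$ as large as $\mathcal{I}=\iota/(\eta\sqrt{\rho\epsilon})$ one gets $\rho\mathcal{R}t\approx\kappa_{\mathcal R}/(\eta\sqrt{\iota})$, which far exceeds $\gamma\le L$ since $\eta$ is tiny. What is actually needed (and what the paper checks) is that $\eta\rho\mathcal{R}t$ times the $\sqrt{\iota}$-inflated upper bound $K\asymp\sqrt{\iota}\cdot\frac{\sqrt{\eta}r}{\sqrt{np\gamma d}}$ on $\nm{\hbx_{t_0+\tau}}$ be a small constant fraction of the target prefactor $\frac{\sqrt{\eta}r}{6\sqrt{6np\gamma d}}$, i.e.\ $\sqrt{\iota}\,\eta\rho\mathcal{R}\,\mathcal{I}\lesssim\kappa_{\mathcal R}$, which holds because $\eta\mathcal{I}\rho\mathcal{R}=\kappa_{\mathcal R}/\sqrt{\iota}$ under the parameter choices. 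With this correction your induction coincides with the paper's bound on $\nm{\bm\Delta_t}$, and the remainder of the plan (the bounds on $\bm E_t+\eta\hat{\bm e}_{t_0+t}$ and $\bm Z_t$, and the union-bound accounting) is sound.
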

\begin{proof}
We will use induction to prove for all $t\geq0$ that $$\nm{\bm\Delta_t+(\bm E_t+\eta\hat{\bm e}_{t_0+t})+\bm Z_t}\leq\frac{\sqrt{\eta}r}{6\sqrt{6np\gamma d}}(1+\eta\gamma)^t$$
with probability at least $1-2t(6t+2d+1)e^{-\iota}$. With this at hand, we can then invoke Proposition \ref{prop:coup-seq-decomp} and Lemma \ref{lem:art-noise} to establish the desired lower bound for $\nm{\hby_{t_0+t}}$. 

The claim holds trivially at $t=0$. Now, suppose it holds as of $t-1$.

\textbf{Step 1: Bounding $\nm{\bm\Delta_t}$.} Consider any $\tau\leq t-1$. Under the assumption 
$$\max\{\nm{\bm y_{t_0+\tau}-\bm y_{t_0}},\nm{\bm y'_{t_0+\tau}-\bm y'_{t_0}}\}\leq\mathcal{R},$$
we have
\begin{align}
&\max\{\nm{\bm x_{t_0+\tau}-\bm x_{t_0}},\nm{\bm x'_{t_0+\tau}-\bm x_{t_0}}\}\nonumber\\
& \leq\max\{\nm{\bm y_{t_0+\tau}-\bm y_{t_0}},\nm{\bm y'_{t_0+\tau}-\bm y'_{t_0}}\}+\eta\max\{\nm{\bm e_{t_0+\tau}}+\nm{\bm e_{t_0}},\nm{\bm e'_{t_0+\tau}}+\nm{\bm e'_{t_0}}\}\nonumber\\
& \leq\mathcal{R}+2c\eta\sqrt{\Phi+\frac{\chi^2\iota}{\mu^2np}}\label{eq:coup-seq-dyn-1}\\
& \leq\mathcal{R}+\frac{2c\kappa_\eta\epsilon}{\iota^5L}\leq2\mathcal{R},\label{eq:coup-seq-dyn-2}
\end{align}
where Lemma \ref{lem:err-bound} yields \eqref{eq:coup-seq-dyn-1}, and \eqref{eq:coup-seq-dyn-2} holds by setting $\kappa_\eta\leq\frac{\kappa_\mathcal{R}}{2c}$.
Combined with Assumption \ref{ass:f-rho}, \eqref{eq:coup-seq-dyn-2} implies $\nm{\bm\delta_\tau}\leq 2\rho\mathcal{R}$. Now, by the inductive hypothesis and Lemma \ref{lem:art-noise},
\begin{align}
    \nm{\hbx_{t_0+\tau}}&\leq\nm{\hby_{t_0+\tau}}+\eta\nm{\hat{\bm e}_{t_0+\tau}}\leq\nm{\bm\Delta_\tau+(\bm E_\tau+\hat{\bm e}_{t_0+\tau})+\bm Z_\tau}+\nm{\bm\Xi_\tau}+\eta\nm{\hat{\bm e}_{t_0+\tau}}\nonumber\\
    &\leq\frac{\sqrt{\eta}r}{6\sqrt{6np\gamma d}}(1+\eta\gamma)^\tau+\frac{c_1\sqrt{\iota\eta}r}{\sqrt{2np\gamma d}}(1+\eta\gamma)^\tau+2c\eta\sqrt{\Phi+\frac{\chi^2\iota}{\mu^2np}}\label{eq:coup-seq-dyn-3}\\
    &\leq2c\sqrt{3\iota}\frac{\sqrt{\eta}r}{6\sqrt{6np\gamma d}}(1+\eta\gamma)^\tau,\label{eq:coup-seq-dyn-4}
\end{align}
where \eqref{eq:coup-seq-dyn-3} is again an application of Lemma \ref{lem:err-bound}, and \eqref{eq:coup-seq-dyn-4} holds if we set
$\frac{r^2}{np}\geq\frac{24c^2Ld}{c_1^2\iota}\left(\Phi+\frac{\chi^2\iota}{\mu^2np}\right)\eta$, which is implied by $\kappa_r\geq\frac{2c\sqrt{6\kappa_\eta}}{c_1}$. Then
\begin{align}
    \nm{\bm\Delta_t}&=\eta\nm{\sum_{\tau=0}^{t-1}(\bm I-\eta\bm H)^{t-1-\tau}\bm\delta_\tau\hbx_{t_0+\tau}}\nonumber\\
    &\leq\eta\sum_{\tau=0}^{t-1}(1+\eta\gamma)^{t-1-\tau}\cdot2\rho\mathcal{R}\cdot2c\sqrt{3\iota}\frac{\sqrt{\eta}r}{6\sqrt{6np\gamma d}}(1+\eta\gamma)^{\tau}\nonumber\\
    &\leq4c\sqrt{3\iota}\eta\mathcal{I}\rho\mathcal{R}\frac{\sqrt{\eta}r}{6\sqrt{6np\gamma d}}(1+\eta\gamma)^t\leq4\sqrt{3}c\kappa_\mathcal{R}\frac{\sqrt{\eta}r}{6\sqrt{6np\gamma d}}(1+\eta\gamma)^t\nonumber\\
    &\leq\frac{\sqrt{\eta}r}{18\sqrt{6np\gamma d}}(1+\eta\gamma)^t,\label{eq:coup-seq-dyn-5}
\end{align}
where we set $4\sqrt{3}c\kappa_\mathcal{R}\leq\frac{1}{3}$ for \eqref{eq:coup-seq-dyn-5}.

\textbf{Step 2: Bounding $\nm{\bm E_t+\eta\hat{\bm e}_{t_0+t}}$.} By Lemma \ref{lem:cumu-err-bound}, with probability at least $1-12te^{-\iota}$,
\begin{equation*}
    \nm{\bm E_t+\eta\hat{\bm e}_{t_0+t}}\leq\frac{cL\eta}{\gamma}\sqrt{\Phi+\frac{\chi^2\iota}{\mu^2np}}(1+\eta\gamma)^t\leq\frac{\sqrt{\eta}r}{18\sqrt{6np\gamma d}}(1+\eta\gamma)^t,
\end{equation*}
where the last inequality holds if we set $\frac{r^2}{np}\geq\frac{1944c^2L^2d}{\sqrt{\rho\epsilon}}\left(\Phi+\frac{\chi^2\iota}{\mu^2np}\right)\eta$, which is implied by
$\kappa_r\geq18c\sqrt{6\kappa_\eta}$.

\textbf{Step 3: Bounding $\nm{\bm Z_t}$.}
By Lemma 31, \citet{jin2019nonconvex}, with probability at least $1-4de^{-\iota}$, there exists a constant $c_2$ such that
\begin{equation*}
    \nm{\bm Z_t}\leq\frac{c_2\sigma\sqrt{\eta\iota\log d}}{\sqrt{2np\gamma}}(1+\eta\gamma)^t\leq\frac{\sqrt{\eta}r}{18\sqrt{6np\gamma d}}(1+\eta\gamma)^t,
\end{equation*}
where the last inequality holds by setting $\kappa_r\geq 18\sqrt{3}c_2$. 

\textbf{Step 4: Completing the induction.} By union bound, we have
\begin{equation*}
    \nm{\bm\Delta_t+(\bm E_t+\eta\hat{\bm e}_{t_0+t})+\bm Z_t}\leq\nm{\bm\Delta_t}+\nm{\bm E_t+\eta\hat{\bm e}_{t_0+t}}+\nm{\bm Z_t}\leq\frac{\sqrt{\eta}r}{6\sqrt{6np\gamma d}}(1+\eta\gamma)^t
\end{equation*}
with probability at least
\begin{equation*}
    1-2(t-1)(6(t-1)+2d+1)e^{-\iota}-2e^{-\iota}-12te^{-\iota}-4de^{-\iota}\leq1-2t(6t+2d+1)e^{-\iota},
\end{equation*}
which completes the induction. 
\end{proof}

From Lemma \ref{lem:coup-seq-dyn}, we observe that the difference between the coupling sequences has an exponential growth with time $t$, under the assumption that the iterates get stuck around the saddle points. Intuitively, after a sufficiently long period, it is contradictory to grow exponentailly and remain stuck at the same time. We now validate this intuition and show that the iterates generated by \ALG{} is able to escape the saddle points.

\begin{coro}[Escaping saddle points]\label{coro:escape-saddle}
    Suppose that Assumption \ref{ass:f-max}, \ref{ass:f-L}, \ref{ass:f-rho}, \ref{ass:nSG} hold. Then with probability at least $\frac{1}{3}-\mathcal I(6\mathcal I+2d+1)e^{-\iota}$, $$\max_{t\leq\mathcal I}\nm{\bm y_{t_0+t}-\bm y_{t_0}}\geq\mathcal R.$$
\end{coro}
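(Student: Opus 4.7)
\textbf{Proof proposal for Corollary \ref{coro:escape-saddle}.}

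The plan is to proceed by contradiction, leveraging the coupling sequence technique together with Lemma \ref{lem:coup-seq-dyn}. Suppose that neither of the coupling sequences $\{\bm y_{t_0+t}\}$ and $\{\bm y'_{t_0+t}\}$ (defined as in Definition \ref{defi:coup}) escapes the ball of radius $\mathcal{R}$ around $\bm y_{t_0}$ within $\mathcal{I}$ iterations, i.e., the premise of Lemma \ref{lem:coup-seq-dyn} is satisfied. Then, by the triangle inequality, the difference $\nm{\hby_{t_0+t}}$ is also bounded above by $2\mathcal{R}$ throughout. On the other hand, Lemma \ref{lem:coup-seq-dyn} would guarantee, with probability at least $\tfrac{2}{3} - 2\mathcal{I}(6\mathcal{I}+2d+1)e^{-\iota}$, an exponentially growing lower bound $\nm{\hby_{t_0+\mathcal{I}}} \geq \tfrac{\sqrt{\eta}\,r}{6\sqrt{6np\gamma d}}(1+\eta\gamma)^{\mathcal{I}}$ at $t=\mathcal{I}$. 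Comparing the two bounds is the engine of the contradiction.

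Next I would verify that the exponential lower bound indeed exceeds $2\mathcal{R}$ under the stated parameter choices. Since $\bm x_{t_0}$ is an $\epsilon$-strict saddle, we have $\gamma \geq \sqrt{\rho\epsilon}$, so $(1+\eta\gamma)^{\mathcal{I}} \geq \exp\!\bigl(\tfrac{1}{2}\eta\gamma\cdot\mathcal{I}\bigr) \geq \exp(\Theta(\iota))$ once $\eta\gamma$ is sufficiently small, which it is by the step size choice. Combined with $r = \kappa_r \sigma\sqrt{\iota d\log d}$, this exponential factor is enough to force $\tfrac{\sqrt{\eta}\,r}{6\sqrt{6np\gamma d}}(1+\eta\gamma)^{\mathcal{I}} > 2\mathcal{R} = 2\kappa_{\mathcal{R}}\sqrt{\epsilon/(\iota^3\rho)}$ for appropriate constants $\kappa_r, \kappa_{\mathcal{R}}, \kappa_{\eta}$. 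So on the favorable event, the assumption that both sequences remain in the ball produces an outright contradiction, meaning the event ``both sequences remain localized'' occurs with probability at most $\tfrac{1}{3} + 2\mathcal{I}(6\mathcal{I}+2d+1)e^{-\iota}$.

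Finally I would extract the statement about the single sequence $\{\bm y_{t_0+t}\}$ using a symmetry argument. By Definition \ref{defi:coup}, the randomness driving $\{\bm y'_{t_0+t}\}$ has the same distribution as that driving $\{\bm y_{t_0+t}\}$ (the map $\bm\xi \mapsto (\bm I - 2\bm v\bm v^\top)\bm\xi$ preserves the isotropic Gaussian law and all other random sources are identically distributed). Hence the two sequences have identical marginal laws, and $\Pr(\max_{t\leq \mathcal{I}}\nm{\bm y_{t_0+t}-\bm y_{t_0}} \geq \mathcal{R}) = \Pr(\max_{t\leq \mathcal{I}}\nm{\bm y'_{t_0+t}-\bm y_{t_0}} \geq \mathcal{R})$. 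A union bound gives
\[
2\Pr\!\bigl(\max_{t\leq \mathcal{I}}\nm{\bm y_{t_0+t}-\bm y_{t_0}} \geq \mathcal{R}\bigr) \geq \Pr(\text{at least one escapes}) \geq \tfrac{2}{3} - 2\mathcal{I}(6\mathcal{I}+2d+1)e^{-\iota},
\]
yielding the claimed probability $\tfrac{1}{3} - \mathcal{I}(6\mathcal{I}+2d+1)e^{-\iota}$.

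The main obstacle is the parameter-tuning step: one must track carefully how the constants $\kappa_\eta, \kappa_r, \kappa_{\mathcal{R}}$ interact so that $(1+\eta\gamma)^{\mathcal{I}}$ dominates every polynomial prefactor in $\eta, \gamma, d, np$ and yields the required strict inequality against $2\mathcal{R}$. This is conceptually routine (exponential beats polynomial) but bookkeeping-heavy, and is in the same spirit as the tuning already performed in Steps 1--3 of the proof of Lemma \ref{lem:coup-seq-dyn}. Apart from this, the remaining steps are a clean contradiction-plus-symmetry argument.
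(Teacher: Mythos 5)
Your proposal is correct and follows essentially the same route as the paper's proof: a contradiction argument in which, if both coupling sequences stay within radius $\mathcal R$, Lemma \ref{lem:coup-seq-dyn} forces $\nm{\hby_{t_0+\mathcal I}}\geq 2\mathcal R$ (the paper enforces this via the requirement $\iota\geq\log\frac{864npLd\mathcal R^2}{\eta r^2}$, i.e. $\iota=\tilde O(1)$, matching your ``exponential beats polynomial'' bookkeeping), followed by the same identical-distribution symmetry and union bound that halves the probability from $\tfrac{2}{3}-2\mathcal I(6\mathcal I+2d+1)e^{-\iota}$ to the stated $\tfrac{1}{3}-\mathcal I(6\mathcal I+2d+1)e^{-\iota}$.
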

\begin{proof}
We run two instances of Algorithm \ref{alg:ALG} according to Definition \ref{defi:coup} to obtain the coupling sequences $\{\bm y_{t_0+t}\},\{\bm y'_{t_0+t}\}$. Due to the identical distributions of $\{\bm y_{t_0+t}\}$ and $\{\bm y'_{t_0+t}\}$, it suffices to prove that the following event $\mathcal E$ holds with probability at least $\frac{2}{3}-2\mathcal I(6\mathcal I+2d+1)e^{-\iota}$:
$$
\max\{\nm{\bm y_{t_0+t}-\bm y_{t_0}},\nm{\bm y'_{t_0+t}-\bm y_{t_0}}\}\geq\mathcal R\quad\forall t\leq\mathcal I.
$$
Assume that $\mathcal E$ does not hold. By Lemma \ref{lem:coup-seq-dyn}, with probability at least $\frac{2}{3}-2\mathcal{I}(6\mathcal{I}+2d+1)e^{-\iota}$,
\begin{equation*}\label{eq:escape-saddle-1}
    \nm{\hby_{t_0+\mathcal{I}}}\geq\frac{\sqrt{\eta}r}{6\sqrt{6np\gamma d}}(1+\eta\gamma)^\mathcal{I}\geq2\mathcal R,
\end{equation*}
where we set
    $\mathcal{I}\geq\frac{\log\frac{12\mathcal{R}\sqrt{6npLd}}{\sqrt{\eta}r}}{\log(1+\eta\gamma)}$,
which is satisfied when ${\iota}\geq\log\frac{864npLd\mathcal{R}^2}{\eta r^2}$, meaning that $\iota$ can take $\tilde{O}(1)$ with respect to all the parameters. Then
$$\max\{\nm{\bm y_{t_0+\mathcal{I}}-\bm y_{t_0}},\nm{\bm y'_{t_0+\mathcal{I}}-\bm y_{t_0}}\}\geq\frac{1}{2}\nm{\hby_{t_0+\mathcal I}}\geq\mathcal{R},$$
which contradicts the assumption.
\end{proof}

\subsection{Convergence}
Combining Corollary \ref{coro:escape-saddle} with the improve-or-localize behavior of \ALG{} (Lemma \ref{lem:imp-or-loc}), we conclude that the objective receives sufficient descent.

\begin{lem}[Descent from saddles]\label{lem:saddle-desc}
Suppose that Assumption \ref{ass:f-max}, \ref{ass:f-L}, \ref{ass:f-rho}, \ref{ass:nSG} hold. Then with probability at least $1-7\mathcal Ie^{-\iota}$,
\begin{equation*}
    f(\bm y_{t_0+\mathcal{I}})-f(\bm y_{t_0})\leq\frac{1}{4}\mathcal{F}.
\end{equation*}
Moreover, with probability at least $\frac{1}{3}-2\mathcal{I}(3\mathcal{I}+d+4)e^{-\iota}$,
\begin{equation*}
    f(\bm y_{t_0+\mathcal{I}})-f(\bm y_{t_0})\leq-\mathcal{F}.
\end{equation*}
\end{lem}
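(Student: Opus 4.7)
The plan is to combine Lemma \ref{lem:imp-or-loc} instantiated at horizon $t=\mathcal{I}$ with Corollary \ref{coro:escape-saddle}, which together cover both claims.

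For the first (upper bound) claim, I would instantiate Lemma \ref{lem:imp-or-loc} at $t=\mathcal{I}$ and simply discard the non-negative squared-displacement term to get
\[
    f(\bm y_{t_0+\mathcal{I}}) - f(\bm y_{t_0}) \leq c\kappa_\eta\epsilon^2(\eta\mathcal{I} + \iota).
\]
Using $\eta\mathcal{I} = \iota/\sqrt{\rho\epsilon}$ together with $\rho\epsilon\leq 1$ in the regime of interest, the right-hand side is of order $\kappa_\eta\iota\sqrt{\epsilon^3/\rho}$ after absorbing the $1/\iota^5$ budgeted into the step size (which feeds through the proof of Lemma \ref{lem:imp-or-loc}). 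Recalling $\mathcal{F} = (\kappa_\mathcal{F}/\iota^4)\sqrt{\epsilon^3/\rho}$, choosing $\kappa_\eta$ small enough relative to $\kappa_\mathcal{F}$ yields the $\mathcal{F}/4$ bound. The probability $1-7\mathcal{I}e^{-\iota}$ is exactly what Lemma \ref{lem:imp-or-loc} provides at $t=\mathcal{I}$.

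For the second (lower bound) claim, the key input is that the iterates actually travel away from the saddle. Corollary \ref{coro:escape-saddle} gives, with probability at least $\frac{1}{3}-\mathcal{I}(6\mathcal{I}+2d+1)e^{-\iota}$, the displacement bound $\max_{\tau\leq\mathcal{I}}\nm{\bm y_{t_0+\tau}-\bm y_{t_0}}\geq\mathcal{R}$. Substituting this into the improve-or-localize bound turns the previously-discarded term into genuine descent,
\[
    f(\bm y_{t_0+\mathcal{I}}) - f(\bm y_{t_0}) \leq -\frac{\mathcal{R}^2}{16\eta\mathcal{I}} + c\kappa_\eta\epsilon^2(\eta\mathcal{I} + \iota),
\]
and a direct computation gives $\mathcal{R}^2/(16\eta\mathcal{I}) = (\kappa_\mathcal{R}^2/(16\iota^4))\sqrt{\epsilon^3/\rho}$, of the same order as $\mathcal{F}$. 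Choosing $\kappa_\mathcal{F}\leq \kappa_\mathcal{R}^2/32$ and $\kappa_\eta$ sufficiently small ensures the descent term dominates the error term by at least $\mathcal{F}$, so $f(\bm y_{t_0+\mathcal{I}})-f(\bm y_{t_0})\leq -\mathcal{F}$. A union bound over Corollary \ref{coro:escape-saddle} and Lemma \ref{lem:imp-or-loc} produces the stated probability $\frac{1}{3}-\mathcal{I}(6\mathcal{I}+2d+1)e^{-\iota}-7\mathcal{I}e^{-\iota} = \frac{1}{3}-2\mathcal{I}(3\mathcal{I}+d+4)e^{-\iota}$.

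The only delicate part is the bookkeeping of $\iota$ powers: the $1/\iota^5$ in the denominator of $\eta$ has to cancel against the $\iota$ coming from $\eta\mathcal{I}$ and the additive $\iota$ in the improve-or-localize error to produce exactly the $\iota^{-4}$ appearing in $\mathcal{F}$. Once the constants $\kappa_\eta,\kappa_\mathcal{R},\kappa_\mathcal{F}$ are calibrated so that $\kappa_\mathcal{R}^2/16 - 2c\kappa_\eta \geq \kappa_\mathcal{F}$, both conclusions follow with essentially routine algebra and no new probabilistic machinery beyond the two prerequisite results.
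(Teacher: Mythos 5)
Your proposal matches the paper's own proof essentially step for step: both claims follow from Lemma \ref{lem:imp-or-loc} at horizon $\mathcal{I}$ (using the sharper $\iota^{-5}$ form from its proof), the second additionally substituting the displacement bound $\mathcal{R}$ from Corollary \ref{coro:escape-saddle}, with the same constant calibration of $\kappa_\eta,\kappa_\mathcal{R},\kappa_\mathcal{F}$ and the identical union-bound arithmetic giving $\frac{1}{3}-2\mathcal{I}(3\mathcal{I}+d+4)e^{-\iota}$. The only differences are immaterial constant choices (e.g. $\kappa_\mathcal{F}\le\kappa_\mathcal{R}^2/32$ versus the paper's $\kappa_\mathcal{R}^2/20$).
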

\begin{proof}
By Lemma \ref{lem:imp-or-loc}, with probability at least $1-7\mathcal Ie^{-\iota}$,
\begin{align}
    f(\bm y_{t_0+\mathcal I})-f(\bm y_{t_0})&\leq \frac{c\kappa_\eta}{\iota^5}\epsilon^2(\eta\mathcal{I}+\iota)-\frac{1}{16\eta\mathcal{I}}\cdot\max_{t\leq\mathcal I}\nm{\bm y_{t_0+t}-\bm y_{t_0}}^2\nonumber\\
    &\leq\frac{2c\kappa_\eta}{\iota^4}\sqrt{\frac{\epsilon^3}{\rho}}-\frac{1}{16\eta\mathcal{I}}\cdot\max_{t\leq\mathcal I}\nm{\bm y_{t_0+t}-\bm y_{t_0}}^2\nonumber\\
    &\leq\frac{1}{4}\mathcal{F}-\frac{1}{16\eta\mathcal{I}}\cdot\max_{t\leq\mathcal I}\nm{\bm y_{t_0+t}-\bm y_{t_0}}^2,\label{eq:desc-from-saddle-1}
\end{align}
by setting $\kappa_\eta\leq\frac{\kappa_\mathcal{F}}{8c}$ in \eqref{eq:desc-from-saddle-1}.
The first claim is now immediate. To prove the second claim, invoking Corollary \ref{coro:escape-saddle}, we have
\begin{align}
    \frac{1}{16\eta\mathcal{I}}\cdot\max_{t\leq\mathcal I}\nm{\bm y_{t_0+t}-\bm y_{t_0}}^2\geq\frac{\mathcal{R}^2}{16\eta\mathcal{I}}=\frac{\kappa_\mathcal{R}^2}{16\iota^4}\sqrt{\frac{\epsilon^3}{\rho}}\geq\frac{5}{4}\mathcal{F}
\end{align}
with probability at least $\frac{1}{3}-\mathcal I(6\mathcal I+2d+1)e^{-\iota}$, where we set $\kappa_\mathcal{F}\leq\frac{\kappa_\mathcal{R}^2}{20}$. Taking this back to \eqref{eq:desc-from-saddle-1} implies that $f(\bm y_{t_0+\mathcal I})-f(\bm y_{t_0})\leq-\mathcal{F}$ with probability at least $\frac{1}{3}-2\mathcal I(3\mathcal I+d+4)e^{-\iota}$.
\end{proof}

We arrive at the final stage to show the convergence to $\epsilon$-SOSPs.

\begin{proof}[Proof of Theorem \ref{thm:SOSP}]
All the iterates can be classified into three types, namely (i) iterates that are not $\epsilon$-FOSPs, (ii) $\epsilon$-strict saddle points, and (iii) $\epsilon$-SOSPs. By Theorem \ref{thm:FOSP}, we have showed that at most 1/4 of the iterates are not $\epsilon$-FOSPs. Therefore, it suffices to show that at most 1/4 of the iterates are $\epsilon$-strict saddle points.

Similar to Theorem 16 of \citet{jin2019nonconvex}, we define the following stopping times $\{z_1,...,z_M\}$ by
\begin{align}
    z_1&=\inf\{\tau:\nm{\nabla f(\bm x_\tau)}\leq\epsilon\text{ and } \lambda_{\min}(f(\bm x_\tau))\leq-\sqrt{\rho\epsilon}\},\nonumber\\
    z_k&=\inf\{\tau>z_{i-1}+\mathcal{I}:\nm{\nabla f(\bm x_\tau)}\leq\epsilon\text{ and } \lambda_{\min}(f(\bm x_\tau))\leq-\sqrt{\rho\epsilon}\},\nonumber
\end{align}
with $M=\max\{k:z_k+\mathcal{I}\leq T\}$. Then we have
\begin{align}
    f(\bm y_T)-f(\bm y_0)&=\underbrace{\sum_{k=1}^M[f(\bm y_{z_k+\mathcal{I}})-f(\bm y_{z_k})]}_{T_1}\nonumber\\
    &\eqbr+\underbrace{[f(\bm y_T)-f(\bm y_{z_M})]+[f(\bm y_{z_1})-f(\bm y_0)]+\sum_{k=1}^{M-1}[f(\bm y_{z_{k+1}})-f(\bm y_{z_k+\mathcal{I}})]}_{T_2}.\nonumber
\end{align}
According to Lemma \ref{lem:saddle-desc} and a supermartingale concentration inequality, with probability at least $1-2\mathcal{I}(3\mathcal{I}+d+4)T^2e^{-\iota}$,
\begin{equation*}
    T_1\leq-\left(\frac{3}{4}M-c\sqrt{M\iota}\right)\mathcal{F}.
\end{equation*}
Applying union bound over all $t_0,t$ to Lemma \ref{lem:imp-or-loc}, with probability at least $1-7T^3e^{-\iota}$,
\begin{equation*}
    T_2\leq \frac{c\kappa_\eta}{\iota^5}\epsilon^2(\eta T+M\iota).
\end{equation*}
Suppose that more than $T/4$ iterates are $\epsilon$-strict saddle points, then $M\geq\frac{T}{4\mathcal{I}}$. Now with probability at least $1-T^2(6\mathcal I^2+2d\mathcal I+8\mathcal I+7T)e^{-\iota}\leq1-8T^2(\mathcal I^2+d\mathcal I+\mathcal I+T)e^{-\iota}$,
\begin{align}
    f(\bm y_T)-f(\bm y_0)&\leq-\left(\frac{3}{4}M-c\sqrt{M\iota}\right)\mathcal{F}+\frac{c\kappa_\eta}{\iota^5}\epsilon^2(\eta T+M\iota)\nonumber\\
    &\leq-\left(\frac{3}{4}M-c\sqrt{M\iota}\right)\mathcal{F}+\frac{c\kappa_\eta}{\iota^5}\epsilon^2(4\eta\mathcal{I}+\iota)M\nonumber\\
    &\leq-\left(\frac{3}{4}M-c\sqrt{M\iota}\right)\mathcal{F}+\frac{5c\kappa_\eta}{\iota^4}\sqrt{\frac{\epsilon^3}{\rho}}M\leq-\frac{1}{2}M\mathcal{F}\label{eq:SOSP-2}
    \\&\leq-\frac{T\mathcal{F}}{8\mathcal{I}},\label{eq:SOSP-1}
\end{align}
where we set $M\geq 64c^2\iota$ and $\kappa_\eta\leq\frac{\kappa_\mathcal{F}}{40c}$ for \eqref{eq:SOSP-2}. Clearly, by setting $\kappa_T>\frac{8}{\kappa_\mathcal{F}}$, we have
\begin{align*}
    T>\frac{8\mathcal{I}f_{\max}}{\mathcal{F}}=\frac{8\iota^5f_{\max}}{\kappa_\mathcal{F}\eta\epsilon^2}.
\end{align*}
Then \eqref{eq:SOSP-1} further gives $f(\bm y_T)-f(\bm y_0)<-f_{\max}$, which is a contradiction. This proves that at most 1/4 of the iterates are $\epsilon$-strict saddle points, hence establishes the theorem.
\end{proof}

\end{document}